\def\eqref#1{equation~\ref{#1}}
\def\1{\bm{1}}
\DeclareMathAlphabet{\mathsfit}{\encodingdefault}{\sfdefault}{m}{sl}
\SetMathAlphabet{\mathsfit}{bold}{\encodingdefault}{\sfdefault}{bx}{n}
\DeclareMathOperator*{\argmax}{arg\,max}
\newtheorem{theorem}{Theorem}
\newtheorem{lemma}[theorem]{Lemma}
\newtheorem{definition}[theorem]{Definition}
\newtheorem*{definition*}{Definition}
\newcommand{\division}{\mkern-\medmuskip\rotatebox[origin=c]{45}{\scalebox{0.9}{$-$}}\mkern-\medmuskip}
\newcommand{\hsf}{SHF\xspace}
\newcommand{\hsfs}{SHFs\xspace}
\newcommand{\mf}{MF\xspace}
\newcommand{\mfs}{MFs\xspace}
\begin{document}

\runningauthor{Chen, Dullerud, Niedermayr, Kidd, Senanayake, Koh, Koyejo, Guestrin}

\twocolumn[

\aistatstitle{Modeling Multi-Objective Tradeoffs with Monotonic Utility Functions}

\aistatsauthor{%
  Edward Chen\textsuperscript{1} \And Natalie Dullerud\textsuperscript{1} \And Thomas Niedermayr\textsuperscript{1} \And Elizabeth Kidd\textsuperscript{1}
  \AND
  Ransalu Senanayake\textsuperscript{3} \And Pang Wei Koh\textsuperscript{2} \And Sanmi Koyejo\textsuperscript{1} \And Carlos Guestrin\textsuperscript{1}%
}

\aistatsaddress{\textsuperscript{1}Stanford University, \textsuperscript{2}University of Washington, \textsuperscript{3}Arizona State University}
]

\begin{abstract}

  Countless science and engineering applications in multi-objective optimization (MOO) necessitate that decision-makers (DMs) select a Pareto-optimal (PO) solution which aligns with their preferences. Evaluating individual solutions is often expensive, and the high-dimensional trade-off space makes exhaustive exploration of the full Pareto frontier (PF) infeasible. We introduce a novel, principled two-step process for obtaining a compact set of PO points that aligns with user preferences, which are specified \textit{a priori} as general \textit{monotonic utility functions} (MFs). Our process (1) densely samples the user's region of interest on the PF, then (2) sparsifies the results into a small, diverse set for the DM. We instantiate this framework with \textit{soft-hard functions} (SHFs), an intuitive class of \mfs that operationalizes the common expert heuristic of imposing soft and hard bounds. We provide extensive empirical validation of our framework instantiated with SHFs on diverse domains, including brachytherapy, engineering design, and large language models. For brachytherapy, our approach returns a compact set of points with over 3\% greater SHF-defined utility than the next best approach. Among the other domains, our approach consistently leads in utility, as a final compact set of just 5 points captures over 99\% of the utility offered by the entire dense set.
\end{abstract}

\section{INTRODUCTION}\label{sec:intro}

Optimizing across multiple competing objectives — multi-objective optimization (MOO) — is a fundamental challenge in critical real-world applications, from engineering design and drug discovery to deep learning and healthcare~\citep{fromer_computer-aided_2023, luukkonen_artificial_2023, xie_mars_2021, yu_multi-objective_2000, papadimitriou_multiobjective_2001}. In these scenarios, objectives often conflict, and evaluating potential solutions can be computationally expensive or require expert knowledge. The ultimate goal is typically not to identify the entire set of optimal trade-offs (the Pareto frontier, PF), but rather to efficiently present a \textit{compact and relevant set} of high-quality solutions from which a decision-maker (DM) can select one that best aligns with their latent preferences for the ideal balance among objectives. The complexity of the trade-off space and the cognitive burden of sifting through numerous options make exhaustive PF exploration impractical. DMs often possess valuable prior knowledge to guide the search, yet effectively incorporating these priors to identify a manageable and useful subset of the PF remains a significant hurdle, especially when solution evaluations are costly.

This challenge is particularly acute in high-stakes domains, where practitioners frame objectives in terms of utility rather than simple maximization. A decision-maker's satisfaction with an outcome is often a monotonic and bounded function of the objective value \citep{huber_bayesian_2025, bassett_st_1987, sobrie_uta-poly_2018}. This functional class flexibly models diverse expert priors, from diminishing returns to sharp satisfaction thresholds. In healthcare, for instance, improving a clinical metric from a dangerous to an acceptable level yields high utility, while further improvements may offer only marginal gains \citep{mandal_robust_2019}. We argue that a powerful way to incorporate such expert priors is to have DMs specify their preferences \textit{a priori} via a monotonic utility function (\mf) for each objective. This approach directly translates domain knowledge into the optimization process. As another example, in brachytherapy, clinicians must balance tumor coverage against healthy organ exposure \citep{deufel_pnav_2020}, a task where they naturally reason with \textit{desired targets} ("soft bounds") and \textit{strict limits} ("hard bounds") \citep{viswanathan_american_2012}. This dual-level thinking is a utility structure explicitly modeled by a specific class of \mfs we term \textit{soft-hard functions} (\hsfs).

\textit{Current MOO techniques, while advanced, often do not directly incorporate this intuitive and flexible class of \mfs to guide the search for a compact, decision-relevant set of Pareto-optimal (PO) points}. While it is well-understood that exploring the entire PF can be intractable and unnecessary \citep{paria_flexible_2019, zuluaga_e-pal_2016}, and many methods aim to find representative PF subsets or elicit preferences through mechanisms like pairwise comparisons or utility function modeling \citep{paria_flexible_2019, abdolshah_multi-objective_2019, suzuki_multi-objective_2020, zuluaga_e-pal_2016, while_fast_2012}, there remains a gap in methods that allow DMs to \textit{directly articulate their preferences using these explicit monotonic utilities for each objective}. This direct articulation can constrain the search space effectively and align the presented solutions more closely with the DM's operational thinking, especially for domain experts who already possess such priors. Our work aims to fill this gap. We assume the DM possesses underlying (unknown) trade-off preferences, which are implicitly constrained by their specified \mfs. Given that evaluating each PO point is costly, we employ a two-step process: (1) efficient dense sampling of the PF region consistent with current bounds using Bayesian optimization, and (2) sparsification of these points using robust submodular optimization \textit{to present a small, diverse, and high-utility set to the DM}. We demonstrate that this sparsification step maintains theoretical guarantees on near-optimality with respect to the DM's unknown preferences encoded by the \mfs.

Although a rich body of work exists on incorporating preferences into MOO \citep{malkomes_beyond_2021, abdolshah_multi-objective_2019, zuluaga_e-pal_2016}, many methods do not allow for the \textit{a priori} specification of flexible, nonlinear utility functions. Existing preference elicitation schemes often rely on iterative feedback, such as pairwise comparisons \citep{paria_flexible_2019, suzuki_multi-objective_2020}, which may be sample-inefficient or not fully leverage an expert's pre-existing, well-defined knowledge. In contrast, our framework uniquely focuses on \textit{directly using the DM's specified utility functions as the primary mechanism for guiding exploration and obtaining a decision-relevant set of PO solutions}. Our contributions are:

\noindent
\begin{enumerate}
    \item A two-step process for finding a compact set of PO points that aligns with a DM's priors, specified via general monotonic utility functions (\mfs). The process (1) uses multi-objective Bayesian optimization to densely sample the relevant PF region, then (2) employs robust submodular optimization to sparsify these points into a small, diverse set. This theoretically-grounded approach maintains guarantees on both search coverage and the near-optimality of the final set.
    \item To demonstrate our framework's practical utility, we introduce \textbf{\textit{soft-hard functions}} (SHFs), a concrete class of \mfs that models the common expert heuristic of imposing soft and hard bounds. We provide \textbf{extensive empirical validation} of our framework with \hsfs on diverse domains, including brachytherapy, engineering design, and large language model personalization. 
\end{enumerate}

\begin{figure*}[h]
\begin{center}
\includegraphics[width=1.0\textwidth]{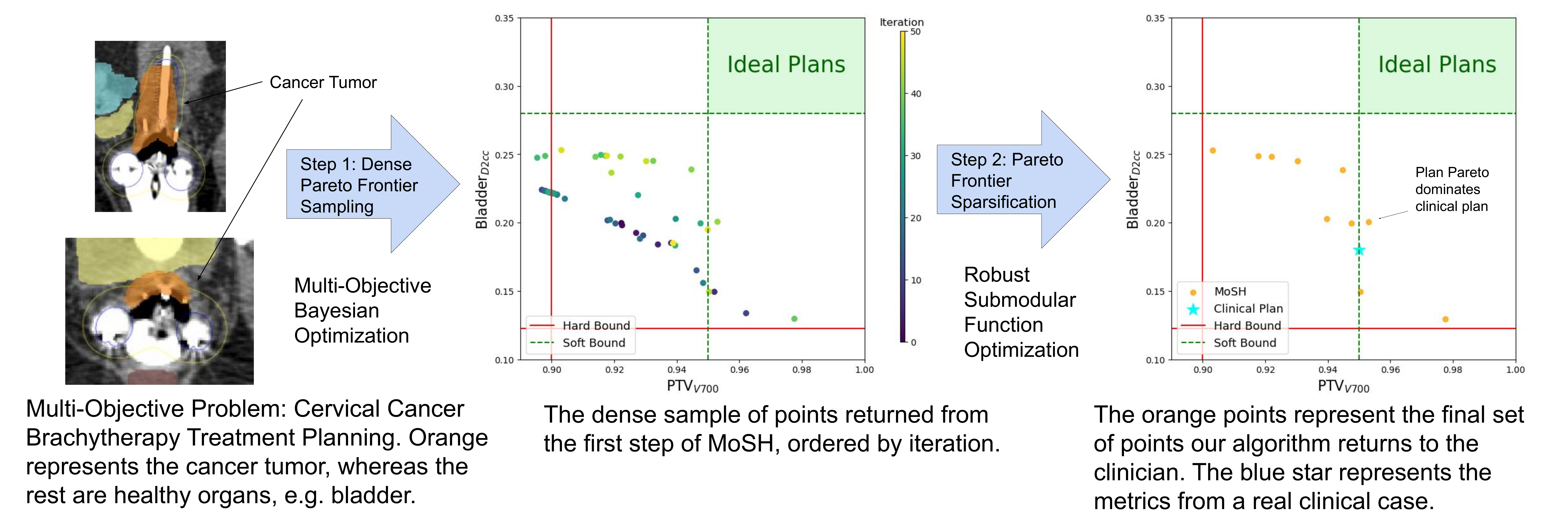}
\end{center}
\caption{
Our two-step pipeline, which we refer to as MoSH, demonstrated on a real cervical cancer brachytherapy clinical case. This figure instantiates our general framework using \textit{soft-hard functions} (\hsfs), a class of \mfs that captures the natural way clinicians reason with "soft" targets and "hard" limits (\S\ref{subsec:soft_hard_utility_functions}). The bounds shown reflect those used in real clinical settings. The MOO task involves balancing tumor radiation dosage against minimizing exposure to nearby healthy organs; two objectives (to maximize) are shown. Since the clinician's precise preferences are unknown, Step 1 densely samples the "Ideal Plans" region defined by the soft bounds. Step 2 then sparsifies this set into a small, diverse collection of high-utility options for the DM. Notably, our final set includes a plan that Pareto dominates the metrics from an actual clinical case, demonstrating the practical value of our approach.
}
\label{overall_pipeline}
\end{figure*}

\section{MULTI-OBJECTIVE OPTIMIZATION WITH MONOTONIC UTILITY FUNCTIONS}\label{background}
In this section, we outline the optimization setting we consider and introduce notation used throughout the paper in \S\ref{subsec:moo_background}. We then introduce our concept of modeling preferences via general monotonic utility functions in \S\ref{subsec:monotonic_utility_functions}. We thus arrive at the problem definition through our mini-max formulation in \S\ref{subsec:problem_definition_minimax}.

\subsection{Multi-Objective Optimization Background}
\label{subsec:moo_background}

As the name suggests, a multi-objective optimization (MOO) problem is an optimization problem that concerns multiple objective functions. Any MOO problem can be written as the joint maximization of $L$ objective functions over some input space $X\subseteq\mathbb{R}^d$,
\begin{equation*}
    \max_{x\in X}(f_1(x),\dots,f_L(x))
\end{equation*}
in which each $f_{\ell}$, $\ell\in[L]$, defines a function $f_{\ell}: X \rightarrow \mathbb{R}$.
Broadly speaking, there does not typically exist a \emph{feasible} solution that marginally optimizes each objective function simultaneously. Therefore, work in MOO generally focuses on Pareto-optimal (PO) solutions.
A feasible solution $x^{\dagger}$ is considered \emph{Pareto-optimal} if no objective can improve without degrading another; in other words, if $x^{\dagger}$ is not \emph{Pareto-dominated} by any other solution (defined in Appendix~\ref{app:definitions_theorems}).

A common approach to multi-objective optimization is to convert the $L$-dimensional objective to a scalar in order to utilize standard optimization methods via a scalarization function. Scalarization functions typically take the form $s_\lambda:\mathbb{R}^L\rightarrow \mathbb{R}$, parameterized by $\pmb{\lambda}$ from some set $\Lambda$ in $L$-dimensional space~\citep{roijers_survey_2013, paria_flexible_2019}. For instance, the general class of linear scalarization functions $s_{\Lambda}(\pmb{y}) := \{\pmb{\lambda}^{\texttt{T}}\pmb{y} \mid \pmb{\lambda}\in\Lambda\}$ constitutes all convex combinations of the objectives in $\Lambda$. The parameters $\pmb{\lambda} \in \Lambda$ can be viewed as weights, or relative preferences, on the  objective functions in the scalarized optimization objective $\max_{x\in X} s_{\pmb{\lambda}}\big([f_1(x), \dots, f_L(x)]\big)$. Then, the advantage of using scalarization functions is that the solution to maximizing $s_{\pmb{\lambda}}\big([f_1(x), \dots, f_{L}(x)]\big)$, for a fixed value of $\pmb{\lambda}$, may lead to a solution along the PF.

\subsection{Monotonic Utility Functions}\label{subsec:monotonic_utility_functions}

Many practical MOO problems, particularly in domains like engineering and healthcare, involve more nuanced desiderata than simply maximizing raw objective values \citep{cui_multi-criteria_2018, grosman_zone_2010, cairoli_model_2019}. For instance, improving an objective from a poor value to an acceptable one may yield high utility, while further improving it from a good value to an excellent one might offer only marginal gains \citep{deufel_pnav_2020}. Our framework aims to operationalize such nuanced priors by allowing a DM to specify their utilities for each objective \textit{a priori} through bounded, monotonic utility functions (\mfs).

An \mf, $u_{f_\ell}: \mathbb{R} \to \mathbb{R} \cup \{-\infty\}$, transforms the raw output of an objective function $f_\ell(x)$ into a numerical representation of its usefulness. We require two fundamental properties for our framework. First, the function must be \textbf{monotonic}. A utility function is monotonically increasing if for any two objective values $v_1$ and $v_2$, an improvement ($v_1 > v_2$) implies an equal or greater utility ($u_{f_\ell}(v_1) \geq u_{f_\ell}(v_2)$). This is a cornerstone of rational choice theory, ensuring that better objective performance is always preferred \citep{huber_bayesian_2025, sobrie_uta-poly_2018}. Second, the utility function must be \textbf{bounded} from above, which models the realistic notion that the utility of any single objective does not grow infinitely \citep{bassett_st_1987}. This boundedness prevents a single objective from completely dominating all others in the optimization process.

By enabling DMs to specify these bounded, monotonic functions directly, our framework empowers them to encode their domain-specific knowledge into the optimization process from the outset. As these are the only strict requirements, effective utility functions can also model other notable priors, such as strongly penalizing infeasible solutions and capturing diminishing returns through concavity. A concrete and intuitive instantiation of a function possessing these properties, the \textit{soft-hard function} (SHF), is detailed in \S\ref{subsec:soft_hard_utility_functions} and used in our empirical validations.

\subsection{Problem Definition}
\label{subsec:problem_definition_minimax}
Given a selected class of scalarization functions $s_{\pmb{\lambda}}$ parameterized by $\pmb{\lambda}\in\Lambda$ and a set of MFs as defined above, we wish to elucidate a set of useful points along the PF in the optimization problem
\begin{equation*}
    \max_{x\in X} s_{\pmb{\lambda}}(u_f(x))
\end{equation*}
where $u_f:= [u_{f_1}, \dots, u_{f_L}]$. Since we only want to select points from the PF, we follow \citet{roijers_survey_2013} and \citet{paria_flexible_2019} in assuming that $s_{\pmb{\lambda}}(u_f(x))$ is monotonically increasing in all coordinates $u_{f_{\ell}}(x)$, which leads to the solution to $\argmax_{x \in X} s_{\pmb{\lambda}}(u_f(x))$ residing on the PF, for some $\pmb{\lambda}$. We assume the DM has some hidden set of preferences, $\pmb{\lambda}^{*}$, for which the solution to the optimization problem $\max_{x\in X} s_{\pmb{\lambda}^{*}}(u_f(x))$ represents the ideal solution on the PF. Therefore, we wish to return a set of Pareto optimal points, $C$, which contains the unknown $c^* = \argmax_{x \in X} s_{\pmb{\lambda}^*}
(u_f(x))$, i.e. the ideal solution that aligns with the DM preferences. 
Since $\pmb{\lambda}^*$ is unknown to us, we want a set $C$ which is robust against any potential value of $\pmb{\lambda}^*$, or weight (preference) from the DM. Furthermore, $\lvert C\rvert \leq k$ for some small integer value $k$ to avoid overwhelming the DM with too many choices, which has been shown to decrease choice quality \citep{diehl_when_2005}. 

As a result, we formulate our general problem of selecting a set of points from the regions defined by the \mfs as:

\begin{equation}\label{submodular-formulation}
    \max_{C \subseteq X, |C| \leq k} \min_{\pmb{\lambda} \in \Lambda}\Bigg[\dfrac{\max_{x \in C} s_{\pmb{\lambda}}(u_f(x))}{\max_{x \in X} s_{\pmb{\lambda}} (u_f(x))}\Bigg]
\end{equation}

We refer to the right term, $\max_{x \in C} s_{\pmb{\lambda}}(u_f(x)) \division \max_{x \in X} s_{\pmb{\lambda}}(u_f(x))$, as the \mf utility ratio. Intuitively, the \mf utility ratio is maximized when the points in $C$ are Pareto optimal and span the high utility regions of the PF, as defined by the \mfs. We use an instantiation of the \mf utility ratio as an evaluation metric for some of our experiments in Section \ref{sec:experimental_results}. Given the unknown $\pmb{\lambda}^*$ and the expensive cost of evaluating points, we solve Equation~\ref{submodular-formulation} via a principled two-step process, allowing us to address the distinct challenges of continuous exploration and discrete set selection with methods that each carry strong theoretical guarantees: (1) We first densely sample the PF to obtain a high-quality discrete representation that provides robust coverage against uncertainty in $\pmb{\lambda}^*$. (2) Then, we sparsify this dense set to find a small, diverse, and near-optimal set of points for the decision-maker. This approach ensures our final solution is supported by guarantees on both the initial search and the final set, a property difficult to achieve with a single, monolithic method.

\section{STEP 1: DENSE PARETO FRONTIER SAMPLING WITH BAYESIAN OPTIMIZATION}\label{dense-sampling}
In this section, we consider the goal of obtaining a dense set of Pareto optimal points. As described earlier, since the DM's preferences, $\pmb{\lambda}^*$, are unknown to us, we wish to obtain a set $D$ which is diverse, high-coverage, and is modeled after the DM-defined \mfs. As is typical in various science and engineering applications, we assume access to some noisy and expensive black-box function -- often modeled with a Gaussian process (GP) \citep{williams_gaussian_1995} as the surrogate function. To achieve that goal, we extend our formulation (\ref{submodular-formulation}) into a Bayesian setting, assuming a prior $p(\pmb{\lambda})$ with support $\Lambda$ imposed on the set of Pareto optimal values and using the notion of random scalarizations \citep{paria_flexible_2019}. 
In this continuous and stochastic setting, we assume that each of the $\ell \in [L]$ objectives are sampled from known GP priors with a common domain, and produce noisy observations, e.g. $y_\ell = f_\ell(x) + \epsilon_\ell$, where $\epsilon \sim N(0, \sigma_\ell^2)$, $\forall \ell \in [L]$. 
We optimize over a set of scalarizations weighted by the prior $p(\pmb{\lambda})$.  

Our overall aim is still to obtain a set of points $C$ on the PF which are robust against the worst-case potential $\pmb{\lambda}^*$, within the user-defined \mf. For computational feasibility, however, we convert the worst-case into an average-case maximization (analyzed further in \S\ref{sec:baseline-methods}). Taking into consideration the aforementioned set of assumptions, we modify the formulation (\ref{submodular-formulation}) to be the following:

\begin{equation}\label{bayesian-formulation}
    \max_{D \subseteq X, |D| \leq k_D} \mathbb{E}_{\pmb{\lambda} \sim p(\pmb{\lambda})}\Bigg[\dfrac{\max_{x \in D} s_{\pmb{\lambda}}(u_f(x))}{\max_{x \in X} s_{\pmb{\lambda}}(u_f(x))}\Bigg]
\end{equation}

\textbf{Random Scalarizations.}
We now describe our sampling-based algorithm to optimize Equation \ref{bayesian-formulation}. Similar to \citet{paria_flexible_2019}, we use the notion of random scalarizations to sample a $\pmb{\lambda}$ from $p(\pmb{\lambda})$ at each iteration which is then used to compute a multi-objective acquisition function based on $s_{\pmb{\lambda}}$ and the \mf. The maximizer of the multi-objective acquisition function is then chosen as the next sample input value to be evaluated with the expensive black-box function. For the acquisition function, we use acq($u, \lambda_t, x$) $= s_{\lambda_t} (u_{\varphi(x)})$ where $\varphi(x) = \mu_t(x) + \sqrt{\beta_t} \sigma_t({x})$ and $\beta_t = \sqrt{0.125 \times \log(2\times t + 1)}$ (Appendix \ref{step1-additional-details}). For scalarization, we use the augmented Chebyshev scalarization, allowing for non-convex PF sampling \citep{wierzbicki_mathematical_1982}. We note, however, that \textit{our algorithm is agnostic to the scalarization and acquisition functions}. The full algorithm is described in Algorithm \ref{algorithm:mosh}. Finally, we provide formal guarantees proving the lower bound of the \mf utility ratio, which goes to one as $T \rightarrow \infty$, and thus providing the dense set $D$ (in Appendix \ref{app:definitions_theorems}).

\begin{algorithm}
\caption{Step 1: Dense Pareto Frontier Sampling}
\begin{algorithmic}[1]

    \State Initialize \mfs $\forall$ $\ell \in [L]$
    \State Initialize $D^{(0)} = \emptyset$, $GP_{\ell}^{(0)} = GP(0, \kappa)$ $\forall$ $\ell \in [L]$
    \For{t = 1 $\rightarrow$ T}
        \State Obtain $\lambda_t \sim p(\lambda)$
        \State $x_t = \argmax_{x \in X} \text{acq}(u, \lambda_t, x)$ 
        \State Obtain $y = f(x_t)$
        \State $D^{(t)} = D^{(t-1)} \cup \{(x_t, y)\}$
        \State $GP_\ell^{(t)} = \text{post}(GP_\ell^{t-1} | (x_t, y))$ $\forall$ $\ell \in [L]$
    \EndFor
    \State Return $D^{(T)}$

\end{algorithmic}
\label{algorithm:mosh}
\end{algorithm}

\section{STEP 2: PARETO FRONTIER SPARSIFICATION}\label{sec:step-2-method}
We now assume there already exists a dense set of points on the PF, sampled from step 1 with \mfs. As DM validation of such a dense set would be costly, the goal for step 2 is now to sparsify that set of points to then present to the DM a more navigable set which still maintains as much utility as the dense set. We do so by leveraging the notion of diminishing returns in utility for each additional point the DM validates. This notion is encapsulated by the property of submodularity, which further allows us to design optimization algorithms with strong theoretical guarantees. We use the definition of submodularity first developed in \citet{nemhauser_analysis_1978} (see Appendix~\ref{app:definitions_theorems} for definition).

As a result, we adapt Equation \ref{submodular-formulation} and formulate the sparsification problem as a robust submodular observation selection (RSOS) problem \citep{krause_robust_2008}: 

\begin{equation}\label{submodular-formulation2}
    \max_{C \subseteq D, |C| \leq k} \min_{\pmb{\lambda} \in \Lambda}\underbrace{\Bigg[\dfrac{\max_{x \in C} s_{\pmb{\lambda}}(u_f(x))}{\max_{x \in D} s_{\pmb{\lambda}}(u_f(x))}\Bigg]}_{F_{\pmb{\lambda}}}
\end{equation}

where $D$ represents the PO dense set obtained from step 1 (Section \ref{dense-sampling}, Algorithm \ref{algorithm:mosh}) and $C$ is the sparse set of \mf-defined PO points returned to the DM. Equation \ref{submodular-formulation2} is not submodular, but can be viewed as a maximin over submodular functions $F_{\pmb{\lambda}}$ (Lemma \ref{utility-ratio-submodular-lemma}), studied by \citet{krause_robust_2008}.

\begin{lemma}
\label{utility-ratio-submodular-lemma}
For some DM preference value $\pmb{\lambda}$, the set function, which takes as input some \mf-defined set $C$: $\max_{x \in C} s_{\pmb{\lambda}}(u_f(x)) \division \max_{x \in D} s_{\pmb{\lambda}}(u_f(x))$ is normalized ($F_{\pmb{\lambda}}(\emptyset)=0)$, monotonic (for all $A \subseteq C \subseteq D, F_{\pmb{\lambda}}(A) \leq F_{\pmb{\lambda}}(C)$), and submodular. (See proof in Appendix~\ref{app:definitions_theorems}).
\end{lemma}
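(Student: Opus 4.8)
The plan is to strip away the denominator first. Since $D$ is fixed, $M := \max_{x \in D} s_{\pmb{\lambda}}(u_f(x))$ is a positive constant independent of the argument $C$, so $F_{\pmb{\lambda}}(C) = h(C)/M$ where $h(C) := \max_{x \in C} s_{\pmb{\lambda}}(u_f(x))$. Dividing a set function by a positive constant preserves normalization, monotonicity, and submodularity, so it suffices to establish these three properties for the ``maximum-of-weights'' function $h$, writing $g(x) := s_{\pmb{\lambda}}(u_f(x))$ for the per-element weight. The only modeling point to pin down is that $g(x) \geq 0$ on the relevant domain: because the points under consideration lie above all hard bounds, each coordinate utility $u_{f_\ell}(x)$ is nonnegative by construction of the \hsf, and the monotone nonnegative-weight scalarization $s_{\pmb{\lambda}}$ maps a nonnegative input to a nonnegative output. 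Equivalently, one may define $h(C) := \max(\{0\}\cup\{g(x):x\in C\})$, treating the empty selection as yielding baseline utility $0$.

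Normalization is then immediate: with the convention $\max_{x\in\emptyset} g(x) = 0$ we get $h(\emptyset)=0$ and hence $F_{\pmb{\lambda}}(\emptyset)=0$. Monotonicity follows because enlarging the feasible set in a maximization can only increase the maximum: for $A \subseteq C$, every $x \in A$ is also in $C$, so $h(A) \leq h(C)$, and dividing by $M>0$ preserves the inequality.

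The substance is submodularity, which I would prove by the standard marginal-gain argument for max-type functions. Fix $A \subseteq B \subseteq D$ and $s \in D \setminus B$. The marginal gain of adding $s$ is $h(A\cup\{s\}) - h(A) = \max(h(A), g(s)) - h(A) = \max\big(0,\, g(s)-h(A)\big)$, and likewise $h(B\cup\{s\}) - h(B) = \max\big(0,\, g(s)-h(B)\big)$. Since $A \subseteq B$ gives $h(A) \leq h(B)$ by the monotonicity just shown, we have $g(s) - h(A) \geq g(s) - h(B)$, and the map $t \mapsto \max(0,t)$ is nondecreasing, so the marginal gain at $A$ dominates that at $B$. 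This is exactly the defining inequality of submodularity; dividing through by $M$ transfers it to $F_{\pmb{\lambda}}$.

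The only real obstacle is the boundary behavior at the hard bounds: the \hsf sends sub-hard-bound values to $-\infty$, so if $D$ contained infeasible points the weights $g(x)$ could be $-\infty$ and the clean ``max of nonnegative weights'' structure, together with normalization, would break. I would address this by restricting attention to the feasible region (points above every hard bound, where $g \geq 0$), which is exactly the regime produced by Step 1 and the only regime of interest for presenting points to the DM, or equivalently by adopting the augmented definition $h(C)=\max(\{0\}\cup\{g(x):x\in C\})$ so that infeasible points simply contribute no marginal gain. Everything else is routine.
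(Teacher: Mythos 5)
Your proof is correct, and it takes a genuinely different route from the paper's. The paper (Theorem~\ref{theorem:shf-submodular} in Appendix~\ref{app:definitions_theorems}) also treats the denominator as a fixed constant, but establishes submodularity by a case analysis on where the global maximizer $x^{*} := \argmax_{c\in\Omega} f(c,\pmb{\lambda})$ sits relative to the nested sets $X \subseteq Y$ and the new element $x$ --- four cases, with three sub-cases comparing $g(x)$ to the running maxima. You instead use the closed-form marginal gain $h(A\cup\{s\})-h(A)=\max\bigl(0,\,g(s)-h(A)\bigr)$, and submodularity drops out in one line from monotonicity of $h$ and of $t\mapsto\max(0,t)$; this is the standard argument for max-type coverage functions and collapses the paper's case analysis entirely. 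Your version also buys two things the paper's appendix proof leaves implicit: it explicitly verifies normalization and monotonicity (the appendix theorem proves only submodularity, while the lemma statement asserts all three, and $F(\emptyset)=0$ does require the empty-max convention $\max_{x\in\emptyset}=0$ that you state), and it flags the genuine boundary issue that the \hsf sends sub-hard-bound points to $-\infty$, where extended-real arithmetic (e.g.\ $\infty-\infty$ in the marginal-gain identity) would otherwise break; your fix --- restricting to the feasible region produced by Step 1, or equivalently augmenting $h(C)=\max(\{0\}\cup\{g(x):x\in C\})$ --- is exactly the right repair and is not discussed in the paper. The one point worth noting is that your nonnegativity claim $g(x)\geq 0$ is needed only for the normalization convention to be consistent, not for submodularity itself, which your marginal-gain argument delivers for any real-valued weights.
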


\textbf{Algorithm.}
If Equation \ref{submodular-formulation2} were submodular, the simple greedy algorithm would provide a near optimal solution \citep{nemhauser_analysis_1978} (theorem in Appendix \ref{appendix-step-2-details}). Since it is not, the greedy algorithm performs arbitrarily worse than the optimal solution when solving Equation \ref{submodular-formulation2}, or more generally, problems formulated as RSOS \citep{krause_robust_2008} - often defined as such:
\begin{equation}
    \max_{C \subseteq X, |C| \leq k} \min_{i} F_i(C)
\end{equation}\label{rsos}
where the goal is to find a set $C$ of observations which is robust against the worst possible objective, $\min_i F_i$, from a set of submodular objectives. As a result, we solve Equation \ref{submodular-formulation2} using the Submodular Saturation algorithm, or SATURATE, which provides us with strong theoretical guarantees on the optimality of set $C$ \citep{krause_robust_2008}.

\begin{theorem}\label{krause-theorem}~\citep{krause_robust_2008} For any integer k, SATURATE finds a solution $C_S$ such that $\min_i F_i(C_S) \geq \max_{|C| \leq k} \min_i F_i(C)$ and $|C_S| \leq \psi k$, for $\psi=1+\log(\max_{x \in D} \sum_i F_i(\{x\}))$. The total number of submodular function evaluations is $\mathcal{O}(|D|^2 m \log(m \min_i F_i(D)))$, where $m = |\Lambda|$.
\end{theorem}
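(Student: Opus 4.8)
The plan is to reconstruct the analysis of the SATURATE algorithm from \citet{krause_robust_2008}, whose central device is a truncation (``saturation'') trick that converts the non-submodular max-min objective into a sequence of submodular \emph{cover} problems. First I would fix a candidate value $c$ and define, for each objective index $i$, the truncated function $\hat{F}_{i,c}(C) = \min\{F_i(C), c\}$, then form the average $\bar{F}_c(C) = \frac{1}{m}\sum_{i} \hat{F}_{i,c}(C)$. The first step is to verify that $\bar{F}_c$ is monotone and submodular: truncating a monotone submodular function at a constant preserves both properties (the marginal gains only shrink once the cap is reached), and a nonnegative average of monotone submodular functions is again monotone submodular. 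The key structural fact is that the maximum attainable value of $\bar{F}_c$ is exactly $c$, and $\bar{F}_c(C) = c$ holds if and only if every objective satisfies $F_i(C) \geq c$ simultaneously, i.e., if and only if $\min_i F_i(C) \geq c$.

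Second, for each fixed $c$ I would solve the submodular cover problem of finding a minimum-cardinality $C \subseteq D$ with $\bar{F}_c(C) = c$ by running the greedy algorithm and invoking Wolsey's guarantee for submodular cover: after scaling the average by $m$, greedy returns a set no larger than a factor $\psi = 1 + \log(\max_{x \in D}\sum_{i} F_i(\{x\}))$ times the optimal cover size. The bridge to the budget $k$ is the observation that whenever $c \leq c^\star := \max_{|C| \leq k}\min_i F_i(C)$, the robust optimum of size $k$ itself achieves $\bar{F}_c(C) = c$; hence the optimal cover size for threshold $c$ is at most $k$, and greedy returns a set of size at most $\psi k$.

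Third, I would wrap a binary search around the threshold $c$. Since $c = c^\star$ admits a cover of size at most $\psi k$, the search terminates with a value $c \geq c^\star$ whose greedy cover $C_S$ satisfies $|C_S| \leq \psi k$; by the truncation characterization this cover also satisfies $\min_i F_i(C_S) \geq c \geq c^\star$, which is precisely the claimed bicriteria guarantee. The attainable values of $c$ lie in $[0, \min_i F_i(D)]$, so with an appropriately scaled discretization the binary search requires $O(\log(m \min_i F_i(D)))$ iterations; each greedy cover performs $O(|D|)$ rounds, each scanning $O(|D|)$ candidate elements across all $m$ objectives, which multiplies to the stated $\mathcal{O}(|D|^2 m \log(m \min_i F_i(D)))$ evaluation count.

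The main obstacle I anticipate is pinning down the precise form of the logarithmic factor $\psi = 1 + \log(\max_{x}\sum_i F_i(\{x\}))$: this requires the integrality (or rational-scaling) argument that collapses the harmonic-number bound $H(\max_x m\bar{F}_c(\{x\}))$ to the stated $\psi$, together with verifying that truncating at the \emph{unknown} optimum $c^\star$ still leaves the single-element values controlled by the untruncated sum $\sum_i F_i(\{x\})$. Since the statement is quoted verbatim from \citet{krause_robust_2008}, the cleanest route is to confirm that our $F_i$ from Lemma \ref{utility-ratio-submodular-lemma} meet the normalized, monotone, submodular hypotheses their theorem requires, and then invoke it directly.
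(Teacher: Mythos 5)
Your proposal is correct and, in its closing recommendation, takes exactly the paper's route: the paper does not re-prove this result but quotes it from \citet{krause_robust_2008}, with the only new work being the verification (Lemma~\ref{utility-ratio-submodular-lemma}, proved via Theorem~\ref{theorem:shf-submodular} in Appendix~\ref{app:definitions_theorems}) that each $F_{\pmb{\lambda}}$ is normalized, monotone, and submodular so the cited theorem applies. Your reconstruction of the SATURATE analysis --- truncation to $\min\{F_i(\cdot),c\}$, the averaged cover objective, Wolsey's greedy-cover bound yielding $\psi$, and the binary search over $c$ --- faithfully matches the original argument, including the integrality subtlety you flag, which the paper likewise leaves implicit (its Algorithm~\ref{algorithm:saturate} handles it only through the $1/|\Lambda|$ termination tolerance).
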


At a high level, SATURATE first defines a relaxed version of the original RSOS problem, which contains a superset of feasible solutions, and is guaranteed to find solutions to that relaxed version which are at least as informative as the optimal solution, only at a slightly higher cost. Within our context, this enables us to select a subset of points $C$, from the dense set obtained from Step 1, $D$, which achieves the optimal coverage of $\pmb{\lambda} \in \Lambda$ albeit with a slightly greater number of points. The complete algorithm is shown in Algorithm \ref{algorithm:saturate}.

\begin{algorithm}[H]
    \centering
    \caption{Step 2: Pareto Frontier Sparsification}
    \label{algorithm:saturate}
    \begin{algorithmic}[1]
    \Procedure{Sparsify}{$F_1,\dots,F_{|\Lambda|}, k, \psi$}
        \State $q_{min} \gets 0$; $q_{max} \gets \min_i F_i(D)$; $C_{best} \gets 0$
        
        \While{$(q_{max}-q_{min}) \geq 1 / |\Lambda|$}
            \State $q \gets (q_{min}+q_{max})/2$
            \State Define $\Bar{F}_q(C) = \frac{1}{|\Lambda|} \sum_i \min \{F_i(C), q\}$
            \State $\hat{C} \gets \text{GPC}(\Bar{F}_q, q)$ \Comment{Appendix: Algorithm \ref{algorithm:gpc}}
            
            \If{$|\hat{C}| > \psi k$}
                \State $q_{max} \gets q$
            \Else
                \State $q_{min} \gets q$; $C_{best} \gets \hat{C}$
            \EndIf %
        \EndWhile %
        
        \State \Return $C_{best}$
    \EndProcedure %
    \end{algorithmic}
\end{algorithm}

\section{EXPERIMENTAL RESULTS}
\label{sec:experimental_results}

To empirically validate our framework, we instantiate it with a class of \mfs designed for expert-driven, constrained domains. In this section, we introduce these functions and outline the details for our experiments.

\subsection{Concrete \mf: Soft-Hard Functions}
\label{subsec:soft_hard_utility_functions}
To instantiate our general framework for empirical validation, we introduce a concrete and intuitive class of \mfs called \textit{soft-hard functions} (SHFs). These functions are designed to operationalize the dual-level preferences common in expert-driven domains like brachytherapy and diabetes care, where practitioners reason with both strict requirements ("hard bounds") and desirable targets ("soft bounds") \citep{deufel_pnav_2020, grosman_zone_2010, cairoli_model_2019}. Traditional methods that only enforce hard constraints fail to capture this nuanced structure. SHFs explicitly model this prevalent DM heuristic by defining utility over these regions.

By construction, an SHF satisfies four key characteristics. First, objective values $f_{\ell}(x)$ below the hard bound $\alpha_{\ell,H}$ yield $-\infty$ utility, strongly penalizing infeasible solutions. Second, for $f_{\ell}(x) \geq \alpha_{\ell,S}$ (the soft bound), the utility function becomes concave, modeling diminishing returns once a satisfactory level is met. Third, beyond a saturation point $\alpha_{\ell,\tau}$ ($\geq \alpha_{\ell,S}$), utility becomes constant, preventing a single objective from dominating others. Finally, the function is monotonically increasing in $f_{\ell}(x)$, ensuring better objective values are preferred. 

We note that SHFs maintain generality in the landscape of optimization problems, as the function can be understood as an instantiation of fairly standard components of an optimization problem: feasibility constraints on objectives (hard constraints), and constraints with slack variables (soft constraints). While optimization problems frequently do not include constraints on \emph{every} objective function, our experiments and implementation consider the most constrained setting (soft and hard bounds on every objective). Nonetheless, flexibility is maintained in the general case for SHFs if the defined soft or hard bounds are effectively vacuous (i.e. no soft or hard constraint is enforced via the transformation) -- for instance, $\alpha_{\ell, H} = -\infty$,  $\alpha_{\ell, S} = \alpha_{\ell, \tau} = \infty$.

For simplicity and clear interpretability, we present a specific piecewise-linear functional form for \hsfs in Figure \ref{piecewise_util_func}. Additional exposition and the precise mathematical form are described in Appendix \ref{sec:appendix-shfs}. When our general two-step framework is instantiated with these \hsfs, we refer to the overall method as \textbf{MoSH}. The first step, dense PF sampling, is called \textbf{MoSH-Dense}, and the second step, PF sparsification, is called \textbf{MoSH-Sparse}. Accordingly, we refer to the specialized MF utility ratio as the \textbf{SHF utility ratio}.

\begin{figure}[h]
\begin{center}
\includegraphics[width=0.25\textwidth]{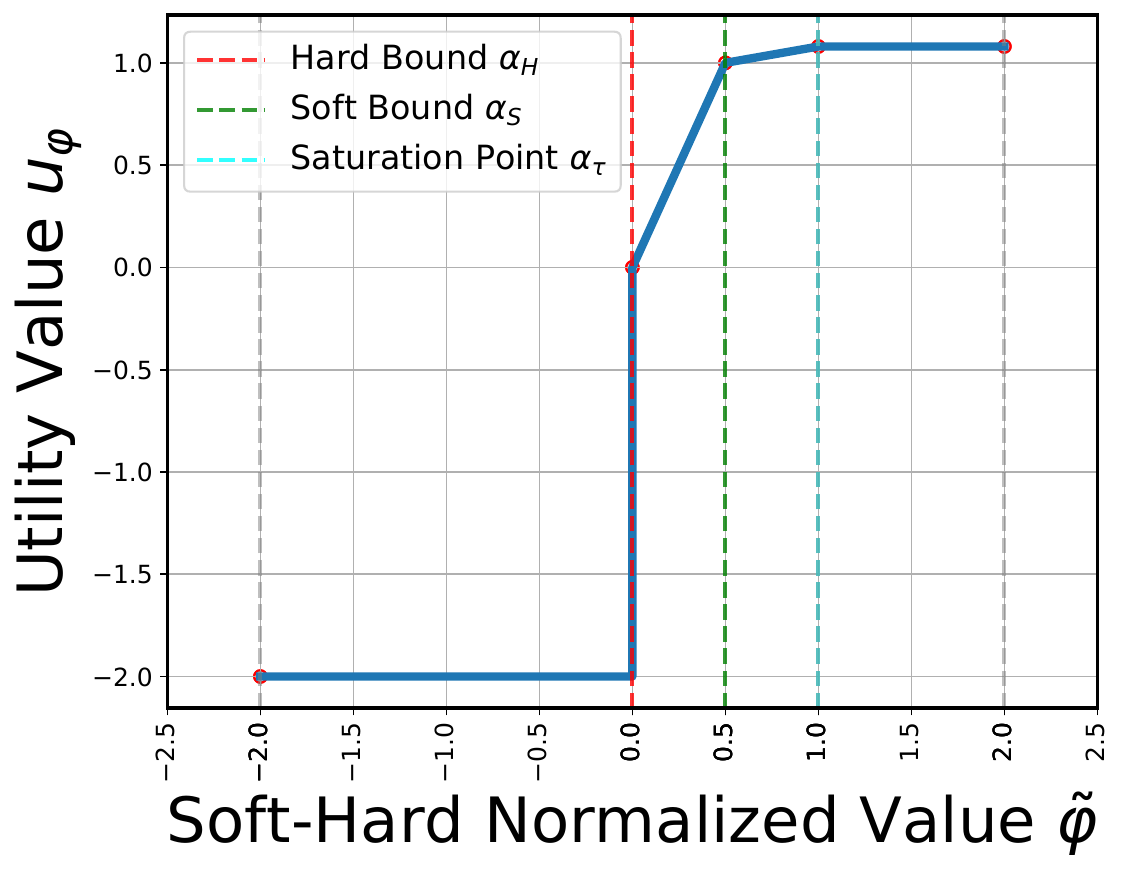}
\end{center}
\caption{
Example of a normalized soft-hard function (SHF) $u$. Dashed lines indicate hard, soft, and saturation point regions. Utility below the hard bound is $-\inf$ (shown as -2 for illustration).}
\label{piecewise_util_func}
\end{figure}

\subsection{Baseline Methodologies}\label{sec:baseline-methods}
\textbf{Step 1: Dense Pareto Frontier Sampling:}
We experiment with both synthetic and real-world applications and compare against other similar Bayesian multi-objective optimization approaches: Expected hypervolume improvement (EHVI) \citep{emmerich_computation_2008}, ParEGO \citep{knowles_parego_2006}, Multi-objective Bayesian optimization Using Random Scalarizations (MOBO-RS) \citep{paria_flexible_2019}, and random sampling. We compare against MOBO-RS with variations on the scalarization function, Chebyshev and linear, and acquisition function, UCB and Thompson sampling. Additional experimental results comparing against preference-based evolutionary multi-objective (PBEMO) algorithms may be found in Appendix \ref{additional-experimental-results}. \textit{We also use \hsfs to conduct empirical analyses of the worst-case to average-case conversion of Equation \ref{bayesian-formulation} in Appendix \ref{sec:computational-feasible-details}}.

\textbf{Step 2: Pareto Frontier Sparsification:}
We compare our method, MoSH-Sparse, against greedy and random algorithms. The greedy baseline starts with the empty set, and iteratively adds the element $c = \argmax_{x \in D \backslash C} H(C \cup \{x\})$, where $H = \min_{\pmb{\lambda} \in \Lambda} F_{\pmb{\lambda}}$ for the $F_{\pmb{\lambda}}$ in Equation \ref{submodular-formulation2}, until 
some stopping point. For all experiments, additional details and figures are provided in the Appendix.

\subsection{Performance Evaluation}\label{performance-criteria}

\subsubsection{Evaluation of Step 1: Dense Pareto Frontier Sampling}
As detailed in Section~\ref{dense-sampling}, our goal for the dense point set $D$ is to achieve diversity, high coverage, and alignment with the DM's \mfs. To conduct a concrete empirical analysis, we instantiate our general framework using SHFs from \S\ref{subsec:soft_hard_utility_functions}, allowing us to assess our method's ability to handle this practical and intuitive class of expert priors. We introduce four soft-hard metrics specifically designed to measure performance with respect to the key structural properties of SHFs: their distinct soft ($A_S$) and hard ($A_H$) regions. These metrics quantify how well an algorithm achieves diversity, high coverage, and adherence to these user-defined regions.

\textbf{Soft-Hard Fill Distance}: Measures sample diversity across $A_S$ and $A_H$. This adapts the fill distance by \citet{malkomes_beyond_2021}:
FILL($C$, $D$) = $\sup_{x' \in D} \min_{x \in C} \kappa(f(x), f(x'))$, where $C$ is the set of sampled points, $D$ is a comprehensive reference set of points in the target region, and $\kappa(\cdot)$ is a distance metric (typically Euclidean). Our metric is a $\upsilon$-weighted average of such fill distances calculated separately for soft and hard regions, encouraging exploration in both. Lower values indicate better, more uniform coverage.

\textbf{Soft-Hard Positive Samples Ratio}: 
Measures adherence to \hsf constraints via the $\upsilon$-weighted ratio of sampled points within soft ($C_S$) and hard ($C_H$) regions: $\upsilon (|C_S| \division |C|) + (1-\upsilon) (|C_H| \division |C|)$.

\textbf{Soft-Hard Hypervolume}: 
Assesses sample coverage by calculating the hypervolume occupied by points within the PF regions defined by the soft bounds (e.g., relative to $r_S = (\alpha_{1,S}, \dots, \alpha_{L,S})$) and, separately, the hard bounds. The final metric is a $\upsilon$-weighted average of these two hypervolumes.

\textbf{Soft Region Distance-Weighted Score}: We seek to explicitly measure faithfulness to the high-utility regions of \hsfs, the intersection of the soft bounds, by measuring the density of points. We calculate this using the following: $\sum_{x \in C} 1 \division \kappa(r_S, f(x))$, where $\kappa$ is a measure of distance, typically Euclidean distance and $r_S$ is defined above.

\subsubsection{Evaluation of Step 2: Pareto Frontier Sparsification}
To evaluate the set of points, $C$, returned to the DM, we simulate a $\pmb{\lambda}^*$ using the heuristic $\pmb{\lambda} = \textbf{u} \division \lVert \textbf{u} \rVert_1$, where $\textbf{u}_\ell \sim \text{N}(\alpha_{\ell,S}, |\alpha_{\ell,H}-\alpha_{\ell,S}| \division 3)$ \citep{paria_flexible_2019}. We then compute the \hsf utility ratio, right term in Equation \ref{submodular-formulation2}, for the set of points $C_t$ after each iteration of greedy, random, or MoSH-Dense. The denominator is calculated using $\pmb{\lambda}^*$ with a full set of points, $D$, computed offline.

\begin{figure*}[ht!]
\begin{center}
\includegraphics[width=0.85\textwidth]{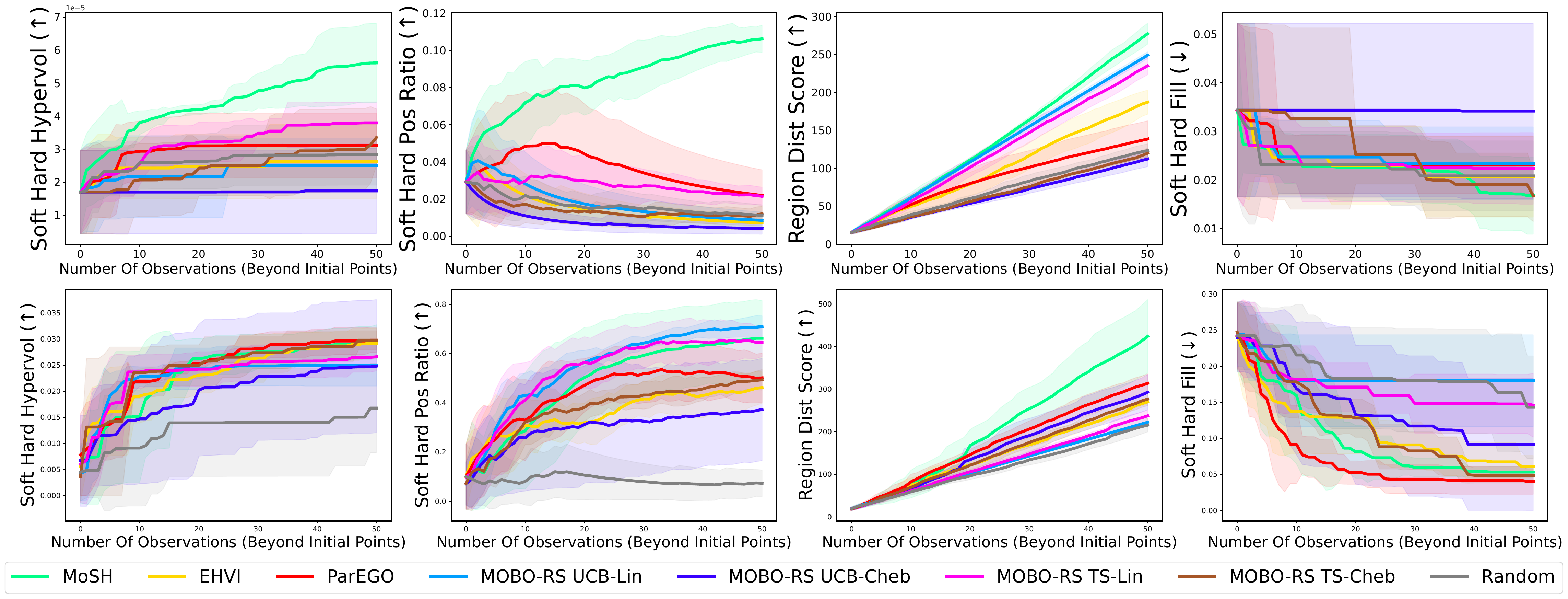}
\end{center}
\caption{Top row: Brachytherapy treatment planning. Bottom row: LLM personalization. Plots show the metrics defined in Section \ref{performance-criteria}, illustrating MoSH's consistent lead in providing a dense set of points which is (1) diverse, (2) high-coverage, and (3) modeled after the \hsfs. Mean $\pm$ std. were computed over 6 independent runs. 
}
\label{brachy_llm_soft_mega}
\end{figure*}

\subsection{Step 1 Experiments: Synthetic and Real-World Applications}
\label{subsec:step_one_exp}
We provide additional experimental details and results on a synthetic problem, Branin-Currin, an engineering design problem, Four Bar Truss, and deep learning model selection in Appendix \ref{additional-experimental-results}.

\subsubsection{Large Language Model (LLM) Personalization}

We aim to personalize an LLM to generate outputs that are both concise and informative — two competing objectives. We employ proxy tuning \citep{liu_tuning_2024, mitchell_emulator_2023, shi_decoding-time_2024}, steering a base model $M$ using expert ($M^+$) and anti-expert ($M^-$) models for each objective. Following \citet{liu_tuning_2024}, the proxy-tuned output distribution $p_{\tilde{M}}(X_t | x_{<t})$ is a function of the base model logits and weighted differences between expert/anti-expert logits for conciseness ($M^+_0, M^-_0$) and informativeness ($M^+_1, M^-_1$). The weights $\theta_0, \theta_1$ are our decision variables, controlling the trade-off. MoSH is used to find optimal $\theta_i$ values based on soft-hard bounds defined for conciseness and informativeness. As shown in Figure~\ref{brachy_llm_soft_mega}, MoSH performs consistently well across metrics, demonstrating its general applicability (more details in Appendix~\ref{appendix:llm-concise-info}).

\subsubsection{Real Clinical Case: Cervical Cancer Brachytherapy Treatment Planning}
We evaluate our method on treatment planning for a real cervical cancer brachytherapy clinical case. This problem consists of four objectives and three continuous decision variables. The objective are (1) maximize the radiation dosage level to the cancer tumor, and minimize the radiation dosage levels to the (2) bladder, (3) rectum, and (4) bowel. We converted objectives (2)-(4) into maximization objectives. The decision variables are used as inputs to a linear program formulated as an epsilon-constraint method \citep{deufel_pnav_2020}. Figure \ref{brachy_llm_soft_mega} illustrates the plots with the soft-hard performance metrics. We notice that our proposed method surpasses the baselines by a greater amount in this many-objective setting, notably the soft-hard hypervolume and soft-hard positive ratio metrics.

\subsection{Step 2 and End-to-End Experiments}
To evaluate step (2), we evaluate our baselines on the sparsification of the dense set of points from MoSH-Dense. Figure \ref{brachy_llm_step2_e2e_results} displays the \hsf utility ratio values for successive points that the DM views, across two applications. We observe that in all five experimental settings, MoSH-Sparse matches or exceeds the baselines in achieving the overall highest \hsf utility ratio. In the brachytherapy setting, MoSH-Sparse achieves the highest \hsf utility ratio at the fastest pace, showcasing the effectiveness of our approach in distilling the dense set of points into a useful smaller set.

We further holistically evaluate our entire two-step process by comparing against all baselines for step (1), using MoSH-Sparse (Figure \ref{brachy_llm_step2_e2e_results}). We show that our method consistently leads in providing DM utility in most experiments (more in Appendix \ref{sec:app-e2e-experiments}).

\begin{figure}[ht]
\begin{center}
\includegraphics[width=0.43\textwidth]{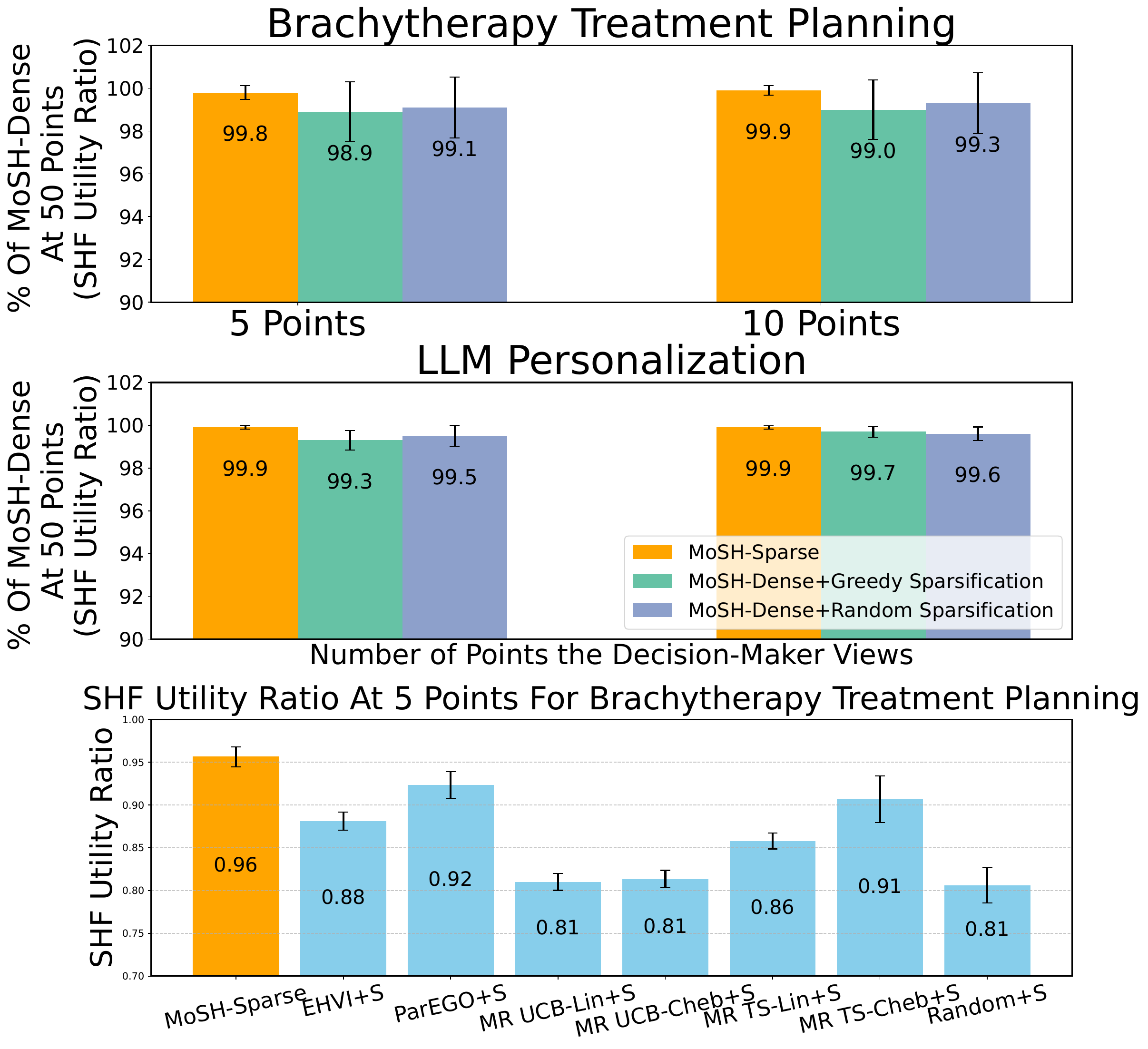}
\end{center}
\caption{Evaluation for step (2) of MoSH, illustrating the \hsf utility ratio (and std. dev) for the various methods relative to MoSH-Dense's at 50 points, compared when the DM views 5 and 10 points. (Bottom) End-to-end evaluation, using the \hsf utility ratio. Only the dense set, from step (1), changes for each bar. MR = MOBO-RS. S = SATURATE.
}
\label{brachy_llm_step2_e2e_results}
\end{figure}

\section{RELATED WORKS}
\textbf{Populating the Pareto Frontier.}
Most existing MOO works aim to approximate the entire PF, using heuristics such as the maximum hypervolume improvement \citep{campigotto_active_2014, ponweiser_multiobjective_2008, emmerich_computation_2008, picheny_multiobjective_2015, hernandez-lobato_predictive_2016, zhang_multiobjective_2009}. Others leverage RS of the objective values to attempt to recover the PF \citep{knowles_parego_2006,paria_flexible_2019, zhang_moead_2007, zhang_expensive_2010}. Additional works also place a greater emphasis on sparse, diversified PF coverage \citep{zuluaga_e-pal_2016, konakovic_lukovic_diversity-guided_2020} -- more recently, in hard-constrained regions \citep{malkomes_beyond_2021}. In contrast, we employ RS in a novel setting which aims to obtain a diverse, compact subset of the PF according to flexible user-defined priors (\mfs).

\textbf{MOO Feedback Mechanisms.}
Many feedback mechanisms, such as pairwise feedback, have been proposed -- but not all are designed for MOO \citep{zintgraf_ordered_2018, roijers_interactive_2018, astudillo_multi-attribute_2020, ip_user_2025}. \citet{trautmann_using_2017} enables the DM to specify per-objective numerical ranges for MOO. Some allow for DM to order objectives by importance \citep{abdolshah_multi-objective_2019}. \citet{ozaki_multi-objective_2023} introduced MOO improvement request feedback type. These works enable fine-grained MOO control, but our work uniquely accounts for multiple levels of preferences without needing to specify exact numerical values, which is often psychologically more difficult \citep{qian_learning_2015}.

\textbf{Utility Function Specification.}
Rather than embed the utilities directly into the optimization process, works such as \citet{huber_bayesian_2025} and \citet{tomczyk_decomposition-based_2020} propose to estimate monotonic utilities with pairwise preferences. Such assumptions of monotonic utilities are also often implicit, rather than explicit \citep{karasakal_interactive_2013, sobrie_uta-poly_2018, zintgraf_ordered_2018}, decreasing their flexibility. The formulation of objective utilities as inequality constraints, where the DMs aim to find inputs which satisfy thresholds on the objectives, is related to the topic of level set estimation \citep{gotovos_active_2013,zanette_robust_2018, malkomes_beyond_2021}. In contrast, our work is native to the MOO setting and may additionally incorporate a flexible set of constraints (e.g. \hsfs), which directly leverage the DM's priors. \textit{Several works related to MOBO may be found in Appendix \ref{app:additional-related-works}}.

\section{CONCLUSION}
We introduced a principled, two-step framework for MOO guided by general \textit{monotonic utility functions} (\mfs) in expensive black-box settings. Our approach returns a small, robust set of high-utility PO points by first performing dense PF sampling with Bayesian optimization, followed by a theoretically-grounded sparsification step using robust submodular optimization. As an instantiation of \mfs, we introduced \textit{soft-hard functions} (SHFs), an intuitive class of utility functions that models the common expert heuristic of imposing soft and hard bounds. We show that our framework has broad applicability to areas in brachytherapy, engineering design, and LLM personalization. For cervical cancer brachytherapy, our approach provides a compact set of treatment plans offering over 3\% greater utility than the next best method. Across all domains, our method consistently allows the DM to capture over 99\% of the utility from the dense set by validating just 5 points. We describe limitations in Appendix \ref{app:limitations}.

\bibliography{references, aistats_2026}

@inproceedings{balandat2020botorch,
  title = {{BoTorch: A Framework for Efficient Monte-Carlo Bayesian Optimization}},
  author = {Balandat, Maximilian and Karrer, Brian and Jiang, Daniel R. and Daulton, Samuel and Letham, Benjamin and Wilson, Andrew Gordon and Bakshy, Eytan},
  booktitle = {Advances in Neural Information Processing Systems 33},
  year = 2020,
  url = {http://arxiv.org/abs/1910.06403}
}

@article{van_der_meer_bi-objective_2020,
	title = {Bi-objective optimization of catheter positions for high-dose-rate prostate brachytherapy},
	volume = {47},
	issn = {2473-4209},
	url = {https://onlinelibrary.wiley.com/doi/abs/10.1002/mp.14505},
	doi = {10.1002/mp.14505},
	abstract = {Purpose Bi-objective simultaneous optimization of catheter positions and dwell times for high-dose-rate (HDR) prostate brachytherapy, based directly on dose-volume indices, has shown promising results. However, optimization with the state-of-the-art evolutionary algorithm MO-RV-GOMEA so far required several hours of runtime, and resulting catheter positions were not always clinically feasible. The aim of this study is to extend the optimization model and apply GPU parallelization to achieve clinically acceptable computation times. The resulting optimization procedure is compared with a previously introduced method based solely on geometric criteria, the adapted Centroidal Voronoi Tessellations (CVT) algorithm. Methods Bi-objective simultaneous optimization was performed with a GPU-parallelized version of MO-RV-GOMEA. This optimization of catheter positions and dwell times was retrospectively applied to the data of 26 patients previously treated with HDR prostate brachytherapy for 8–16 catheters (steps of 2). Optimization of catheter positions using CVT was performed in seconds, after which optimization of only the dwell times using MO-RV-GOMEA was performed in 1 min. Results Simultaneous optimization of catheter positions and dwell times using MO-RV-GOMEA was performed in 5 min. For 16 down to 8 catheters (steps of 2), MO-RV-GOMEA found plans satisfying the planning-aims for 20, 20, 18, 14, and 11 out of the 26 patients, respectively. CVT achieved this for 19, 17, 13, 9, and 2 patients, respectively. The P-value for the difference between MO-RV-GOMEA and CVT was 0.023 for 16 catheters, 0.005 for 14 catheters, and {\textless}0.001 for 12, 10, and 8 catheters. Conclusions With bi-objective simultaneous optimization on a GPU, high-quality catheter positions can now be obtained within 5 min, which is clinically acceptable, but slower than CVT. For 16 catheters, the difference between MO-RV-GOMEA and CVT is clinically irrelevant. For 14 catheters and less, MO-RV-GOMEA outperforms CVT in finding plans satisfying all planning-aims.},
	language = {en},
	number = {12},
	urldate = {2023-03-27},
	journal = {Medical Physics},
	author = {van der Meer, Marjolein C. and Bosman, Peter A.N. and Niatsetski, Yury and Alderliesten, Tanja and van Wieringen, Niek and Pieters, Bradley R. and Bel, Arjan},
	year = {2020},
	keywords = {HDR brachytherapy, bi-objective optimization, catheter positions, prostate neoplasms, treatment planning},
	pages = {6077--6086},
}

@article{bassett_st_1987,
	title = {The {St}. {Petersburg} {Paradox} and {Bounded} {Utility}},
	volume = {19},
	issn = {0018-2702, 1527-1919},
	doi = {10.1215/00182702-19-4-517},
	language = {en},
	number = {4},
	urldate = {2025-10-02},
	journal = {History of Political Economy},
	author = {Bassett, Gilbert W.},
	month = nov,
	year = {1987},
	pages = {517--523},
}

@article{russo_learning_2014,
	title = {Learning to {Optimize} via {Posterior} {Sampling}},
	volume = {39},
	issn = {0364-765X},
	url = {https://pubsonline.informs.org/doi/abs/10.1287/moor.2014.0650},
	doi = {10.1287/moor.2014.0650},
	abstract = {This paper considers the use of a simple posterior sampling algorithm to balance between exploration and exploitation when learning to optimize actions such as in multiarmed bandit problems. The algorithm, also known as Thompson Sampling and as probability matching, offers significant advantages over the popular upper confidence bound (UCB) approach, and can be applied to problems with finite or infinite action spaces and complicated relationships among action rewards. We make two theoretical contributions. The first establishes a connection between posterior sampling and UCB algorithms. This result lets us convert regret bounds developed for UCB algorithms into Bayesian regret bounds for posterior sampling. Our second theoretical contribution is a Bayesian regret bound for posterior sampling that applies broadly and can be specialized to many model classes. This bound depends on a new notion we refer to as the eluder dimension, which measures the degree of dependence among action rewards. Compared to UCB algorithm Bayesian regret bounds for specific model classes, our general bound matches the best available for linear models and is stronger than the best available for generalized linear models. Further, our analysis provides insight into performance advantages of posterior sampling, which are highlighted through simulation results that demonstrate performance surpassing recently proposed UCB algorithms.},
	number = {4},
	urldate = {2026-02-07},
	journal = {Mathematics of Operations Research},
	author = {Russo, Daniel and Van Roy, Benjamin},
	month = nov,
	year = {2014},
	note = {Publisher: INFORMS},
	keywords = {Thompson sampling, multiarmed bandits, online optimization},
	pages = {1221--1243},
}

@article{radulescu_multi-objective_2019,
	title = {Multi-objective multi-agent decision making: a utility-based analysis and survey},
	volume = {34},
	issn = {1573-7454},
	shorttitle = {Multi-objective multi-agent decision making},
	url = {https://doi.org/10.1007/s10458-019-09433-x},
	doi = {10.1007/s10458-019-09433-x},
	abstract = {The majority of multi-agent system implementations aim to optimise agents’ policies with respect to a single objective, despite the fact that many real-world problem domains are inherently multi-objective in nature. Multi-objective multi-agent systems (MOMAS) explicitly consider the possible trade-offs between conflicting objective functions. We argue that, in MOMAS, such compromises should be analysed on the basis of the utility that these compromises have for the users of a system. As is standard in multi-objective optimisation, we model the user utility using utility functions that map value or return vectors to scalar values. This approach naturally leads to two different optimisation criteria: expected scalarised returns (ESR) and scalarised expected returns (SER). We develop a new taxonomy which classifies multi-objective multi-agent decision making settings, on the basis of the reward structures, and which and how utility functions are applied. This allows us to offer a structured view of the field, to clearly delineate the current state-of-the-art in multi-objective multi-agent decision making approaches and to identify promising directions for future research. Starting from the execution phase, in which the selected policies are applied and the utility for the users is attained, we analyse which solution concepts apply to the different settings in our taxonomy. Furthermore, we define and discuss these solution concepts under both ESR and SER optimisation criteria. We conclude with a summary of our main findings and a discussion of many promising future research directions in multi-objective multi-agent systems.},
	language = {en},
	number = {1},
	urldate = {2026-02-07},
	journal = {Autonomous Agents and Multi-Agent Systems},
	author = {Rădulescu, Roxana and Mannion, Patrick and Roijers, Diederik M. and Nowé, Ann},
	month = dec,
	year = {2019},
	keywords = {Multi-agent systems, Multi-objective decision making, Multi-objective optimisation criteria, Reinforcement learning, Solution concepts},
	pages = {10},
}

@article{fazio_advanced_2021,
	title = {Advanced {Resources} {Reservation} in {Mobile} {Cellular} {Networks}: {Static} vs. {Dynamic} {Approaches} under {Vehicular} {Mobility} {Model}},
	volume = {2},
	copyright = {http://creativecommons.org/licenses/by/3.0/},
	issn = {2673-4001},
	shorttitle = {Advanced {Resources} {Reservation} in {Mobile} {Cellular} {Networks}},
	url = {https://www.mdpi.com/2673-4001/2/4/20},
	doi = {10.3390/telecom2040020},
	abstract = {Many studies in literature have shown that the bandwidth of an ongoing flow can dynamically change during multimedia sessions and an efficient bandwid...},
	language = {en},
	number = {4},
	urldate = {2026-02-07},
	journal = {Telecom},
	author = {Fazio, Peppino and Tropea, Mauro},
	month = sep,
	year = {2021},
	note = {Company: Multidisciplinary Digital Publishing Institute
Distributor: Multidisciplinary Digital Publishing Institute
Institution: Multidisciplinary Digital Publishing Institute
Label: Multidisciplinary Digital Publishing Institute
Publisher: publisher},
	keywords = {CAC, Markov model, bandwidth allocation, path prediction, utility function, wireless networks},
	pages = {302--327},
}

@inproceedings{samadi_optimal_2010,
	title = {Optimal {Real}-{Time} {Pricing} {Algorithm} {Based} on {Utility} {Maximization} for {Smart} {Grid}},
	url = {https://ieeexplore.ieee.org/abstract/document/5622077},
	doi = {10.1109/SMARTGRID.2010.5622077},
	abstract = {In this paper, we consider a smart power infrastructure, where several subscribers share a common energy source. Each subscriber is equipped with an energy consumption controller (ECC) unit as part of its smart meter. Each smart meter is connected to not only the power grid but also a communication infrastructure such as a local area network. This allows two-way communication among smart meters. Considering the importance of energy pricing as an essential tool to develop efficient demand side management strategies, we propose a novel real-time pricing algorithm for the future smart grid. We focus on the interactions between the smart meters and the energy provider through the exchange of control messages which contain subscribers' energy consumption and the real-time price information. First, we analytically model the subscribers' preferences and their energy consumption patterns in form of carefully selected utility functions based on concepts from microeconomics. Second, we propose a distributed algorithm which automatically manages the interactions among the ECC units at the smart meters and the energy provider. The algorithm finds the optimal energy consumption levels for each subscriber to maximize the aggregate utility of all subscribers in the system in a fair and efficient fashion. Finally, we show that the energy provider can encourage some desirable consumption patterns among the subscribers by means of the proposed real-time pricing interactions. Simulation results confirm that the proposed distributed algorithm can potentially benefit both subscribers and the energy provider.},
	urldate = {2026-02-07},
	booktitle = {2010 {First} {IEEE} {International} {Conference} on {Smart} {Grid} {Communications}},
	author = {Samadi, Pedram and Mohsenian-Rad, Amir-Hamed and Schober, Robert and Wong, Vincent W. S. and Jatskevich, Juri},
	month = oct,
	year = {2010},
	keywords = {Electricity, Energy consumption, Optimization, Power demand, Pricing, Real time systems, Simulation},
	pages = {415--420},
}

@article{zhang_refined_2022,
	title = {A refined consumer behavior model for energy systems: {Application} to the pricing and energy-efficiency problems},
	volume = {308},
	issn = {0306-2619},
	shorttitle = {A refined consumer behavior model for energy systems},
	url = {https://www.sciencedirect.com/science/article/pii/S0306261921015026},
	doi = {10.1016/j.apenergy.2021.118239},
	abstract = {Sum-utility maximization is an important problem in the optimization of energy systems. A conventional assumption in addressing this problem is that the utility to be maximized is concave. But for some key applications, such an assumption is not reasonable and does not reflect well the actual behavior of the consumer. To address this issue, the authors pose and address a more general optimization problem, namely by assuming the consumer’s utility to be sigmoidal and in a given class of functions. The considered class of functions is attractive for at least two reasons. First, the classical NP-hardness issue associated with sum-utility maximization is circumvented. Second, the considered class of functions encompasses well-known performance metrics used to analyze problems of pricing and energy-efficiency. This allows one to design a new and optimal inclining block rate (IBR) pricing policy which also has the virtue of flattening the power consumption and reducing the peak power. We also show how to maximize the energy-efficiency using a low-complexity algorithm. When compared to existing policies, simulations fully support the benefit of using the proposed approach.},
	urldate = {2026-02-07},
	journal = {Applied Energy},
	author = {Zhang, Chao and Lasaulce, Samson and Wang, Li and Saludjian, Lucas and Poor, H. Vincent},
	month = feb,
	year = {2022},
	keywords = {Demand response, Energy efficiency, Game theory, Inclining block rates, Prospect theory, Smart grid},
	pages = {118239},
}

@article{haghanifar_discovering_2020,
	title = {Discovering high-performance broadband and broad angle antireflection surfaces by machine learning},
	volume = {7},
	copyright = {© 2020 Optical Society of America},
	issn = {2334-2536},
	url = {https://opg.optica.org/optica/abstract.cfm?uri=optica-7-7-784},
	doi = {10.1364/OPTICA.387938},
	abstract = {Eliminating light reflection from the top glass sheet in optoelectronic applications is often desirable across a broad range of wavelengths and large variety of angles. In this paper, we report on a combined simulation and experimental study of single-layer films, nanowire arrays, and nanocone arrays to meet these antireflection (AR) needs. We demonstrate the application of Bayesian learning to the multiobjective optimization of these structures for broadband and broad angle AR and show the superior performance of Bayesian learning to genetic algorithms for optimization. Our simulations indicate that nanocone structures have the best AR performance of these three structures, and we additionally provide physical insight into the AR performance of different structures. Simulations suggest nanocone arrays are able to achieve a solar integrated normal and 65° incidence angle reflection of 0.15\% and 1.25\%, respectively. A simple and scalable maskless reactive ion etching process is used to create nanocone structures, and etched samples demonstrate a solar integrated normal and 65° reflection of 0.4\% and 4.9\%, respectively, at the front interface.},
	language = {EN},
	number = {7},
	urldate = {2026-02-07},
	journal = {Optica},
	author = {Haghanifar, Sajad and McCourt, Michael and Cheng, Bolong and Wuenschell, Jeffrey and Ohodnicki, Paul and Leu, Paul W.},
	month = jul,
	year = {2020},
	note = {Publisher: Optica Publishing Group},
	keywords = {Effective medium theory, Light emitting diodes, Machine learning, Refractive index, Scanning electron microscopy, Thin films},
	pages = {784--789},
}

@article{roijers_interactive_2018,
	title = {Interactive {Multi}-{Objective} {Reinforcement} {Learning} in {Multi}-{Armed} {Bandits} for {Any} {Utility} {Function}},
	abstract = {In interactive multi-objective reinforcement learning (MORL), an agent has to simultaneously learn about the environment and the preferences of the user, in order to quickly zoom in on those decisions that are likely to be preferred by the user. In this paper we study interactive MORL in the context of multi-objective multi-armed bandits. Contrary to earlier approaches to interactive MORL, we do not make stringent assumptions about the utility functions of the user, but allow for non-linear preferences. We propose a new approach called Gaussian-process Utility Thompson Sampling (GUTS), which employs non-parametric Bayesian learning to allow any type of utility function, exploits monotonicity information, and limits the number of queries posed to the user by ensuring that questions are statistically significant. We show empirically that, in contrast to earlier methods, GUTS can learn non-linear preferences, and that the regret and number of queries posed to the user are highly sub-linear in the number of arm pulls.},
	language = {en},
	author = {Roijers, Diederik M and Zintgraf, Luisa M and Libin, Pieter and Nowé, Ann},
	year = {2018},
}

@misc{zintgraf_ordered_2018,
	title = {Ordered {Preference} {Elicitation} {Strategies} for {Supporting} {Multi}-{Objective} {Decision} {Making}},
	url = {http://arxiv.org/abs/1802.07606},
	doi = {10.48550/arXiv.1802.07606},
	abstract = {In multi-objective decision planning and learning, much attention is paid to producing optimal solution sets that contain an optimal policy for every possible user preference profile. We argue that the step that follows, i.e, determining which policy to execute by maximising the user's intrinsic utility function over this (possibly infinite) set, is under-studied. This paper aims to fill this gap. We build on previous work on Gaussian processes and pairwise comparisons for preference modelling, extend it to the multi-objective decision support scenario, and propose new ordered preference elicitation strategies based on ranking and clustering. Our main contribution is an in-depth evaluation of these strategies using computer and human-based experiments. We show that our proposed elicitation strategies outperform the currently used pairwise methods, and found that users prefer ranking most. Our experiments further show that utilising monotonicity information in GPs by using a linear prior mean at the start and virtual comparisons to the nadir and ideal points, increases performance. We demonstrate our decision support framework in a real-world study on traffic regulation, conducted with the city of Amsterdam.},
	urldate = {2026-02-07},
	publisher = {arXiv},
	author = {Zintgraf, Luisa M. and Roijers, Diederik M. and Linders, Sjoerd and Jonker, Catholijn M. and Nowé, Ann},
	month = feb,
	year = {2018},
	note = {arXiv:1802.07606 [cs]},
	keywords = {Computer Science - Artificial Intelligence, Computer Science - Machine Learning, Statistics - Machine Learning},
}

@article{moren_optimization_2021,
	title = {Optimization in treatment planning of high dose-rate brachytherapy — {Review} and analysis of mathematical models},
	volume = {48},
	issn = {2473-4209},
	url = {https://onlinelibrary.wiley.com/doi/abs/10.1002/mp.14762},
	doi = {10.1002/mp.14762},
	abstract = {Treatment planning in high dose-rate brachytherapy has traditionally been conducted with manual forward planning, but inverse planning is today increasingly used in clinical practice. There is a large variety of proposed optimization models and algorithms to model and solve the treatment planning problem. Two major parts of inverse treatment planning for which mathematical optimization can be used are the decisions about catheter placement and dwell time distributions. Both these problems as well as integrated approaches are included in this review. The proposed models include linear penalty models, dose–volume models, mean-tail dose models, quadratic penalty models, radiobiological models, and multiobjective models. The aim of this survey is twofold: (i) to give a broad overview over mathematical optimization models used for treatment planning of brachytherapy and (ii) to provide mathematical analyses and comparisons between models. New technologies for brachytherapy treatments and methods for treatment planning are also discussed. Of particular interest for future research is a thorough comparison between optimization models and algorithms on the same dataset, and clinical validation of proposed optimization approaches with respect to patient outcome.},
	language = {en},
	number = {5},
	urldate = {2023-03-27},
	journal = {Medical Physics},
	author = {Morén, Björn and Larsson, Torbjörn and Tedgren, Åsa Carlsson},
	year = {2021},
	keywords = {catheter placement, dose planning, dwell time optimization, high dose-rate brachytherapy, intensity modulated brachytherapy, mathematical optimization, radiobiology},
	pages = {2057--2082},
}

@article{karasakal_interactive_2013,
	title = {An interactive partitioning approach for multiobjective decision making under a general monotone utility function},
	volume = {12},
	issn = {0219-6220},
	url = {https://www.worldscientific.com/doi/10.1142/S0219622013400051},
	doi = {10.1142/S0219622013400051},
	abstract = {We develop an interactive partitioning approach for solving the multiobjective decision making problem of a decision maker (DM) who has an implicit general monotone utility function. The approach reduces feasible solution space using the DM's preferences. Hypothetical solutions called partition ideals (PIs) that dominate portions of the efficient frontier are generated and those that are inferior to a feasible solution are used to eliminate the dominated regions. We investigate the issues in representation of the reduced feasible solution space. We develop procedures for locating PIs and measuring the size of feasible solution space. We incorporate these ideas into an approach that converges to a neighborhood of the most preferred solution of the DM. We demonstrate the approach on an example problem.},
	number = {05},
	urldate = {2025-10-02},
	journal = {International Journal of Information Technology \& Decision Making},
	author = {Karasakal, Esra and Köksalan, Murat},
	month = sep,
	year = {2013},
	note = {Publisher: World Scientific Publishing Co.},
	keywords = {Multiobjective decision making, interactive approach},
	pages = {969--997},
}

@article{tomczyk_decomposition-based_2020,
	title = {Decomposition-{Based} {Interactive} {Evolutionary} {Algorithm} for {Multiple} {Objective} {Optimization}},
	volume = {24},
	issn = {1941-0026},
	url = {https://ieeexplore.ieee.org/document/8710313},
	doi = {10.1109/TEVC.2019.2915767},
	abstract = {We propose a decomposition-based interactive evolutionary algorithm (EA) for multiple objective optimization. During an evolutionary search, a decision maker (DM) is asked to compare pairwise solutions from the current population. Using the Monte Carlo simulation, the proposed algorithm generates from a uniform distribution a set of instances of the preference model compatible with such an indirect preference information. These instances are incorporated as the search directions with the aim of systematically converging a population toward the DMs most preferred region of the Pareto front. The experimental comparison proves that the proposed decomposition-based method outperforms the state-of-the-art interactive counterparts of the dominance-based EAs. We also show that the quality of constructed solutions is highly affected by the form of the incorporated preference model.},
	number = {2},
	urldate = {2025-10-02},
	journal = {IEEE Transactions on Evolutionary Computation},
	author = {Tomczyk, Michał K. and Kadziński, Miłosz},
	month = apr,
	year = {2020},
	keywords = {Additives, Analytical models, Decomposition, Evolutionary computation, Monte Carlo (MC) simulation, Monte Carlo methods, Optimization, Sociology, indirect preference information, interactive evolutionary hybrid, multiple objective optimization (MOO)},
	pages = {320--334},
}

@article{sobrie_uta-poly_2018,
	title = {{UTA}-poly and {UTA}-splines: additive value functions with polynomial marginals},
	volume = {264},
	issn = {03772217},
	shorttitle = {{UTA}-poly and {UTA}-splines},
	url = {http://arxiv.org/abs/1603.02626},
	doi = {10.1016/j.ejor.2017.03.021},
	abstract = {Additive utility function models are widely used in multiple criteria decision analysis. In such models, a numerical value is associated to each alternative involved in the decision problem. It is computed by aggregating the scores of the alternative on the different criteria of the decision problem. The score of an alternative is determined by a marginal value function that evolves monotonically as a function of the performance of the alternative on this criterion. Determining the shape of the marginals is not easy for a decision maker. It is easier for him/her to make statements such as "alternative \$a\$ is preferred to \$b\$". In order to help the decision maker, UTA disaggregation procedures use linear programming to approximate the marginals by piecewise linear functions based only on such statements. In this paper, we propose to infer polynomials and splines instead of piecewise linear functions for the marginals. In this aim, we use semidefinite programming instead of linear programming. We illustrate this new elicitation method and present some experimental results.},
	number = {2},
	urldate = {2025-10-02},
	journal = {European Journal of Operational Research},
	author = {Sobrie, Olivier and Gillis, Nicolas and Mousseau, Vincent and Pirlot, Marc},
	month = jan,
	year = {2018},
	note = {arXiv:1603.02626 [math]},
	keywords = {Computer Science - Artificial Intelligence, Computer Science - Machine Learning, Mathematics - Optimization and Control},
	pages = {405--418},
}

@misc{huber_bayesian_2025,
	title = {Bayesian preference elicitation for decision support in multiobjective optimization},
	url = {http://arxiv.org/abs/2507.16999},
	doi = {10.48550/arXiv.2507.16999},
	abstract = {We present a novel approach to help decision-makers efficiently identify preferred solutions from the Pareto set of a multi-objective optimization problem. Our method uses a Bayesian model to estimate the decision-maker's utility function based on pairwise comparisons. Aided by this model, a principled elicitation strategy selects queries interactively to balance exploration and exploitation, guiding the discovery of high-utility solutions. The approach is flexible: it can be used interactively or a posteriori after estimating the Pareto front through standard multi-objective optimization techniques. Additionally, at the end of the elicitation phase, it generates a reduced menu of high-quality solutions, simplifying the decision-making process. Through experiments on test problems with up to nine objectives, our method demonstrates superior performance in finding high-utility solutions with a small number of queries. We also provide an open-source implementation of our method to support its adoption by the broader community.},
	urldate = {2025-10-02},
	publisher = {arXiv},
	author = {Huber, Felix and Gonzalez, Sebastian Rojas and Astudillo, Raul},
	month = jul,
	year = {2025},
	note = {arXiv:2507.16999 [stat]},
	keywords = {Computer Science - Artificial Intelligence, Computer Science - Machine Learning, Statistics - Machine Learning},
}

@article{shaier_multi-objective_2025,
	title = {Multi-objective optimization and algorithmic evaluation for {EMS} in a {HRES} integrating {PV}, wind, and backup storage},
	volume = {15},
	copyright = {2024 The Author(s)},
	issn = {2045-2322},
	url = {https://www.nature.com/articles/s41598-024-84227-0},
	doi = {10.1038/s41598-024-84227-0},
	abstract = {This manuscript focuses on optimizing a Hybrid Renewable Energy System (HRES) that integrates photovoltaic (PV) panels, wind turbines (WT), and various energy storage systems (ESS), including batteries, supercapacitors (SCs), and hydrogen storage. The system uses a multi-objective optimization strategy to balance power management, aiming to minimize costs and reduce the likelihood of loss of power supply probability (LPSP). Seven different algorithms are assessed to identify the most efficient one for achieving these objectives, with the goal of selecting the algorithm that best balances cost efficiency and system performance. The system is assessed across three operational scenarios: (1) when energy supply meets demand with help from backup systems, (2) when demand exceeds supply and energy storage systems are depleted, and (3) when energy generation surpasses demand and storage systems are full. The HBA-based optimization effectively manages energy flow and storage, ensuring grid stability and minimizing overcharging risks. This system offers a reliable and sustainable power supply for isolated microgrids, effectively managing energy production, storage, and distribution. The research sets a new benchmark for future studies in decentralized energy systems, particularly in balancing technical efficiency and economic feasibility.},
	language = {en},
	number = {1},
	urldate = {2025-10-01},
	journal = {Scientific Reports},
	author = {Shaier, Ahmed A. and Elymany, Mahmoud M. and Enany, Mohamed A. and Elsonbaty, Nadia A.},
	month = jan,
	year = {2025},
	note = {Publisher: Nature Publishing Group},
	keywords = {Electrical and electronic engineering, Solar energy, Wind energy},
	pages = {1147},
}

@article{muteba_mwamba_multi-objective_2025,
	title = {Multi-{Objective} {Portfolio} {Optimization}: {An} {Application} of the {Non}-{Dominated} {Sorting} {Genetic} {Algorithm} {III}},
	volume = {13},
	copyright = {http://creativecommons.org/licenses/by/3.0/},
	issn = {2227-7072},
	shorttitle = {Multi-{Objective} {Portfolio} {Optimization}},
	url = {https://www.mdpi.com/2227-7072/13/1/15},
	doi = {10.3390/ijfs13010015},
	abstract = {This study evaluates the effectiveness of the Non-dominated Sorting Genetic Algorithm III (NSGA-III) in comparison to the traditional Mean–Variance optimization method for financial portfolio management. Leveraging a dataset of global financial assets, we applied both approaches to optimize portfolios across multiple objectives, including risk, return, skewness, and kurtosis. The findings reveal that NSGA-III significantly outperforms the Mean–Variance method by generating a more diverse set of Pareto-optimal portfolios. Portfolios optimized with NSGA-III exhibited superior performance, achieving higher Sharpe ratios, more favorable skewness, and reduced kurtosis, indicating a better balance between risk and return. Moreover, NSGA-III’s capability to handle conflicting objectives underscores its utility in navigating complex financial environments and enhancing portfolio resilience. In contrast, while the Mean–Variance method effectively balances risk and return, it demonstrates limitations in addressing higher-order moments of the return distribution. These results emphasize the potential of NSGA-III as a robust and comprehensive tool for portfolio optimization in modern financial markets characterized by multifaceted objectives.},
	language = {en},
	number = {1},
	urldate = {2025-10-01},
	journal = {International Journal of Financial Studies},
	author = {Muteba Mwamba, John Weirstrass and Mbucici, Leon Mishindo and Mba, Jules Clement},
	month = mar,
	year = {2025},
	note = {Publisher: Multidisciplinary Digital Publishing Institute},
	keywords = {NSGA-III algorithm, higher-order moments, multi-objective optimization, portfolio management, risk–return trade-off},
	pages = {15},
}

@article{el_moutaouakil_multi-objective_2023,
	title = {Multi-{Objective} {Optimization} for {Controlling} the {Dynamics} of the {Diabetic} {Population}},
	volume = {11},
	copyright = {http://creativecommons.org/licenses/by/3.0/},
	issn = {2227-7390},
	url = {https://www.mdpi.com/2227-7390/11/13/2957},
	doi = {10.3390/math11132957},
	abstract = {To limit the adverse effects of diabetes, a personalized and long-term management strategy that includes appropriate medication, exercise and diet has become of paramount importance and necessity. Compartment-based mathematical control models for diabetes usually result in objective functions whose terms are conflicting, preventing the use of single-objective-based models for obtaining appropriate personalized strategies. Taking into account the conflicting aspects when controlling the diabetic population dynamics, this paper introduces a multi-objective approach consisting of four steps: (a) modeling the problem of controlling the diabetic population dynamics using a multi-objective mathematical model, (b) discretizing the model using the trapezoidal rule and the Euler–Cauchy method, (c) using swarm-intelligence-based optimizers to solve the model and (d) structuring the set of controls using soft clustering methods, known for their flexibility. In contrast to single-objective approaches, experimental results show that the multi-objective approach obtains appropriate personalized controls, where the control associated with the compartment of diabetics without complications is totally different from that associated with the compartment of diabetics with complications. Moreover, these controls enable a significant reduction in the number of diabetics with and without complications, and the multi-objective strategy saves up to 4\% of the resources needed for the control of diabetes without complications and up to 18\% of resources for the control of diabetes with complications.},
	language = {en},
	number = {13},
	urldate = {2025-10-01},
	journal = {Mathematics},
	author = {El Moutaouakil, Karim and El Ouissari, Abdellatif and Palade, Vasile and Charroud, Anas and Olaru, Adrian and Baïzri, Hicham and Chellak, Saliha and Cheggour, Mouna},
	month = jan,
	year = {2023},
	note = {Publisher: Multidisciplinary Digital Publishing Institute},
	keywords = {Fuzzy-CMeans (FCM), Gaussian mixture model (GMM), diabetes mellitus (DM), dynamic control of diabetic population (DCDP), fast Fourier transform (FFT), kernel convolution, multi-objective firefly algorithm (MOFA), non-dominated sorting genetic algorithm II (NSGA-II)},
	pages = {2957},
}

@article{soares_multi-objective_2024,
	title = {{MULTI}-{OBJECTIVE} {OPTIMIZATION} {APPLIED} {TO} {DIET} {PLANNING} {FOR} {PEOPLE} {WITH} {DIABETES}},
	volume = {44},
	issn = {0101-7438, 1678-5142},
	url = {https://www.scielo.br/j/pope/a/PSbGK4NMqmp96s6zdYrm4xF/?format=html&lang=en},
	doi = {https://doi.org/10.1590/0101-7438.2023.043.00285156},
	abstract = {Diabetes represents a significant global public health challenge. Effective diabetes management depends on adopting nutrient-controlled diets. This paper proposes a multi-objective mathematical optimization model for crafting appropriate diets for individuals with diabetes and ensuring adherence to nutritional requirements. Integrating integer variables, the model prioritizes minimizing carbohydrate and lipid consumption, recognizing the inherent trade-offs between these objectives. Balancing these objectives is critical considering the potential adverse health effects of excessive fat intake. In addition, the model facilitates the structure of the meal-based diet and enforces dietary diversity, promoting patient adherence while meeting vital nutritional criteria for diabetes control.Keywords:diabetes mellitus; diet planning; multi-objective optimization; integer programming; carbohydrates consumption; lipid consumption},
	language = {en},
	urldate = {2025-10-01},
	journal = {Pesquisa Operacional},
	author = {Soares, Thiago F. and Escarpinati, Maurício C. and Gabriel, Paulo H. R.},
	year = {2024},
	note = {Publisher: Sociedade Brasileira de Pesquisa Operacional},
	keywords = {carbohydrates consumption, diabetes mellitus, diet planning, integer programming, lipid consumption, multi-objective optimization},
	pages = {e285156},
}

@article{mandal_robust_2019,
	title = {Robust multi‐objective blood glucose control in {Type}‐1 diabetic patient},
	volume = {13},
	issn = {1751-8849},
	url = {https://pmc.ncbi.nlm.nih.gov/articles/PMC8687400/},
	doi = {10.1049/iet-syb.2018.5093},
	abstract = {In this study, an automatic robust multi‐objective controller has been proposed for blood glucose (BG) regulation in Type‐1 Diabetic Mellitus (T1DM) patient through subcutaneous route. The main objective of this work is to control the BG level in T1DM patient in the presence of unannounced meal disturbances and other external noises with a minimum amount of insulin infusion rate. The multi‐objective output‐feedback controller with H∞, H2 and pole‐placement constraints has been designed using linear matrix inequality technique. The designed controller for subcutaneous insulin delivery was tested on 
in silico
 adult and adolescent subjects of UVa/Padova T1DM metabolic simulator. The experimental results show that the closed‐loop system tracks the reference BG level very well and does not show any hypoglycaemia effect even during the long gap of a meal at night both for 
in silico
 adults and adolescent. In the presence of 50 gm meal disturbance, average adult experience normoglycaemia 92\% of the total simulation time and hypoglycaemia 0\% of total simulation time. The robustness of the controller has been tested in the presence of irregular meals and insulin pump noise and error. The controller yielded robust performance with a lesser amount of insulin infusion rate than the other designed controllers reported earlier.},
	number = {3},
	urldate = {2025-10-01},
	journal = {IET Systems Biology},
	author = {Mandal, Sharmistha and Sutradhar, Ashoke},
	month = jun,
	year = {2019},
	pmid = {31170693},
	pmcid = {PMC8687400},
	pages = {136--146},
}

@article{ji_multi-objective_2024,
	title = {Multi-objective {Bayesian} active learning for {MeV}-ultrafast electron diffraction},
	volume = {15},
	copyright = {2024 This is a U.S. Government work and not under copyright protection in the US; foreign copyright protection may apply},
	issn = {2041-1723},
	url = {https://www.nature.com/articles/s41467-024-48923-9},
	doi = {10.1038/s41467-024-48923-9},
	abstract = {Ultrafast electron diffraction using MeV energy beams(MeV-UED) has enabled unprecedented scientific opportunities in the study of ultrafast structural dynamics in a variety of gas, liquid and solid state systems. Broad scientific applications usually pose different requirements for electron probe properties. Due to the complex, nonlinear and correlated nature of accelerator systems, electron beam property optimization is a time-taking process and often relies on extensive hand-tuning by experienced human operators. Algorithm based efficient online tuning strategies are highly desired. Here, we demonstrate multi-objective Bayesian active learning for speeding up online beam tuning at the SLAC MeV-UED facility. The multi-objective Bayesian optimization algorithm was used for efficiently searching the parameter space and mapping out the Pareto Fronts which give the trade-offs between key beam properties. Such scheme enables an unprecedented overview of the global behavior of the experimental system and takes a significantly smaller number of measurements compared with traditional methods such as a grid scan. This methodology can be applied in other experimental scenarios that require simultaneously optimizing multiple objectives by explorations in high dimensional, nonlinear and correlated systems.},
	language = {en},
	number = {1},
	urldate = {2025-10-01},
	journal = {Nature Communications},
	author = {Ji, Fuhao and Edelen, Auralee and Roussel, Ryan and Shen, Xiaozhe and Miskovich, Sara and Weathersby, Stephen and Luo, Duan and Mo, Mianzhen and Kramer, Patrick and Mayes, Christopher and Othman, Mohamed A. K. and Nanni, Emilio and Wang, Xijie and Reid, Alexander and Minitti, Michael and England, Robert Joel},
	month = jun,
	year = {2024},
	note = {Publisher: Nature Publishing Group},
	keywords = {Characterization and analytical techniques, Imaging techniques, Statistics},
	pages = {4726},
}

@article{blank_pymoo_2020,
	title = {Pymoo: {Multi}-{Objective} {Optimization} in {Python}},
	volume = {8},
	issn = {2169-3536},
	shorttitle = {Pymoo},
	url = {https://ieeexplore.ieee.org/document/9078759/},
	doi = {10.1109/ACCESS.2020.2990567},
	abstract = {Python has become the programming language of choice for research and industry projects related to data science, machine learning, and deep learning. Since optimization is an inherent part of these research fields, more optimization related frameworks have arisen in the past few years. Only a few of them support optimization of multiple conflicting objectives at a time, but do not provide comprehensive tools for a complete multi-objective optimization task. To address this issue, we have developed pymoo, a multi-objective optimization framework in Python. We provide a guide to getting started with our framework by demonstrating the implementation of an exemplary constrained multi-objective optimization scenario. Moreover, we give a high-level overview of the architecture of pymoo to show its capabilities followed by an explanation of each module and its corresponding sub-modules. The implementations in our framework are customizable and algorithms can be modified/extended by supplying custom operators. Moreover, a variety of single, multi- and many-objective test problems are provided and gradients can be retrieved by automatic differentiation out of the box. Also, pymoo addresses practical needs, such as the parallelization of function evaluations, methods to visualize low and high-dimensional spaces, and tools for multi-criteria decision making. For more information about pymoo, readers are encouraged to visit: https://pymoo.org.},
	urldate = {2025-10-01},
	journal = {IEEE Access},
	author = {Blank, Julian and Deb, Kalyanmoy},
	year = {2020},
	keywords = {Customization, Data visualization, Evolutionary computation, Optimization, Python, Task analysis, Tools, genetic algorithm, multi-objective optimization, python},
	pages = {89497--89509},
}

@inproceedings{vesikar_reference_2018,
	address = {Bangalore, India},
	title = {Reference {Point} {Based} {NSGA}-{III} for {Preferred} {Solutions}},
	isbn = {978-1-5386-9276-9},
	url = {https://ieeexplore.ieee.org/document/8628819/},
	doi = {10.1109/SSCI.2018.8628819},
	abstract = {The recent advances in evolutionary many-objective optimization (EMOs) have allowed for efﬁcient ways of ﬁnding a number of diverse trade-off solutions in three to 15-objective problems. However, there are at least two reasons why the users are, in some occasions, interested in ﬁnding a part, instead of the entire Pareto-optimal front. First, after analyzing the obtained trade-off solutions by an EMO algorithm, the user may be interested in concentrating in a speciﬁc preferred region of the Pareto-optimal front, either to obtain additional solutions in the region of interest or to investigate the nature of solutions in the preferred region. Second, the user may already have a well-articulated preference among objectives and is straightaway interested in ﬁnding preferred solutions. In this paper, we suggest a reference point based evolutionary many-objective optimization procedure for achieving both of these purposes. Additionally, we suggest an extended version of a previously proposed referencepoint based evolutionary multi-objective optimization method. Our proposed procedures are capable of handling more than one reference point simultaneously. We demonstrate the working of our proposed procedures on a number of test and real-world problems. The results are encouraging and suggest the use of the concept to other evolutionary many-objective optimization algorithms for further study.},
	language = {en},
	urldate = {2025-10-01},
	booktitle = {2018 {IEEE} {Symposium} {Series} on {Computational} {Intelligence} ({SSCI})},
	publisher = {IEEE},
	author = {Vesikar, Yash and Deb, Kalyanmoy and Blank, Julian},
	month = nov,
	year = {2018},
	pages = {1587--1594},
}

@misc{daulton_multi-objective_2022,
	title = {Multi-{Objective} {Bayesian} {Optimization} over {High}-{Dimensional} {Search} {Spaces}},
	url = {http://arxiv.org/abs/2109.10964},
	doi = {10.48550/arXiv.2109.10964},
	abstract = {Many real world scientific and industrial applications require optimizing multiple competing black-box objectives. When the objectives are expensive-to-evaluate, multi-objective Bayesian optimization (BO) is a popular approach because of its high sample efficiency. However, even with recent methodological advances, most existing multi-objective BO methods perform poorly on search spaces with more than a few dozen parameters and rely on global surrogate models that scale cubically with the number of observations. In this work we propose MORBO, a scalable method for multi-objective BO over high-dimensional search spaces. MORBO identifies diverse globally optimal solutions by performing BO in multiple local regions of the design space in parallel using a coordinated strategy. We show that MORBO significantly advances the state-of-the-art in sample efficiency for several high-dimensional synthetic problems and real world applications, including an optical display design problem and a vehicle design problem with 146 and 222 parameters, respectively. On these problems, where existing BO algorithms fail to scale and perform well, MORBO provides practitioners with order-of-magnitude improvements in sample efficiency over the current approach.},
	urldate = {2025-09-30},
	publisher = {arXiv},
	author = {Daulton, Samuel and Eriksson, David and Balandat, Maximilian and Bakshy, Eytan},
	month = jun,
	year = {2022},
	note = {arXiv:2109.10964 [cs]},
	keywords = {Computer Science - Artificial Intelligence, Computer Science - Machine Learning, Mathematics - Optimization and Control, Statistics - Machine Learning},
}

@inproceedings{auer_pareto_2016,
	title = {Pareto {Front} {Identification} from {Stochastic} {Bandit} {Feedback}},
	url = {https://proceedings.mlr.press/v51/auer16.html},
	abstract = {We consider the problem of identifying the Pareto front for multiple objectives from a finite set of operating points. Sampling an operating point gives a random vector where each coordinate corresponds to the value of one of the objectives. The Pareto front is the set of operating points that are not dominated by any other operating point in respect to all objectives (considering the mean of their objective values). We propose a confidence bound algorithm to approximate the Pareto front, and prove problem specific lower and upper bounds, showing that the sample complexity is characterized by some natural geometric properties of the operating points. Experiments confirm the reliability of our algorithm. For the problem of finding a sparse cover of the Pareto front, we propose an asymmetric covering algorithm of independent interest.},
	language = {en},
	urldate = {2025-09-30},
	booktitle = {Proceedings of the 19th {International} {Conference} on {Artificial} {Intelligence} and {Statistics}},
	publisher = {PMLR},
	author = {Auer, Peter and Chiang, Chao-Kai and Ortner, Ronald and Drugan, Madalina},
	month = may,
	year = {2016},
	note = {ISSN: 1938-7228},
	pages = {939--947},
}

@misc{chen_ferero_2024,
	title = {{FERERO}: {A} {Flexible} {Framework} for {Preference}-{Guided} {Multi}-{Objective} {Learning}},
	shorttitle = {{FERERO}},
	url = {http://arxiv.org/abs/2412.01773},
	doi = {10.48550/arXiv.2412.01773},
	abstract = {Finding specific preference-guided Pareto solutions that represent different trade-offs among multiple objectives is critical yet challenging in multi-objective problems. Existing methods are restrictive in preference definitions and/or their theoretical guarantees. In this work, we introduce a Flexible framEwork for pREfeRence-guided multi-Objective learning (FERERO) by casting it as a constrained vector optimization problem. Specifically, two types of preferences are incorporated into this formulation -- the relative preference defined by the partial ordering induced by a polyhedral cone, and the absolute preference defined by constraints that are linear functions of the objectives. To solve this problem, convergent algorithms are developed with both single-loop and stochastic variants. Notably, this is the first single-loop primal algorithm for constrained vector optimization to our knowledge. The proposed algorithms adaptively adjust to both constraint and objective values, eliminating the need to solve different subproblems at different stages of constraint satisfaction. Experiments on multiple benchmarks demonstrate the proposed method is very competitive in finding preference-guided optimal solutions. Code is available at https://github.com/lisha-chen/FERERO/.},
	urldate = {2025-09-30},
	publisher = {arXiv},
	author = {Chen, Lisha and Saif, A. F. M. and Shen, Yanning and Chen, Tianyi},
	month = dec,
	year = {2024},
	note = {arXiv:2412.01773 [cs]},
	keywords = {Computer Science - Machine Learning, Mathematics - Optimization and Control},
}

@inproceedings{deb_reference_2006,
	address = {New York, NY, USA},
	series = {{GECCO} '06},
	title = {Reference point based multi-objective optimization using evolutionary algorithms},
	isbn = {978-1-59593-186-3},
	url = {https://dl.acm.org/doi/10.1145/1143997.1144112},
	doi = {10.1145/1143997.1144112},
	abstract = {Evolutionary multi-objective optimization (EMO) methodologies have been amply applied to find a representative set of Pareto-optimal solutions in the past decade and beyond. Although there are advantages of knowing the range of each objective for Pareto-optimality and the shape of the Pareto-optimal frontier itself in a problem for an adequate decision-making, the task of choosing a single preferred Pareto-optimal solution is also an important task which has received a lukewarm attention so far. In this paper, we combine one such preference based strategy with an EMO methodology and demonstrate how, instead of one solution, a preferred set solutions near the reference points can be found parallely. We propose a modified EMO procedure based on the elitist non-dominated sorting GAor NSGA-II. On two-objective to 10-objective optimization problems, the modified NSGA-II approach shows its efficacy in finding an adequate set of Pareto-optimal points. Such procedures will provide the decision-maker with a set of solutions near her/his preference so that a better and a more reliable decision can be made.},
	urldate = {2025-07-31},
	booktitle = {Proceedings of the 8th annual conference on {Genetic} and evolutionary computation},
	publisher = {Association for Computing Machinery},
	author = {Deb, Kalyanmoy and Sundar, J.},
	month = jul,
	year = {2006},
	pages = {635--642},
}

@inproceedings{malkomes_beyond_2021,
	title = {Beyond the {Pareto} {Efficient} {Frontier}: {Constraint} {Active} {Search} for {Multiobjective} {Experimental} {Design}},
	shorttitle = {Beyond the {Pareto} {Efficient} {Frontier}},
	url = {https://proceedings.mlr.press/v139/malkomes21a.html},
	abstract = {Many problems in engineering design and simulation require balancing competing objectives under the presence of uncertainty. Sample-efficient multiobjective optimization methods focus on the objective function values in metric space and ignore the sampling behavior of the design configurations in parameter space. Consequently, they may provide little actionable insight on how to choose designs in the presence of metric uncertainty or limited precision when implementing a chosen design. We propose a new formulation that accounts for the importance of the parameter space and is thus more suitable for multiobjective design problems; instead of searching for the Pareto-efficient frontier, we solicit the desired minimum performance thresholds on all objectives to define regions of satisfaction. We introduce an active search algorithm called Expected Coverage Improvement (ECI) to efficiently discover the region of satisfaction and simultaneously sample diverse acceptable configurations. We demonstrate our algorithm on several design and simulation domains: mechanical design, additive manufacturing, medical monitoring, and plasma physics.},
	language = {en},
	urldate = {2025-05-15},
	booktitle = {Proceedings of the 38th {International} {Conference} on {Machine} {Learning}},
	publisher = {PMLR},
	author = {Malkomes, Gustavo and Cheng, Bolong and Lee, Eric H. and Mccourt, Mike},
	month = jul,
	year = {2021},
	note = {ISSN: 2640-3498},
	pages = {7423--7434},
}

@article{deng_mnist_2012,
	title = {The {MNIST} {Database} of {Handwritten} {Digit} {Images} for {Machine} {Learning} {Research} [{Best} of the {Web}]},
	volume = {29},
	issn = {1558-0792},
	url = {https://ieeexplore.ieee.org/document/6296535},
	doi = {10.1109/MSP.2012.2211477},
	abstract = {In this issue, “Best of the Web” presents the modified National Institute of Standards and Technology (MNIST) resources, consisting of a collection of handwritten digit images used extensively in optical character recognition and machine learning research.},
	number = {6},
	urldate = {2025-05-14},
	journal = {IEEE Signal Processing Magazine},
	author = {Deng, Li},
	month = nov,
	year = {2012},
	keywords = {Machine learning},
	pages = {141--142},
}

@inproceedings{cairoli_model_2019,
	title = {Model {Predictive} {Control} of glucose concentration based on {Signal} {Temporal} {Logic} specifications},
	url = {https://ieeexplore.ieee.org/abstract/document/8820492},
	doi = {10.1109/CoDIT.2019.8820492},
	abstract = {Insulin is a peptide hormone produced by the pancreas to regulate the cells intake of glucose in the blood. Type 1 diabetes compromises this particular capacity of the pancreas. Patients with this disease inject insulin to regulate the level of glucose in the blood, thus reducing the risk of longterm complications. Artificial Pancreas (AP) is a wearable device developed to provide automatic delivery of insuline, allowing a potentially significant improvement in the quality of life of patients. In this paper we apply to the AP a Model Predictive Controller able to generate state trajectories that meet constraints expressed through Signal Temporal Logic (STL). Such a form of constraints is indeed appropriate for the AP, in which some requirements result in hard constraints (absolutely avoid hypoglycaemia) and some other in soft constraints (avoid a prolonged hyperglycaemia). We rely on the BluSTL toolbox, which allows to automatically generate controllers using STL specifications. We perform simulations on two different scenarios: an MPC controller that uses the same constraints as [1] and an MPC-STL controller in both deterministic and adversarial environment (robust control). We show that the soft constraints permitted by STL avoid unnecessary restriction, providing safe trajectories in correspondence of higher disturbance.},
	urldate = {2025-05-13},
	booktitle = {2019 6th {International} {Conference} on {Control}, {Decision} and {Information} {Technologies} ({CoDIT})},
	author = {Cairoli, Francesca and Fenu, Gianfranco and Pellegrino, Felice Andrea and Salvato, Erica},
	month = apr,
	year = {2019},
	note = {ISSN: 2576-3555},
	keywords = {Absorption, Biomedical monitoring, Cost function, Glucose, Insulin, Pancreas, Trajectory},
	pages = {714--719},
}

@article{grosman_zone_2010,
	title = {Zone {Model} {Predictive} {Control}: {A} {Strategy} to {Minimize} {Hyper}- and {Hypoglycemic} {Events}},
	volume = {4},
	issn = {1932-2968},
	shorttitle = {Zone {Model} {Predictive} {Control}},
	url = {https://www.ncbi.nlm.nih.gov/pmc/articles/PMC2909531/},
	doi = {10.1177/193229681000400428},
	abstract = {Background
Development of an artificial pancreas based on an automatic closed-loop algorithm that uses a subcutaneous insulin pump and continuous glucose sensor is a goal for biomedical engineering research. However, closing the loop for the artificial pancreas still presents many challenges, including model identification and design of a control algorithm that will keep the type 1 diabetes mellitus subject in normoglycemia for the longest duration and under maximal safety considerations.

Method
An artificial pancreatic β-cell based on zone model predictive control (zone-MPC) that is tuned automatically has been evaluated on the University of Virginia/University of Padova Food and Drug Administration-accepted metabolic simulator. Zone-MPC is applied when a fixed set point is not defined and the control variable objective can be expressed as a zone. Because euglycemia is usually defined as a range, zone-MPC is a natural control strategy for the artificial pancreatic β-cell., Clinical data usually include discrete information about insulin delivery and meals, which can be used to generate personalized models. It is argued that mapping clinical insulin administration and meal history through two different second-order transfer functions improves the identification accuracy of these models. Moreover, using mapped insulin as an additional state in zone-MPC enriches information about past control moves, thereby reducing the probability of overdosing. In this study, zone-MPC is tested in three different modes using unannounced and announced meals at their nominal value and with 40\% uncertainty. Ten adult in silico subjects were evaluated following a scenario of mixed meals with 75, 75, and 50 grams of carbohydrates (CHOs) consumed at 7 am, 1 pm, and 8 pm, respectively. Zone-MPC results are compared to those of the “optimal” open-loop preadjusted treatment.

Results
Zone-MPC succeeds in maintaining glycemic responses closer to euglycemia compared to the “optimal” open-loop treatment in te three different modes with and without meal announcement. In the face of meal uncertainty, announced zone-MPC presented only marginally improved results over unannounced zone-MPC. When considering user error in CHO estimation and the need to interact with the system, unannounced zone-MPC is an appealing alternative.

Conclusions
Zone-MPC reduces the variability of control moves over fixed set point control without the need to detune the controller. This strategy gives zone-MPC the ability to act quickly when needed and reduce unnecessary control moves in the euglycemic range.},
	number = {4},
	urldate = {2025-05-13},
	journal = {Journal of Diabetes Science and Technology},
	author = {Grosman, Benyamin and Dassau, Eyal and Zisser, Howard C. and Jovanoviĉ, Lois and Doyle, Francis J.},
	month = jul,
	year = {2010},
	pmid = {20663463},
	pmcid = {PMC2909531},
	pages = {961--975},
}

@misc{ip_user_2025,
	title = {User {Preference} {Meets} {Pareto}-{Optimality} in {Multi}-{Objective} {Bayesian} {Optimization}},
	url = {http://arxiv.org/abs/2502.06971},
	doi = {10.48550/arXiv.2502.06971},
	abstract = {Incorporating user preferences into multi-objective Bayesian optimization (MOBO) allows for personalization of the optimization procedure. Preferences are often abstracted in the form of an unknown utility function, estimated through pairwise comparisons of potential outcomes. However, utility-driven MOBO methods can yield solutions that are dominated by nearby solutions, as non-dominance is not enforced. Additionally, classical MOBO commonly relies on estimating the entire Pareto-front to identify the Pareto-optimal solutions, which can be expensive and ignore user preferences. Here, we present a new method, termed preference-utility-balanced MOBO (PUB-MOBO), that allows users to disambiguate between near-Pareto candidate solutions. PUB-MOBO combines utility-based MOBO with local multi-gradient descent to refine user-preferred solutions to be near-Pareto-optimal. To this end, we propose a novel preference-dominated utility function that concurrently preserves user-preferences and dominance amongst candidate solutions. A key advantage of PUB-MOBO is that the local search is restricted to a (small) region of the Pareto-front directed by user preferences, alleviating the need to estimate the entire Pareto-front. PUB-MOBO is tested on three synthetic benchmark problems: DTLZ1, DTLZ2 and DH1, as well as on three real-world problems: Vehicle Safety, Conceptual Marine Design, and Car Side Impact. PUB-MOBO consistently outperforms state-of-the-art competitors in terms of proximity to the Pareto-front and utility regret across all the problems.},
	urldate = {2025-05-12},
	publisher = {arXiv},
	author = {Ip, Joshua Hang Sai and Chakrabarty, Ankush and Mesbah, Ali and Romeres, Diego},
	month = mar,
	year = {2025},
	note = {arXiv:2502.06971 [cs]},
	keywords = {Computer Science - Machine Learning},
}

@inproceedings{konakovic_lukovic_diversity-guided_2020,
	title = {Diversity-{Guided} {Multi}-{Objective} {Bayesian} {Optimization} {With} {Batch} {Evaluations}},
	volume = {33},
	url = {https://papers.nips.cc/paper/2020/hash/cd3109c63bf4323e6b987a5923becb96-Abstract.html},
	abstract = {Many science, engineering, and design optimization problems require balancing the trade-offs between several conflicting objectives. The objectives are often black-box functions whose evaluations are time-consuming and costly. Multi-objective Bayesian optimization can be used to automate the process of discovering the set of optimal solutions, called Pareto-optimal, while minimizing the number of performed evaluations. To further reduce the evaluation time in the optimization process, testing of several samples in parallel can be deployed. We propose a novel multi-objective Bayesian optimization algorithm that iteratively selects the best batch of samples to be evaluated in parallel. Our algorithm approximates and analyzes a piecewise-continuous Pareto set representation. This representation allows us to introduce a batch selection strategy that optimizes for both hypervolume improvement and diversity of selected samples in order to efficiently advance promising regions of the Pareto front. Experiments on both synthetic test functions and real-world benchmark problems show that our algorithm predominantly outperforms relevant state-of-the-art methods. Code is available at https://github.com/yunshengtian/DGEMO.},
	urldate = {2025-05-12},
	booktitle = {Advances in {Neural} {Information} {Processing} {Systems}},
	publisher = {Curran Associates, Inc.},
	author = {Konakovic Lukovic, Mina and Tian, Yunsheng and Matusik, Wojciech},
	year = {2020},
	pages = {17708--17720},
}

@misc{ozaki_multi-objective_2023,
	title = {Multi-{Objective} {Bayesian} {Optimization} with {Active} {Preference} {Learning}},
	url = {http://arxiv.org/abs/2311.13460},
	doi = {10.48550/arXiv.2311.13460},
	abstract = {There are a lot of real-world black-box optimization problems that need to optimize multiple criteria simultaneously. However, in a multi-objective optimization (MOO) problem, identifying the whole Pareto front requires the prohibitive search cost, while in many practical scenarios, the decision maker (DM) only needs a specific solution among the set of the Pareto optimal solutions. We propose a Bayesian optimization (BO) approach to identifying the most preferred solution in the MOO with expensive objective functions, in which a Bayesian preference model of the DM is adaptively estimated by an interactive manner based on the two types of supervisions called the pairwise preference and improvement request. To explore the most preferred solution, we define an acquisition function in which the uncertainty both in the objective functions and the DM preference is incorporated. Further, to minimize the interaction cost with the DM, we also propose an active learning strategy for the preference estimation. We empirically demonstrate the effectiveness of our proposed method through the benchmark function optimization and the hyper-parameter optimization problems for machine learning models.},
	urldate = {2025-01-24},
	publisher = {arXiv},
	author = {Ozaki, Ryota and Ishikawa, Kazuki and Kanzaki, Youhei and Suzuki, Shinya and Takeno, Shion and Takeuchi, Ichiro and Karasuyama, Masayuki},
	month = nov,
	year = {2023},
	note = {arXiv:2311.13460 [cs]},
	keywords = {Computer Science - Machine Learning, Statistics - Machine Learning},
}

@article{danskin_theory_1966,
	title = {The {Theory} of  {Max}-{Min}, with {Applications}},
	volume = {14},
	issn = {0036-1399},
	url = {https://epubs.siam.org/doi/abs/10.1137/0114053},
	doi = {10.1137/0114053},
	abstract = {In this article the general necessary conditions for variational problems obtained in Part I are specialized to a number of particular problems. These problems include all of the important optimal control problems, both with and without restricted phase coordinates, problems arising from quasiconvex families of functions, and discrete optimal control problems of various types. The necessary conditions obtained include, as special cases, the Pontryagin maximum principle and its various extensions and generalizations, as well as the associated transversality conditions.},
	number = {4},
	urldate = {2024-11-28},
	journal = {SIAM Journal on Applied Mathematics},
	author = {Danskin, John M.},
	month = jul,
	year = {1966},
	note = {Publisher: Society for Industrial and Applied Mathematics},
	pages = {641--664},
}

@misc{canas_learning_2012,
	title = {Learning {Probability} {Measures} with respect to {Optimal} {Transport} {Metrics}},
	url = {http://arxiv.org/abs/1209.1077},
	doi = {10.48550/arXiv.1209.1077},
	abstract = {We study the problem of estimating, in the sense of optimal transport metrics, a measure which is assumed supported on a manifold embedded in a Hilbert space. By establishing a precise connection between optimal transport metrics, optimal quantization, and learning theory, we derive new probabilistic bounds for the performance of a classic algorithm in unsupervised learning (k-means), when used to produce a probability measure derived from the data. In the course of the analysis, we arrive at new lower bounds, as well as probabilistic upper bounds on the convergence rate of the empirical law of large numbers, which, unlike existing bounds, are applicable to a wide class of measures.},
	urldate = {2024-11-20},
	publisher = {arXiv},
	author = {Canas, Guillermo D. and Rosasco, Lorenzo},
	month = sep,
	year = {2012},
	note = {arXiv:1209.1077},
	keywords = {Computer Science - Machine Learning, Statistics - Machine Learning},
}

@article{zhang_multiobjective_2009,
	title = {Multiobjective optimization {Test} {Instances} for the {CEC} 2009 {Special} {Session} and {Competition}},
	language = {en},
	author = {Zhang, Qingfu and Zhou, Aimin and Zhao, Shizheng and Suganthan, Ponnuthurai Nagaratnam and Liu, Wudong and Tiwari, Santosh},
	year = {2009},
}

@inproceedings{papadimitriou_multiobjective_2001,
	address = {New York, NY, USA},
	series = {{PODS} '01},
	title = {Multiobjective query optimization},
	isbn = {978-1-58113-361-5},
	url = {https://dl.acm.org/doi/10.1145/375551.375560},
	doi = {10.1145/375551.375560},
	abstract = {The optimization of queries in distributed database systems is known to be subject to delicate trade-offs. For example, the Mariposa database system allows users to specify a desired delay-cost tradeoff (that is, to supply a decreasing function u(d), specifying how much the user is willing to pay in order to receive the query results within time d); Mariposa divides a query graph into horizontal “strides,” analyzes each stride, and uses a greedy heuristic to find the “best” plan for all strides. We show that Mariposa's greedy heuristic can be arbitrarily far from the desired optimum. Applying a recent approach in multiobjective optimization algorithms to this problem, we show that the optimum cost-delay trade-off (Pareto) curve in Mariposa's framework can be approximated fast within any desired accuracy. We also present a polynomial algorithm for the general multiobjective query optimization problem, which approximates arbirarily well the optimum cost-delay tradeoff (without the restriction of Mariposa's heuristic stride subdivision).},
	urldate = {2024-11-18},
	booktitle = {Proceedings of the twentieth {ACM} {SIGMOD}-{SIGACT}-{SIGART} symposium on {Principles} of database systems},
	publisher = {Association for Computing Machinery},
	author = {Papadimitriou, Christos H. and Yannakakis, Mihalis},
	month = may,
	year = {2001},
	pages = {52--59},
}

@misc{rafailov_direct_2024,
	title = {Direct {Preference} {Optimization}: {Your} {Language} {Model} is {Secretly} a {Reward} {Model}},
	shorttitle = {Direct {Preference} {Optimization}},
	url = {http://arxiv.org/abs/2305.18290},
	doi = {10.48550/arXiv.2305.18290},
	abstract = {While large-scale unsupervised language models (LMs) learn broad world knowledge and some reasoning skills, achieving precise control of their behavior is difficult due to the completely unsupervised nature of their training. Existing methods for gaining such steerability collect human labels of the relative quality of model generations and fine-tune the unsupervised LM to align with these preferences, often with reinforcement learning from human feedback (RLHF). However, RLHF is a complex and often unstable procedure, first fitting a reward model that reflects the human preferences, and then fine-tuning the large unsupervised LM using reinforcement learning to maximize this estimated reward without drifting too far from the original model. In this paper we introduce a new parameterization of the reward model in RLHF that enables extraction of the corresponding optimal policy in closed form, allowing us to solve the standard RLHF problem with only a simple classification loss. The resulting algorithm, which we call Direct Preference Optimization (DPO), is stable, performant, and computationally lightweight, eliminating the need for sampling from the LM during fine-tuning or performing significant hyperparameter tuning. Our experiments show that DPO can fine-tune LMs to align with human preferences as well as or better than existing methods. Notably, fine-tuning with DPO exceeds PPO-based RLHF in ability to control sentiment of generations, and matches or improves response quality in summarization and single-turn dialogue while being substantially simpler to implement and train.},
	urldate = {2024-10-19},
	publisher = {arXiv},
	author = {Rafailov, Rafael and Sharma, Archit and Mitchell, Eric and Ermon, Stefano and Manning, Christopher D. and Finn, Chelsea},
	month = jul,
	year = {2024},
	note = {arXiv:2305.18290 [cs]},
	keywords = {Computer Science - Artificial Intelligence, Computer Science - Computation and Language, Computer Science - Machine Learning},
}

@misc{ivison_camels_2023,
	title = {Camels in a {Changing} {Climate}: {Enhancing} {LM} {Adaptation} with {Tulu} 2},
	shorttitle = {Camels in a {Changing} {Climate}},
	url = {http://arxiv.org/abs/2311.10702},
	doi = {10.48550/arXiv.2311.10702},
	abstract = {Since the release of T{\textbackslash}"ULU [Wang et al., 2023b], open resources for instruction tuning have developed quickly, from better base models to new finetuning techniques. We test and incorporate a number of these advances into T{\textbackslash}"ULU, resulting in T{\textbackslash}"ULU 2, a suite of improved T{\textbackslash}"ULU models for advancing the understanding and best practices of adapting pretrained language models to downstream tasks and user preferences. Concretely, we release: (1) T{\textbackslash}"ULU-V2-mix, an improved collection of high-quality instruction datasets; (2) T{\textbackslash}"ULU 2, LLAMA-2 models finetuned on the V2 mixture; (3) T{\textbackslash}"ULU 2+DPO, T{\textbackslash}"ULU 2 models trained with direct preference optimization (DPO), including the largest DPO-trained model to date (T{\textbackslash}"ULU 2+DPO 70B); (4) CODE T{\textbackslash}"ULU 2, CODE LLAMA models finetuned on our V2 mix that outperform CODE LLAMA and its instruction-tuned variant, CODE LLAMA-Instruct. Our evaluation from multiple perspectives shows that the T{\textbackslash}"ULU 2 suite achieves state-of-the-art performance among open models and matches or exceeds the performance of GPT-3.5-turbo-0301 on several benchmarks. We release all the checkpoints, data, training and evaluation code to facilitate future open efforts on adapting large language models.},
	urldate = {2024-10-19},
	publisher = {arXiv},
	author = {Ivison, Hamish and Wang, Yizhong and Pyatkin, Valentina and Lambert, Nathan and Peters, Matthew and Dasigi, Pradeep and Jang, Joel and Wadden, David and Smith, Noah A. and Beltagy, Iz and Hajishirzi, Hannaneh},
	month = nov,
	year = {2023},
	note = {arXiv:2311.10702 [cs]},
	keywords = {Computer Science - Computation and Language},
}

@misc{shi_decoding-time_2024,
	title = {Decoding-{Time} {Language} {Model} {Alignment} with {Multiple} {Objectives}},
	url = {http://arxiv.org/abs/2406.18853},
	doi = {10.48550/arXiv.2406.18853},
	abstract = {Aligning language models (LMs) to human preferences has emerged as a critical pursuit, enabling these models to better serve diverse user needs. Existing methods primarily focus on optimizing LMs for a single reward function, limiting their adaptability to varied objectives. Here, we propose \${\textbackslash}textbf\{multi-objective decoding (MOD)\}\$, a decoding-time algorithm that outputs the next token from a linear combination of predictions of all base models, for any given weightings over different objectives. We exploit a common form among a family of \$f\$-divergence regularized alignment approaches (such as PPO, DPO, and their variants) to identify a closed-form solution by Legendre transform, and derive an efficient decoding strategy. Theoretically, we show why existing approaches can be sub-optimal even in natural settings and obtain optimality guarantees for our method. Empirical results demonstrate the effectiveness of the algorithm. For example, compared to a parameter-merging baseline, MOD achieves 12.8\% overall reward improvement when equally optimizing towards \$3\$ objectives. Moreover, we experiment with MOD on combining three fully-finetuned LLMs of different model sizes, each aimed at different objectives such as safety, coding, and general user preference. Unlike traditional methods that require careful curation of a mixture of datasets to achieve comprehensive improvement, we can quickly experiment with preference weightings using MOD to find the best combination of models. Our best combination reduces toxicity on Toxigen to nearly 0\% and achieves 7.9--33.3\% improvement across other three metrics (\${\textbackslash}textit\{i.e.\}\$, Codex@1, GSM-COT, BBH-COT).},
	urldate = {2024-10-19},
	publisher = {arXiv},
	author = {Shi, Ruizhe and Chen, Yifang and Hu, Yushi and Liu, Alisa and Hajishirzi, Hannaneh and Smith, Noah A. and Du, Simon},
	month = jun,
	year = {2024},
	note = {arXiv:2406.18853 [cs]},
	keywords = {Computer Science - Machine Learning},
}

@misc{mitchell_emulator_2023,
	title = {An {Emulator} for {Fine}-{Tuning} {Large} {Language} {Models} using {Small} {Language} {Models}},
	url = {http://arxiv.org/abs/2310.12962},
	doi = {10.48550/arXiv.2310.12962},
	abstract = {Widely used language models (LMs) are typically built by scaling up a two-stage training pipeline: a pre-training stage that uses a very large, diverse dataset of text and a fine-tuning (sometimes, 'alignment') stage that uses targeted examples or other specifications of desired behaviors. While it has been hypothesized that knowledge and skills come from pre-training, and fine-tuning mostly filters this knowledge and skillset, this intuition has not been extensively tested. To aid in doing so, we introduce a novel technique for decoupling the knowledge and skills gained in these two stages, enabling a direct answer to the question, "What would happen if we combined the knowledge learned by a large model during pre-training with the knowledge learned by a small model during fine-tuning (or vice versa)?" Using an RL-based framework derived from recent developments in learning from human preferences, we introduce emulated fine-tuning (EFT), a principled and practical method for sampling from a distribution that approximates (or 'emulates') the result of pre-training and fine-tuning at different scales. Our experiments with EFT show that scaling up fine-tuning tends to improve helpfulness, while scaling up pre-training tends to improve factuality. Beyond decoupling scale, we show that EFT enables test-time adjustment of competing behavioral traits like helpfulness and harmlessness without additional training. Finally, a special case of emulated fine-tuning, which we call LM up-scaling, avoids resource-intensive fine-tuning of large pre-trained models by ensembling them with small fine-tuned models, essentially emulating the result of fine-tuning the large pre-trained model. Up-scaling consistently improves helpfulness and factuality of instruction-following models in the Llama, Llama-2, and Falcon families, without additional hyperparameters or training.},
	urldate = {2024-10-19},
	publisher = {arXiv},
	author = {Mitchell, Eric and Rafailov, Rafael and Sharma, Archit and Finn, Chelsea and Manning, Christopher D.},
	month = oct,
	year = {2023},
	note = {arXiv:2310.12962 [cs]},
	keywords = {Computer Science - Artificial Intelligence, Computer Science - Computation and Language, Computer Science - Machine Learning},
}

@misc{liu_tuning_2024,
	title = {Tuning {Language} {Models} by {Proxy}},
	url = {http://arxiv.org/abs/2401.08565},
	doi = {10.48550/arXiv.2401.08565},
	abstract = {Despite the general capabilities of large pretrained language models, they consistently benefit from further adaptation to better achieve desired behaviors. However, tuning these models has become increasingly resource-intensive, or impossible when model weights are private. We introduce proxy-tuning, a lightweight decoding-time algorithm that operates on top of black-box LMs to achieve the same end as direct tuning, but by accessing only its predictions over the output vocabulary, not its parameters. Our method tunes a smaller LM, then applies the difference between the predictions of the small tuned and untuned LMs to shift the original predictions of the larger untuned model in the direction of tuning, while retaining the benefits of larger-scale pretraining. In experiments, when we apply proxy-tuning to Llama2-70B using proxies of only 7B size, we can close 88\% of the gap between Llama2-70B and its truly-tuned chat version, when evaluated across knowledge, reasoning, and safety benchmarks. We then demonstrate the generality of proxy-tuning by applying it to domain adaptation on code, and task-specific finetuning on question-answering and math problems. Finally, we show how to proxy-tune a truly black-box LM, GPT-3.5, for temporal adaptation, increasing its knowledge about recent events. Our work demonstrates the promise of using small tuned LMs to efficiently customize large, potentially proprietary LMs through decoding-time guidance.},
	urldate = {2024-10-19},
	publisher = {arXiv},
	author = {Liu, Alisa and Han, Xiaochuang and Wang, Yizhong and Tsvetkov, Yulia and Choi, Yejin and Smith, Noah A.},
	month = aug,
	year = {2024},
	note = {arXiv:2401.08565 [cs]},
	keywords = {Computer Science - Computation and Language},
}

@misc{jang_personalized_2023,
	title = {Personalized {Soups}: {Personalized} {Large} {Language} {Model} {Alignment} via {Post}-hoc {Parameter} {Merging}},
	shorttitle = {Personalized {Soups}},
	url = {http://arxiv.org/abs/2310.11564},
	abstract = {While Reinforcement Learning from Human Feedback (RLHF) aligns Large Language Models (LLMs) with general, aggregate human preferences, it is suboptimal for learning diverse, individual perspectives. In this work, we study Reinforcement Learning from Personalized Human Feedback (RLPHF) problem, wherein LLMs are aligned to multiple (sometimes conflicting) preferences by modeling alignment as a Multi-Objective Reinforcement Learning (MORL) problem. Compared to strong single-objective baselines, we show that we can achieve personalized alignment by decomposing preferences into multiple dimensions. These dimensions are defined based on personalizations that are declared as desirable by the user. In this work, we show that they can be efficiently trained independently in a distributed manner and combined effectively post-hoc through parameter merging. The code is available at https://github.com/joeljang/RLPHF.},
	language = {en},
	urldate = {2024-10-02},
	publisher = {arXiv},
	author = {Jang, Joel and Kim, Seungone and Lin, Bill Yuchen and Wang, Yizhong and Hessel, Jack and Zettlemoyer, Luke and Hajishirzi, Hannaneh and Choi, Yejin and Ammanabrolu, Prithviraj},
	month = oct,
	year = {2023},
	note = {arXiv:2310.11564 [cs]},
	keywords = {Computer Science - Computation and Language},
}

@article{krause_robust_2008,
	title = {Robust {Submodular} {Observation} {Selection}},
	volume = {9},
	issn = {1533-7928},
	url = {http://jmlr.org/papers/v9/krause08b.html},
	abstract = {In many applications, one has to actively select among a set of
expensive observations before making an informed decision. For
example, in environmental monitoring, we want to select locations to
measure in order to most effectively predict spatial phenomena.
Often, we want to select observations which are robust against a
number of possible objective functions. Examples include minimizing
the maximum posterior variance in Gaussian Process regression,
robust experimental design, and sensor placement for outbreak
detection. In this paper, we present the Submodular
Saturation algorithm, a simple and efficient algorithm with strong
theoretical approximation guarantees for cases where the possible
objective functions exhibit submodularity, an intuitive
diminishing returns property. Moreover, we prove that better
approximation algorithms do not exist unless NP-complete
problems admit efficient algorithms. We show how our algorithm can
be extended to handle complex cost functions (incorporating non-unit
observation cost or communication and path costs). We also show how
the algorithm can be used to near-optimally trade off expected-case
(e.g., the Mean Square Prediction Error in Gaussian Process
regression) and worst-case (e.g., maximum predictive variance)
performance. We show that many important machine learning problems
fit our robust submodular observation selection formalism, and
provide extensive empirical evaluation on several real-world
problems. For Gaussian Process regression, our algorithm compares
favorably with state-of-the-art heuristics described in the
geostatistics literature, while being simpler, faster and providing
theoretical guarantees. For robust experimental design, our
algorithm performs favorably compared to SDP-based algorithms.},
	number = {93},
	urldate = {2024-09-30},
	journal = {Journal of Machine Learning Research},
	author = {Krause, Andreas and McMahan, H. Brendan and Guestrin, Carlos and Gupta, Anupam},
	year = {2008},
	pages = {2761--2801},
}

@article{qian_learning_2015,
	title = {Learning user preferences by adaptive pairwise comparison},
	volume = {8},
	issn = {2150-8097},
	url = {https://dl.acm.org/doi/10.14778/2809974.2809992},
	doi = {10.14778/2809974.2809992},
	abstract = {Users make choices among multi-attribute objects in a data set in a variety of domains including used car purchase, job search and hotel room booking. Individual users sometimes have strong preferences between objects, but these preferences may not be universally shared by all users. If we can cast these preferences as derived from a quantitative user-specific preference function, then we can predict user preferences by learning their preference function, even though the preference function itself is not directly observable, and may be hard to express.In this paper we study the problem of preference learning with pairwise comparisons on a set of entities with multiple attributes. We formalize the problem into two subproblems, namely preference estimation and comparison selection. We propose an innovative approach to estimate the preference, and introduce a binary search strategy to adaptively select the comparisons. We introduce the concept of an orthogonal query to support this adaptive selection, as well as a novel S-tree index to enable efficient evaluation of orthogonal queries.We integrate these components into a system for inferring user preference with adaptive pairwise comparisons. Our experiments and user study demonstrate that our adaptive system significantly outperforms the naïve random selection system on both real data and synthetic data, with either simulated or real user feedback. We also show our preference learning approach is much more effective than existing approaches, and our S-tree can be constructed efficiently and perform orthogonal query at interactive speeds.},
	number = {11},
	urldate = {2024-09-29},
	journal = {Proc. VLDB Endow.},
	author = {Qian, Li and Gao, Jinyang and Jagadish, H. V.},
	month = jul,
	year = {2015},
	pages = {1322--1333},
}

@article{viswanathan_american_2012,
	series = {Special {Issue}: {American} {Brachytherapy} {Society} {Guidelines} for {Prostate} and {Gynecology}},
	title = {American {Brachytherapy} {Society} consensus guidelines for locally advanced carcinoma of the cervix. {Part} {II}: {High}-dose-rate brachytherapy},
	volume = {11},
	issn = {1538-4721},
	shorttitle = {American {Brachytherapy} {Society} consensus guidelines for locally advanced carcinoma of the cervix. {Part} {II}},
	url = {https://www.sciencedirect.com/science/article/pii/S1538472111003515},
	doi = {10.1016/j.brachy.2011.07.002},
	abstract = {Purpose
This report presents an update to the American Brachytherapy Society (ABS) high-dose-rate (HDR) brachytherapy guidelines for locally advanced cervical cancer.
Methods
Members of the ABS with expertise in cervical cancer formulated updated guidelines for HDR brachytherapy using tandem and ring, ovoids, cylinder, or interstitial applicators for locally advanced cervical cancer. These guidelines were written based on medical evidence in the literature and input of clinical experts in gynecologic brachytherapy.
Results
The ABS affirms the essential curative role of tandem-based brachytherapy in the management of locally advanced cervical cancer. Proper applicator selection, insertion, and imaging are fundamental aspects of the procedure. Three-dimensional imaging with magnetic resonance or computed tomography or radiographic imaging may be used for treatment planning. Dosimetry must be performed after each insertion before treatment delivery. Applicator placement, dose specification, and dose fractionation must be documented, quality assurance measures must be performed, and followup information must be obtained. A variety of dose/fractionation schedules and methods for integrating brachytherapy with external-beam radiation exist. The recommended tumor dose in 2-Gray (Gy) per fraction radiobiologic equivalence (normalized therapy dose) is 80–90Gy, depending on tumor size at the time of brachytherapy. Dose limits for normal tissues are discussed.
Conclusion
These guidelines update those of 2000 and provide a comprehensive description of HDR cervical cancer brachytherapy in 2011.},
	number = {1},
	urldate = {2024-09-28},
	journal = {Brachytherapy},
	author = {Viswanathan, Akila N. and Beriwal, Sushil and De Los Santos, Jennifer F. and Demanes, D. Jeffrey and Gaffney, David and Hansen, Jorgen and Jones, Ellen and Kirisits, Christian and Thomadsen, Bruce and Erickson, Beth},
	month = jan,
	year = {2012},
	keywords = {Brachytherapy, Cervical cancer, High dose rate},
	pages = {47--52},
}

@inproceedings{williams_gaussian_1995,
	title = {Gaussian {Processes} for {Regression}},
	volume = {8},
	url = {https://proceedings.neurips.cc/paper_files/paper/1995/hash/7cce53cf90577442771720a370c3c723-Abstract.html},
	abstract = {The Bayesian analysis of neural networks is difficult because a sim(cid:173) ple  prior  over  weights  implies  a  complex  prior  distribution  over  functions .  In this paper we investigate the use of Gaussian process  priors  over  functions,  which  permit  the  predictive  Bayesian  anal(cid:173) ysis  for  fixed  values  of hyperparameters  to  be  carried  out exactly  using matrix operations.  Two methods, using optimization and av(cid:173) eraging (via Hybrid Monte Carlo) over  hyperparameters have been  tested  on  a  number  of  challenging  problems  and  have  produced  excellent  results.},
	urldate = {2024-09-27},
	booktitle = {Advances in {Neural} {Information} {Processing} {Systems}},
	publisher = {MIT Press},
	author = {Williams, Christopher and Rasmussen, Carl},
	year = {1995},
}

@misc{chugh_scalarizing_2019,
	title = {Scalarizing {Functions} in {Bayesian} {Multiobjective} {Optimization}},
	url = {http://arxiv.org/abs/1904.05760},
	abstract = {Scalarizing functions have been widely used to convert a multiobjective optimization problem into a single objective optimization problem. However, their use in solving (computationally) expensive multi- and many-objective optimization problems in Bayesian multiobjective optimization is scarce. Scalarizing functions can play a crucial role on the quality and number of evaluations required when doing the optimization. In this article, we study and review 15 different scalarizing functions in the framework of Bayesian multiobjective optimization and build Gaussian process models (as surrogates, metamodels or emulators) on them. We use expected improvement as inﬁll criterion (or acquisition function) to update the models. In particular, we compare diﬀerent scalarizing functions and analyze their performance on several benchmark problems with diﬀerent number of objectives to be optimized. The review and experiments on diﬀerent functions provide useful insights when using and selecting a scalarizing function when using a Bayesian multiobjective optimization method.},
	language = {en},
	urldate = {2024-09-20},
	publisher = {arXiv},
	author = {Chugh, Tinkle},
	month = apr,
	year = {2019},
	note = {arXiv:1904.05760 [cs, stat]},
	keywords = {Computer Science - Machine Learning, Computer Science - Neural and Evolutionary Computing, Statistics - Machine Learning},
}

@article{wierzbicki_mathematical_1982,
	series = {Special {IIASA} {Issue}},
	title = {A mathematical basis for satisficing decision making},
	volume = {3},
	issn = {0270-0255},
	url = {https://www.sciencedirect.com/science/article/pii/0270025582900380},
	doi = {10.1016/0270-0255(82)90038-0},
	abstract = {This paper presents a conceptual and mathematical model of the process of satisficing decision making under multiple objectives in which the information about decision maker's preferences is expressed in the form of aspiration levels. The mathematical concept of a value (utility) function is modified to describe satisficing behavior; the modified value function (achievement scalarizing function) should possess the properties of order preservation and order approximation. It is shown that the mathematical basis formed using aspiration levels and achievement scalarizing functions can be used not only for satisficing decision making but also for Pareto optimization, and thus provides an alternative to approaches based on weighting coefficients or typical value functions. This mathematical basis, which can also be regarded as a generalization of the goal programming approach in multiobjective optimization, suggests pragmatic approaches to many problems in multiobjective analysis.},
	number = {5},
	urldate = {2024-09-20},
	journal = {Mathematical Modelling},
	author = {Wierzbicki, Andrzej P.},
	month = jan,
	year = {1982},
	pages = {391--405},
}

@article{yu_multi-objective_2000,
	series = {Evolutionary {Computation} in {Medicine}},
	title = {Multi-objective optimization in radiotherapy: applications to stereotactic radiosurgery and prostate brachytherapy},
	volume = {19},
	issn = {0933-3657},
	shorttitle = {Multi-objective optimization in radiotherapy},
	url = {https://www.sciencedirect.com/science/article/pii/S0933365799000494},
	doi = {10.1016/S0933-3657(99)00049-4},
	abstract = {Treatment planning for radiation therapy is a multi-objective optimization process. Here we present a machine intelligent scheme for treatment planning based on multi-objective decision analysis (MODA) and genetic algorithm (GA) optimization. Multi-objective ranking strategies are represented in the Lp metric under the displaced ideal model. Goal setting, protocol satisficing and fuzzy ranking of objective importance can be incorporated into the decision scheme to assimilate clinical decision making. For distance measures in the Lp metric, a dynamic gauge function is defined based on the state energy of the decision system, which is assumed to undergo thermodynamic cooling with iteration time. The MODA scheme interacts with a robust GA engine, which adaptively evolves in the multi-modal landscape that defines the treatment plan quality. A conventionally challenging case of stereotactic radiosurgery of a brain lesion was selected for GA optimization. The resulting dose distributions are compared to human-developed plans, which are commonly regarded as clinically relevant and empirically optimal. The GA-optimized plans achieve substantially better sparing of critical normal neuroanatomy surrounding the brain lesion while respecting the preset constraints on tumor dose uniformity. In addition, machine optimization tends to produce novel treatment strategies which complements expert knowledge. The run time for producing an optimal plan is considerably shorter than the typical planning time for human experts, thus GA can also be used to aid the human treatment planning process. In prostate brachytherapy, MODA-GA was specifically applied to non-ideal conditions in which typical surgical uncertainties in seed implant positioning occur, where noisy objectives were introduced into the optimization scheme. The noisy system is found to be manageable by MODA-GA at uncertainty levels corresponding to reasonably proficient surgery teams. In contrast, noisy objectives would be very difficult to explore by human expert planners. Potential use of noisy optimization with time series analysis is being explored for error-corrective computer guidance in the operating room for prostate seed implantation. In conclusion, the combination of MODA and GA optimization offers both a solution to practical treatment planning tasks and the potential for real time applications in radiotherapy.},
	number = {1},
	urldate = {2024-09-20},
	journal = {Artificial Intelligence in Medicine},
	author = {Yu, Yan and Zhang, J. B and Cheng, Gang and Schell, M. C and Okunieff, Paul},
	month = may,
	year = {2000},
	keywords = {Decision theory, Genetic algorithm, Optimization, Prostate brachytherapy, Stereotactic radiosurgery},
	pages = {39--51},
}

@article{luukkonen_artificial_2023,
	title = {Artificial intelligence in multi-objective drug design},
	volume = {79},
	issn = {0959-440X},
	url = {https://www.sciencedirect.com/science/article/pii/S0959440X23000118},
	doi = {10.1016/j.sbi.2023.102537},
	abstract = {The factors determining a drug's success are manifold, making de novo drug design an inherently multi-objective optimisation (MOO) problem. With the advent of machine learning and optimisation methods, the field of multi-objective compound design has seen a rapid increase in developments and applications. Population-based metaheuris-tics and deep reinforcement learning are the most commonly used artificial intelligence methods in the field, but recently conditional learning methods are gaining popularity. The former approaches are coupled with a MOO strat-egy which is most commonly an aggregation function, but Pareto-based strategies are widespread too. Besides these and conditional learning, various innovative approaches to tackle MOO in drug design have been proposed. Here we provide a brief overview of the field and the latest innovations.},
	urldate = {2024-09-20},
	journal = {Current Opinion in Structural Biology},
	author = {Luukkonen, Sohvi and van den Maagdenberg, Helle W. and Emmerich, Michael T. M. and van Westen, Gerard J. P.},
	month = apr,
	year = {2023},
	keywords = {Compound optimisation, Multi-objective optimisation, Pareto dominance, drug design},
	pages = {102537},
}

@misc{xie_mars_2021,
	title = {{MARS}: {Markov} {Molecular} {Sampling} for {Multi}-objective {Drug} {Discovery}},
	shorttitle = {{MARS}},
	url = {http://arxiv.org/abs/2103.10432},
	abstract = {Searching for novel molecules with desired chemical properties is crucial in drug discovery. Existing work focuses on developing neural models to generate either molecular sequences or chemical graphs. However, it remains a big challenge to ﬁnd novel and diverse compounds satisfying several properties. In this paper, we propose MARS, a method for multi-objective drug molecule discovery. MARS is based on the idea of generating the chemical candidates by iteratively editing fragments of molecular graphs. To search for high-quality candidates, it employs Markov chain Monte Carlo sampling (MCMC) on molecules with an annealing scheme and an adaptive proposal. To further improve sample efﬁciency, MARS uses a graph neural network (GNN) to represent and select candidate edits, where the GNN is trained on-the-ﬂy with samples from MCMC. Experiments show that MARS achieves state-of-the-art performance in various multi-objective settings where molecular bio-activity, drug-likeness, and synthesizability are considered. Remarkably, in the most challenging setting where all four objectives are simultaneously optimized, our approach outperforms previous methods signiﬁcantly in comprehensive evaluations. The code is available at https://github.com/yutxie/mars.},
	language = {en},
	urldate = {2024-09-20},
	publisher = {arXiv},
	author = {Xie, Yutong and Shi, Chence and Zhou, Hao and Yang, Yuwei and Zhang, Weinan and Yu, Yong and Li, Lei},
	month = mar,
	year = {2021},
	note = {arXiv:2103.10432 [cs, q-bio]},
	keywords = {Computer Science - Computational Engineering, Finance, and Science, Computer Science - Machine Learning, Quantitative Biology - Biomolecules},
}

@article{fromer_computer-aided_2023,
	title = {Computer-aided multi-objective optimization in small molecule discovery},
	volume = {4},
	issn = {26663899},
	url = {https://linkinghub.elsevier.com/retrieve/pii/S2666389923000016},
	doi = {10.1016/j.patter.2023.100678},
	abstract = {Molecular discovery is a multi-objective optimization problem that requires identifying a molecule or set of molecules that balance multiple, often competing, properties. Multi-objective molecular design is commonly addressed by combining properties of interest into a single objective function using scalarization, which imposes assumptions about relative importance and uncovers little about the trade-offs between objectives. In contrast to scalarization, Pareto optimization does not require knowledge of relative importance and reveals the trade-offs between objectives. However, it introduces additional considerations in algorithm design. In this review, we describe pool-based and de novo generative approaches to multi-objective molecular discovery with a focus on Pareto optimization algorithms. We show how pool-based molecular discovery is a relatively direct extension of multi-objective Bayesian optimization and how the plethora of different generative models extend from single-objective to multi-objective optimization in similar ways using non-dominated sorting in the reward function (reinforcement learning) or to select molecules for retraining (distribution learning) or propagation (genetic algorithms). Finally, we discuss some remaining challenges and opportunities in the ﬁeld, emphasizing the opportunity to adopt Bayesian optimization techniques into multi-objective de novo design.},
	language = {en},
	number = {2},
	urldate = {2024-09-20},
	journal = {Patterns},
	author = {Fromer, Jenna C. and Coley, Connor W.},
	month = feb,
	year = {2023},
	pages = {100678},
}

@article{cheng_generalized_1999,
	title = {Generalized {Center} {Method} for {Multiobjective} {Engineering} {Optimization}},
	volume = {31},
	issn = {0305-215X},
	url = {https://doi.org/10.1080/03052159908941390},
	doi = {10.1080/03052159908941390},
	abstract = {This paper presents a new approach to multiobjective engineering optimization: the generalized center method (GCM). A multiobjective problem is solved by calculating the centers of a sequence of level sets. These sets comprise intersections of the original constraints and level constraints imposed on objective functions. In view of the different dimensions and conflicting nature of multiple objectives, some scaling and trade-off procedures are implemented. Several engineering optimization examples are given to demonstrate the effectiveness of the proposed method.},
	number = {5},
	urldate = {2024-09-14},
	journal = {Engineering Optimization},
	author = {CHENG, F. Y. and LI, X. S.},
	month = may,
	year = {1999},
	note = {Publisher: Taylor \& Francis
\_eprint: https://doi.org/10.1080/03052159908941390},
	keywords = {Multiobjective optimization, Pareto optimal set, generalized center method, minimax problem, smooth approximation, trade-off factor},
	pages = {641--661},
}

@article{tanabe_easy--use_2020,
	title = {An easy-to-use real-world multi-objective optimization problem suite},
	volume = {89},
	issn = {1568-4946},
	url = {https://www.sciencedirect.com/science/article/pii/S1568494620300181},
	doi = {10.1016/j.asoc.2020.106078},
	abstract = {Although synthetic test problems are widely used for the performance assessment of evolutionary multi-objective optimization algorithms, they are likely to include unrealistic properties which may lead to overestimation/underestimation. To address this issue, we present a multi-objective optimization problem suite consisting of 16 bound-constrained real-world problems. The problem suite includes various problems in terms of the number of objectives, the shape of the Pareto front, and the type of design variables. 4 out of the 16 problems are multi-objective mixed-integer optimization problems. We provide Java, C, and Matlab source codes of the 16 problems so that they are available in an off-the-shelf manner. We examine an approximated Pareto front of each test problem. We also analyze the performance of six representative evolutionary multi-objective optimization algorithms on the 16 problems. In addition to the 16 problems, we present 8 constrained multi-objective real-world problems.},
	urldate = {2024-09-14},
	journal = {Applied Soft Computing},
	author = {Tanabe, Ryoji and Ishibuchi, Hisao},
	month = apr,
	year = {2020},
	keywords = {Evolutionary multi-objective optimization, Real-world problems, Test problems},
	pages = {106078},
}

@article{nemhauser_analysis_1978,
	title = {An analysis of approximations for maximizing submodular set functions—{I}},
	volume = {14},
	copyright = {http://www.springer.com/tdm},
	issn = {0025-5610, 1436-4646},
	url = {http://link.springer.com/10.1007/BF01588971},
	doi = {10.1007/BF01588971},
	abstract = {Let N be a finite set and z be a real-valued function defined on the set of subsets of N that satisfies z ( S ) + z ( T ) {\textgreater} - z ( S U T ) + z ( S n T ) for all S, T in N. Such a function is called submodular. We consider the problem maXscN \{z(S): IS[ {\textless}-K, z(S) submodular\}. Several hard combinatorial optimization problems can be posed in this framework. For example, the problem of finding a maximum weight independent set in a matroid, when the elements of the matroid are colored and the elements of the independent set can have no more than K colors, is in this class. The uncapacitated location problem is a special case of this matroid optimization problem. We analyze greedy and local improvement heuristics and a linear programming relaxation for this problem. Our results are worst case bounds on the quality of the approximations. For example, when z(S) is nondecreasing and z(0) = 0, we.show that a "greedy" heuristic always produces a solution whose value is at least 1 - [ ( K - 1 ) / K ] K times the optimal value. This bound can be achieved for each K and has a limiting value of ( e - l)/e, where e is the base of the natural logarithm.},
	language = {en},
	number = {1},
	urldate = {2024-08-12},
	journal = {Mathematical Programming},
	author = {Nemhauser, G. L. and Wolsey, L. A. and Fisher, M. L.},
	month = dec,
	year = {1978},
	pages = {265--294},
}

@article{diehl_when_2005,
	title = {When {Two} {Rights} {Make} a {Wrong}: {Searching} {Too} {Much} in {Ordered} {Environments}},
	volume = {42},
	issn = {0022-2437},
	shorttitle = {When {Two} {Rights} {Make} a {Wrong}},
	url = {https://doi.org/10.1509/jmkr.2005.42.3.313},
	doi = {10.1509/jmkr.2005.42.3.313},
	abstract = {In electronic shopping, screening tools are used to sort through many options, assess their fit with a consumer's utility function, and recommend options in a list ordered from predicted best to worst. When the most promising options are at the beginning of the list, even seemingly advantageous factors (e.g., lower search cost, greater selection) that prompt consideration of more options degrade choice quality by (1) lowering the average quality of considered options and (2) lowering customers' selectivity in focusing attention on the more mediocre rather than the better options from the actively considered set. Study 1 shows that lowering search costs diminishes choice quality in an ordered environment. Study 2 shows that presenting consumers with the top 50 rather than the top 15 recommendations has the same effect. Study 3 shows that greater accuracy motivation in combination with lower search cost diminishes choice quality because consumers are encouraged to consider a wider range of options (lower-quality consideration sets), which ultimately leads to worse choices.},
	number = {3},
	urldate = {2024-08-12},
	journal = {Journal of Marketing Research},
	author = {Diehl, Kristin},
	month = aug,
	year = {2005},
	note = {Publisher: SAGE Publications Inc},
	pages = {313--322},
}

@inproceedings{bryan_actively_2008,
	address = {Helsinki, Finland},
	title = {Actively learning level-sets of composite functions},
	isbn = {978-1-60558-205-4},
	url = {http://portal.acm.org/citation.cfm?doid=1390156.1390167},
	doi = {10.1145/1390156.1390167},
	language = {en},
	urldate = {2024-08-11},
	booktitle = {Proceedings of the 25th international conference on {Machine} learning - {ICML} '08},
	publisher = {ACM Press},
	author = {Bryan, Brent and Schneider, Jeff},
	year = {2008},
	pages = {80--87},
}

@article{iwazaki_bayesian_2020,
	title = {Bayesian {Experimental} {Design} for {Finding} {Reliable} {Level} {Set} {Under} {Input} {Uncertainty}},
	volume = {8},
	issn = {2169-3536},
	url = {https://ieeexplore.ieee.org/document/9252941/?arnumber=9252941},
	doi = {10.1109/ACCESS.2020.3036863},
	abstract = {In the manufacturing industry, it is often necessary to repeat expensive operational testing of machine in order to identify the range of input conditions under which the machine operates properly. Since it is often difficult to accurately control the input conditions during the actual usage of the machine, there is a need to guarantee the performance of the machine after properly incorporating the possible variation in the input conditions. In this paper, we formulate this practical manufacturing scenario as an Input Uncertain Reliable Level Set Estimation (IU-rLSE) problem, and provide an efficient algorithm for solving it. The goal of IU-rLSE is to identify the input range in which the outputs smaller/greater than a desired threshold can be obtained with high probability when the input uncertainty is properly taken into consideration. We propose an active learning method to solve the IU-rLSE problem efficiently, theoretically analyze its accuracy and convergence, and illustrate its empirical performance through numerical experiments on artificial and real data.},
	urldate = {2024-08-11},
	journal = {IEEE Access},
	author = {Iwazaki, Shogo and Inatsu, Yu and Takeuchi, Ichiro},
	year = {2020},
	note = {Conference Name: IEEE Access},
	keywords = {Active learning, Bayesian experimental design, Estimation, Level set, Manufacturing industries, Reliability engineering, Reliability theory, Testing, Uncertainty, input uncertainty, level set estimation, machine learning, machine learning algorithms, statistical learning},
	pages = {203982--203993},
}

@misc{zanette_robust_2018,
	title = {Robust {Super}-{Level} {Set} {Estimation} using {Gaussian} {Processes}},
	url = {http://arxiv.org/abs/1811.09977},
	abstract = {This paper focuses on the problem of determining as large a region as possible where a function exceeds a given threshold with high probability. We assume that we only have access to a noise-corrupted version of the function and that function evaluations are costly. To select the next query point, we propose maximizing the expected volume of the domain identiﬁed as above the threshold as predicted by a Gaussian process, robustiﬁed by a variance term. We also give asymptotic guarantees on the exploration eﬀect of the algorithm, regardless of the prior misspeciﬁcation. We show by various numerical examples that our approach also outperforms existing techniques in the literature in practice.},
	language = {en},
	urldate = {2024-08-11},
	publisher = {arXiv},
	author = {Zanette, Andrea and Zhang, Junzi and Kochenderfer, Mykel J.},
	month = nov,
	year = {2018},
	note = {arXiv:1811.09977 [cs, stat]},
	keywords = {Computer Science - Machine Learning, Statistics - Machine Learning},
}

@inproceedings{bryan_active_2005,
	title = {Active {Learning} {For} {Identifying} {Function} {Threshold} {Boundaries}},
	volume = {18},
	url = {https://proceedings.neurips.cc/paper/2005/hash/8e930496927757aac0dbd2438cb3f4f6-Abstract.html},
	abstract = {We present an efﬁcient algorithm to actively select queries for learning the boundaries separating a function domain into regions where the func- tion is above and below a given threshold. We develop experiment selec- tion methods based on entropy, misclassiﬁcation rate, variance, and their combinations, and show how they perform on a number of data sets. We then show how these algorithms are used to determine simultaneously valid 1 − α conﬁdence intervals for seven cosmological parameters. Ex- perimentation shows that the algorithm reduces the computation neces- sary for the parameter estimation problem by an order of magnitude.},
	urldate = {2024-08-11},
	booktitle = {Advances in {Neural} {Information} {Processing} {Systems}},
	publisher = {MIT Press},
	author = {Bryan, Brent and Nichol, Robert C. and Genovese, Christopher R and Schneider, Jeff and Miller, Christopher J. and Wasserman, Larry},
	year = {2005},
}

@article{gotovos_active_2013,
	title = {Active {Learning} for {Level} {Set} {Estimation}},
	copyright = {http://rightsstatements.org/page/InC-NC/1.0/, info:eu-repo/semantics/openAccess},
	url = {http://hdl.handle.net/20.500.11850/153931},
	doi = {10.3929/ETHZ-A-009767767},
	language = {en},
	urldate = {2024-08-11},
	author = {Gotovos, Alkis},
	year = {2013},
	note = {Medium: application/pdf,Online-Ressource
Publisher: ETH Zurich},
	keywords = {COMPUTER APPLICATIONS IN ENVIRONMENTAL PROTECTION AND RESEARCH, COMPUTERANWENDUNGEN IM UMWELTSCHUTZ UND IN DEN UMWELTWISSENSCHAFTEN, Data processing, computer science, GEWÄSSERKONTROLLE + GEWÄSSERÜBERWACHUNG (UMWELTSCHUTZ), NATURAL WATERS/ENVIRONMENTAL MONITORING (ENVIRONMENTAL PROTECTION), SPECIAL PROGRAMMING METHODS, SPEZIELLE PROGRAMMIERMETHODEN, info:eu-repo/classification/ddc/004},
}

@article{knowles_parego_2006,
	title = {{ParEGO}: a hybrid algorithm with on-line landscape approximation for expensive multiobjective optimization problems},
	volume = {10},
	issn = {1941-0026},
	shorttitle = {{ParEGO}},
	url = {https://ieeexplore.ieee.org/document/1583627},
	doi = {10.1109/TEVC.2005.851274},
	abstract = {This paper concerns multiobjective optimization in scenarios where each solution evaluation is financially and/or temporally expensive. We make use of nine relatively low-dimensional, nonpathological, real-valued functions, such as arise in many applications, and assess the performance of two algorithms after just 100 and 250 (or 260) function evaluations. The results show that NSGA-II, a popular multiobjective evolutionary algorithm, performs well compared with random search, even within the restricted number of evaluations used. A significantly better performance (particularly, in the worst case) is, however, achieved on our test set by an algorithm proposed herein-ParEGO-which is an extension of the single-objective efficient global optimization (EGO) algorithm of Jones et al. ParEGO uses a design-of-experiments inspired initialization procedure and learns a Gaussian processes model of the search landscape, which is updated after every function evaluation. Overall, ParEGO exhibits a promising performance for multiobjective optimization problems where evaluations are expensive or otherwise restricted in number.},
	number = {1},
	urldate = {2024-08-11},
	journal = {IEEE Transactions on Evolutionary Computation},
	author = {Knowles, J.},
	month = feb,
	year = {2006},
	note = {Conference Name: IEEE Transactions on Evolutionary Computation},
	keywords = {Approximation algorithms, Design and analysis of computer experiments (DACE), Evolutionary computation, Gaussian processes, Instruments, Kriging, Optimization methods, Pareto analysis, Pareto optima, Pareto optimization, Performance evaluation, Search methods, Testing, efficient global optimization (EGO), expensive black-box functions, landscape approximation, metamodels, multiobjective optimization, nondominated sorting genetic algorithm II (NSGA-II), performance assessment, response surfaces, test suites},
	pages = {50--66},
}

@article{zhang_moead_2007,
	title = {{MOEA}/{D}: {A} {Multiobjective} {Evolutionary} {Algorithm} {Based} on {Decomposition}},
	volume = {11},
	issn = {1941-0026},
	shorttitle = {{MOEA}/{D}},
	url = {https://ieeexplore.ieee.org/document/4358754},
	doi = {10.1109/TEVC.2007.892759},
	abstract = {Decomposition is a basic strategy in traditional multiobjective optimization. However, it has not yet been widely used in multiobjective evolutionary optimization. This paper proposes a multiobjective evolutionary algorithm based on decomposition (MOEA/D). It decomposes a multiobjective optimization problem into a number of scalar optimization subproblems and optimizes them simultaneously. Each subproblem is optimized by only using information from its several neighboring subproblems, which makes MOEA/D have lower computational complexity at each generation than MOGLS and nondominated sorting genetic algorithm II (NSGA-II). Experimental results have demonstrated that MOEA/D with simple decomposition methods outperforms or performs similarly to MOGLS and NSGA-II on multiobjective 0-1 knapsack problems and continuous multiobjective optimization problems. It has been shown that MOEA/D using objective normalization can deal with disparately-scaled objectives, and MOEA/D with an advanced decomposition method can generate a set of very evenly distributed solutions for 3-objective test instances. The ability of MOEA/D with small population, the scalability and sensitivity of MOEA/D have also been experimentally investigated in this paper.},
	number = {6},
	urldate = {2024-08-11},
	journal = {IEEE Transactions on Evolutionary Computation},
	author = {Zhang, Qingfu and Li, Hui},
	month = dec,
	year = {2007},
	note = {Conference Name: IEEE Transactions on Evolutionary Computation},
	keywords = {Computational complexity, Computer science, Evolutionary computation, Genetic algorithms, Mathematical model, Optimization methods, Pareto optimality, Pareto optimization, Scalability, Sorting, Testing, decomposition, evolutionary algorithm, multiobjective optimization},
	pages = {712--731},
}

@article{zhang_expensive_2010,
	title = {Expensive {Multiobjective} {Optimization} by {MOEA}/{D} {With} {Gaussian} {Process} {Model}},
	volume = {14},
	issn = {1941-0026},
	url = {https://ieeexplore.ieee.org/document/5353656},
	doi = {10.1109/TEVC.2009.2033671},
	abstract = {In some expensive multiobjective optimization problems (MOPs), several function evaluations can be carried out in a batch way. Therefore, it is very desirable to develop methods which can generate multipler test points simultaneously. This paper proposes such a method, called MOEA/D-EGO, for dealing with expensive multiobjective optimization. MOEA/D-EGO decomposes an MOP in question into a number of single-objective optimization subproblems. A predictive model is built for each subproblem based on the points evaluated so far. Effort has been made to reduce the overhead for modeling and to improve the prediction quality. At each generation, MOEA/D is used for maximizing the expected improvement metric values of all the subproblems, and then several test points are selected for evaluation. Extensive experimental studies have been carried out to investigate the ability of the proposed algorithm.},
	number = {3},
	urldate = {2024-08-11},
	journal = {IEEE Transactions on Evolutionary Computation},
	author = {Zhang, Qingfu and Liu, Wudong and Tsang, Edward and Virginas, Botond},
	month = jun,
	year = {2010},
	note = {Conference Name: IEEE Transactions on Evolutionary Computation},
	keywords = {Acoustic testing, Computational efficiency, Computer simulation, Design optimization, Evolutionary algorithm, Gaussian processes, Gaussian stochastic processes, Optimization methods, Pareto optimality, Pareto optimization, Physics computing, Predictive models, Stochastic processes, expensive optimization, multiobjective optimization},
	pages = {456--474},
}

@article{picheny_multiobjective_2015,
	title = {Multiobjective optimization using {Gaussian} process emulators via stepwise uncertainty reduction},
	volume = {25},
	issn = {1573-1375},
	url = {https://doi.org/10.1007/s11222-014-9477-x},
	doi = {10.1007/s11222-014-9477-x},
	abstract = {Optimization of expensive computer models with the help of Gaussian process emulators is now commonplace. However, when several (competing) objectives are considered, choosing an appropriate sampling strategy remains an open question. We present here a new algorithm based on stepwise uncertainty reduction principles. Optimization is seen as a sequential reduction of the volume of the excursion sets below the current best solutions (Pareto set), and our sampling strategy chooses the points that give the highest expected reduction. The method is tested on several numerical examples and on an agronomy problem, showing that it provides an efficient trade-off between exploration and intensification.},
	language = {en},
	number = {6},
	urldate = {2024-08-11},
	journal = {Statistics and Computing},
	author = {Picheny, Victor},
	month = nov,
	year = {2015},
	keywords = {EGO, Excursion sets, Kriging, Pareto front},
	pages = {1265--1280},
}

@article{emmerich_computation_2008,
	title = {The computation of the expected improvement in dominated hypervolume of {Pareto} front approximations},
	author = {Emmerich, Michael},
	month = jan,
	year = {2008},
}

@article{campigotto_active_2014,
	title = {Active {Learning} of {Pareto} {Fronts}},
	volume = {25},
	issn = {2162-2388},
	url = {https://ieeexplore.ieee.org/document/6606803},
	doi = {10.1109/TNNLS.2013.2275918},
	abstract = {This paper introduces the active learning of Pareto fronts (ALP) algorithm, a novel approach to recover the Pareto front of a multiobjective optimization problem. ALP casts the identification of the Pareto front into a supervised machine learning task. This approach enables an analytical model of the Pareto front to be built. The computational effort in generating the supervised information is reduced by an active learning strategy. In particular, the model is learned from a set of informative training objective vectors. The training objective vectors are approximated Pareto-optimal vectors obtained by solving different scalarized problem instances. The experimental results show that ALP achieves an accurate Pareto front approximation with a lower computational effort than state-of-the-art estimation of distribution algorithms and widely known genetic techniques.},
	number = {3},
	urldate = {2024-08-11},
	journal = {IEEE Transactions on Neural Networks and Learning Systems},
	author = {Campigotto, Paolo and Passerini, Andrea and Battiti, Roberto},
	month = mar,
	year = {2014},
	note = {Conference Name: IEEE Transactions on Neural Networks and Learning Systems},
	keywords = {Active learning, Analytical models, Approximation methods, Gaussian process regression, Linear programming, Optimization, Training, Uncertainty, Vectors, multiobjective optimization, uncertainty sampling},
	pages = {506--519},
}

@article{zuluaga_e-pal_2016,
	title = {e-{PAL}: {An} {Active} {Learning} {Approach} to the {Multi}-{Objective} {Optimization} {Problem}},
	volume = {17},
	issn = {1533-7928},
	shorttitle = {e-{PAL}},
	url = {http://jmlr.org/papers/v17/15-047.html},
	abstract = {In many fields one encounters the challenge of identifying out of a pool of possible designs those that simultaneously optimize multiple objectives. In many applications an exhaustive search for the Pareto-optimal set is infeasible. To address this challenge, we propose the ϵϵ{\textbackslash}epsilon-Pareto Active Learning (ϵϵ{\textbackslash}epsilon-PAL) algorithm which adaptively samples the design space to predict a set of Pareto-optimal solutions that cover the true Pareto front of the design space with some granularity regulated by a parameter ϵϵ{\textbackslash}epsilon. Key features of ϵϵ{\textbackslash}epsilon-PAL include (1) modeling the objectives as draws from a Gaussian process distribution to capture structure and accommodate noisy evaluation; (2) a method to carefully choose the next design to evaluate to maximize progress; and (3) the ability to control prediction accuracy and sampling cost. We provide theoretical bounds on ϵϵ{\textbackslash}epsilon-PAL's sampling cost required to achieve a desired accuracy. Further, we perform an experimental evaluation on three real-world data sets that demonstrate ϵϵ{\textbackslash}epsilon-PAL's effectiveness; in comparison to the state-of-the-art active learning algorithm PAL, ϵϵ{\textbackslash}epsilon-PAL reduces the amount of computations and the number of samples from the design space required to meet the user's desired level of accuracy. In addition, we show that ϵϵ{\textbackslash}epsilon-PAL improves significantly over a state-of-the-art multi- objective optimization method, saving in most cases 30{\textbackslash}\% to 70{\textbackslash}\% evaluations to achieve the same accuracy.},
	number = {104},
	urldate = {2024-08-11},
	journal = {Journal of Machine Learning Research},
	author = {Zuluaga, Marcela and Krause, Andreas and P\{ü\}schel, Markus},
	year = {2016},
	pages = {1--32},
}

@inproceedings{ponweiser_multiobjective_2008,
	address = {Berlin, Heidelberg},
	title = {Multiobjective {Optimization} on a {Limited} {Budget} of {Evaluations} {Using} {Model}-{Assisted} \${\textbackslash}mathcal\{{S}\}\$-{Metric} {Selection}},
	isbn = {978-3-540-87700-4},
	doi = {10.1007/978-3-540-87700-4_78},
	abstract = {Real-world optimization problems often require the consideration of multiple contradicting objectives. These multiobjective problems are even more challenging when facing a limited budget of evaluations due to expensive experiments or simulations. In these cases, a specific class of multiobjective optimization algorithms (MOOA) has to be applied. This paper provides a review of contemporary multiobjective approaches based on the singleobjective meta-model-assisted ’Efficient Global Optimization’ (EGO) procedure and describes their main concepts. Additionally, a new EGO-based MOOA is introduced, which utilizes the \${\textbackslash}mathcal\{S\}\$-metric or hypervolume contribution to decide which solution is evaluated next. A benchmark on recently proposed test functions is performed allowing a budget of 130 evaluations. The results point out that the maximization of the hypervolume contribution within a real multiobjective optimization is superior to straightforward adaptations of EGO making our new approach capable of approximating the Pareto front of common problems within the allowed budget of evaluations.},
	language = {en},
	booktitle = {Parallel {Problem} {Solving} from {Nature} – {PPSN} {X}},
	publisher = {Springer},
	author = {Ponweiser, Wolfgang and Wagner, Tobias and Biermann, Dirk and Vincze, Markus},
	editor = {Rudolph, Günter and Jansen, Thomas and Beume, Nicola and Lucas, Simon and Poloni, Carlo},
	year = {2008},
	keywords = {Design and Analysis of Computer Experiments, Efficient Global Optimization, Multiobjective Optimization, Real-World Problems, S{\textbackslash}mathcal\{S\}-metric},
	pages = {784--794},
}

@misc{hernandez-lobato_predictive_2016,
	title = {Predictive {Entropy} {Search} for {Multi}-objective {Bayesian} {Optimization}},
	url = {http://arxiv.org/abs/1511.05467},
	doi = {10.48550/arXiv.1511.05467},
	abstract = {We present PESMO, a Bayesian method for identifying the Pareto set of multi-objective optimization problems, when the functions are expensive to evaluate. The central idea of PESMO is to choose evaluation points so as to maximally reduce the entropy of the posterior distribution over the Pareto set. Critically, the PESMO multi-objective acquisition function can be decomposed as a sum of objective-specific acquisition functions, which enables the algorithm to be used in {\textbackslash}emph\{decoupled\} scenarios in which the objectives can be evaluated separately and perhaps with different costs. This decoupling capability also makes it possible to identify difficult objectives that require more evaluations. PESMO also offers gains in efficiency, as its cost scales linearly with the number of objectives, in comparison to the exponential cost of other methods. We compare PESMO with other related methods for multi-objective Bayesian optimization on synthetic and real-world problems. The results show that PESMO produces better recommendations with a smaller number of evaluations of the objectives, and that a decoupled evaluation can lead to improvements in performance, particularly when the number of objectives is large.},
	urldate = {2024-08-11},
	publisher = {arXiv},
	author = {Hernández-Lobato, Daniel and Hernández-Lobato, José Miguel and Shah, Amar and Adams, Ryan P.},
	month = feb,
	year = {2016},
	note = {arXiv:1511.05467 [stat]},
	keywords = {Statistics - Machine Learning},
}

@inproceedings{abdolshah_multi-objective_2019,
	title = {Multi-objective {Bayesian} optimisation with preferences over objectives},
	volume = {32},
	url = {https://papers.nips.cc/paper_files/paper/2019/hash/a7b7e4b27722574c611fe91476a50238-Abstract.html},
	abstract = {We present a  multi-objective Bayesian optimisation algorithm that allows the user to express preference-order constraints on the objectives of the type objective A is more important than objective B. These preferences are defined based on the stability of the obtained solutions with respect to preferred objective functions. Rather than attempting to find a representative subset of the complete Pareto front, our algorithm selects those Pareto-optimal points that satisfy these constraints. We formulate a new acquisition function based on expected improvement in dominated hypervolume (EHI) to ensure that the subset of Pareto front satisfying the constraints is thoroughly explored. The hypervolume calculation is weighted by the probability of a point satisfying the constraints from a gradient Gaussian Process model. We demonstrate our algorithm on both synthetic and real-world problems.},
	urldate = {2024-08-11},
	booktitle = {Advances in {Neural} {Information} {Processing} {Systems}},
	publisher = {Curran Associates, Inc.},
	author = {Abdolshah, Majid and Shilton, Alistair and Rana, Santu and Gupta, Sunil and Venkatesh, Svetha},
	year = {2019},
}

@article{cui_multi-criteria_2018,
	title = {A multi-criteria optimization approach for {HDR} prostate brachytherapy: {II}. {Benchmark} against clinical plans},
	volume = {63},
	issn = {1361-6560},
	shorttitle = {A multi-criteria optimization approach for {HDR} prostate brachytherapy},
	url = {https://iopscience.iop.org/article/10.1088/1361-6560/aae24f},
	doi = {10.1088/1361-6560/aae24f},
	abstract = {The current iterative approach to inverse planning of high dose rate treatment planning can be time consuming. The purpose of this two-part study is to streamline the planning process while maintaining plan quality. In this second part, a multi-criteria optimization (MCO) planning algorithm is proposed and benchmarked against a standard planning algorithm. With a set of previously established regression models, a patient-specific valid solution space on the Pareto surface was predicted based on the anchor plans results. Alternative plans generated alongside the partial Pareto front were presented to the planner, and one plan was selected as the MCO plan. The dosimetric parameters results as well as the planning time were compared between the MCO plans and the physician-approved standard plans for 236 prostate cases. Results show that the urethra is better spared with MCO planning than with standard planning (a lower mean urethral D10 value of 2.25\%). The overall MCO plan quality also outperforms the standard plan quality, since MCO planning is able to increase the frequency of clinically acceptable plans meeting all of RTOG criteria simultaneously without any human intervention (from 83.05\% to 97.46\%). Finally, the average MCO planning time is 41 s without any interventions of treatment planners. The presented MCO planning algorithm constitutes a robust and automated way to improve treatment quality in brachytherapy.},
	language = {en},
	number = {20},
	urldate = {2024-07-14},
	journal = {Physics in Medicine \& Biology},
	author = {Cui, Songye and Després, Philippe and Beaulieu, Luc},
	month = oct,
	year = {2018},
	pages = {205005},
}

@article{roijers_survey_2013,
	title = {A {Survey} of {Multi}-{Objective} {Sequential} {Decision}-{Making}},
	volume = {48},
	issn = {1076-9757},
	url = {https://www.jair.org/index.php/jair/article/view/10836},
	doi = {10.1613/jair.3987},
	abstract = {Sequential decision-making problems with multiple objectives arise naturally in practice and pose unique challenges for research in decision-theoretic planning and learning, which has largely focused on single-objective settings. This article surveys algorithms designed for sequential decision-making problems with multiple objectives. Though there is a growing body of literature on this subject, little of it makes explicit under what circumstances special methods are needed to solve multi-objective problems. Therefore, we identify three distinct scenarios in which converting such a problem to a single-objective one is impossible, infeasible, or undesirable. Furthermore, we propose a taxonomy that classiﬁes multi-objective methods according to the applicable scenario, the nature of the scalarization function (which projects multi-objective values to scalar ones), and the type of policies considered. We show how these factors determine the nature of an optimal solution, which can be a single policy, a convex hull, or a Pareto front. Using this taxonomy, we survey the literature on multi-objective methods for planning and learning. Finally, we discuss key applications of such methods and outline opportunities for future work.},
	language = {en},
	urldate = {2023-09-02},
	journal = {Journal of Artificial Intelligence Research},
	author = {Roijers, D. M. and Vamplew, P. and Whiteson, S. and Dazeley, R.},
	month = oct,
	year = {2013},
	pages = {67--113},
}

@incollection{trautmann_using_2017,
	address = {Cham},
	title = {On {Using} {Decision} {Maker} {Preferences} with {ParEGO}},
	volume = {10173},
	isbn = {978-3-319-54156-3 978-3-319-54157-0},
	url = {http://link.springer.com/10.1007/978-3-319-54157-0_20},
	abstract = {In this paper, an interactive version of the ParEGO algorithm is introduced for identifying most preferred solutions for computationally expensive multiobjective optimization problems. It enables a decision maker to guide the search with her preferences and change them in case new insight is gained about the feasibility of the preferences. At each interaction, the decision maker is shown a subset of non-dominated solutions and she is assumed to provide her preferences in the form of preferred ranges for each objective. Internally, the algorithm samples reference points within the hyperbox deﬁned by the preferred ranges in the objective space and uses a DACE model to approximate an achievement (scalarizing) function as a single objective to scalarize the problem. The resulting solution is then evaluated with the real objective functions and used to improve the DACE model in further iterations. The potential of the proposed algorithm is illustrated via a four-objective optimization problem related to water management with promising results.},
	language = {en},
	urldate = {2023-07-22},
	booktitle = {Evolutionary {Multi}-{Criterion} {Optimization}},
	publisher = {Springer International Publishing},
	author = {Hakanen, Jussi and Knowles, Joshua D.},
	editor = {Trautmann, Heike and Rudolph, Günter and Klamroth, Kathrin and Schütze, Oliver and Wiecek, Margaret and Jin, Yaochu and Grimme, Christian},
	year = {2017},
	doi = {10.1007/978-3-319-54157-0_20},
	note = {Series Title: Lecture Notes in Computer Science},
	pages = {282--297},
}

@article{while_fast_2012,
	title = {A {Fast} {Way} of {Calculating} {Exact} {Hypervolumes}},
	volume = {16},
	issn = {1941-0026},
	doi = {10.1109/TEVC.2010.2077298},
	abstract = {We describe a new algorithm WFG for calculating hypervolume exactly. WFG is based on the recently-described observation that the exclusive hypervolume of a point p relative to a set S is equal to the difference between the inclusive hypervolume of p and the hypervolume of S with each point limited by the objective values in p. WFG applies this technique iteratively over a set to calculate its hypervolume. Experiments show that WFG is substantially faster (in five or more objectives) than all previously-described algorithms that calculate hypervolume exactly.},
	number = {1},
	journal = {IEEE Transactions on Evolutionary Computation},
	author = {While, Lyndon and Bradstreet, Lucas and Barone, Luigi},
	month = feb,
	year = {2012},
	note = {Conference Name: IEEE Transactions on Evolutionary Computation},
	keywords = {Algorithm design and analysis, Approximation algorithms, Complexity theory, Diversity, Evolutionary computation, Measurement, Optimization, Sorting, evolutionary computation, hypervolume, multiobjective optimization, performance metrics},
	pages = {86--95},
}

@inproceedings{astudillo_multi-attribute_2020,
	title = {Multi-attribute {Bayesian} optimization with interactive preference learning},
	url = {https://proceedings.mlr.press/v108/astudillo20a.html},
	abstract = {We consider black-box global optimization of time-consuming-to-evaluate functions on behalf of a decision-maker (DM) whose preferences must be learned. Each feasible design is associated with a time-consuming-to-evaluate vector of attributes and each vector of attributes is assigned a utility by the DM’s utility function, which may be learned approximately using preferences expressed over pairs of attribute vectors. Past work has used a point estimate of this utility function as if it were error-free within single-objective optimization. However, utility estimation errors may yield a poor suggested design. Furthermore, this approach produces a single suggested ‘best’ design, whereas DMs often prefer to choose from a menu. We propose a novel multi-attribute Bayesian optimization with preference learning approach. Our approach acknowledges the uncertainty in preference estimation and implicitly chooses designs to evaluate that are good not just for a single estimated utility function but a range of likely ones. The outcome of our approach is a menu of designs and evaluated attributes from which the DM makes a final selection. We demonstrate the value and flexibility of our approach in a variety of experiments.},
	language = {en},
	urldate = {2023-07-10},
	booktitle = {Proceedings of the {Twenty} {Third} {International} {Conference} on {Artificial} {Intelligence} and {Statistics}},
	publisher = {PMLR},
	author = {Astudillo, Raul and Frazier, Peter},
	month = jun,
	year = {2020},
	note = {ISSN: 2640-3498},
	pages = {4496--4507},
}

@misc{paria_flexible_2019,
	title = {A {Flexible} {Framework} for {Multi}-{Objective} {Bayesian} {Optimization} using {Random} {Scalarizations}},
	url = {http://arxiv.org/abs/1805.12168},
	abstract = {Many real world applications can be framed as multi-objective optimization problems, where we wish to simultaneously optimize for multiple criteria. Bayesian optimization techniques for the multi-objective setting are pertinent when the evaluation of the functions in question are expensive. Traditional methods for multi-objective optimization, both Bayesian and otherwise, are aimed at recovering the Pareto front of these objectives. However, in certain cases a practitioner might desire to identify Pareto optimal points only in a subset of the Pareto front due to external considerations. In this work, we propose a strategy based on random scalarizations of the objectives that addresses this problem. Our approach is able to flexibly sample from desired regions of the Pareto front and, computationally, is considerably cheaper than most approaches for MOO. We also study a notion of regret in the multi-objective setting and show that our strategy achieves sublinear regret. We experiment with both synthetic and real-life problems, and demonstrate superior performance of our proposed algorithm in terms of the flexibility and regret.},
	urldate = {2023-07-08},
	publisher = {arXiv},
	author = {Paria, Biswajit and Kandasamy, Kirthevasan and Póczos, Barnabás},
	month = jun,
	year = {2019},
	note = {arXiv:1805.12168 [cs, stat]},
	keywords = {Computer Science - Machine Learning, Statistics - Machine Learning},
}

@inproceedings{suzuki_multi-objective_2020,
	title = {Multi-objective {Bayesian} {Optimization} using {Pareto}-frontier {Entropy}},
	url = {https://proceedings.mlr.press/v119/suzuki20a.html},
	abstract = {This paper studies an entropy-based multi-objective Bayesian optimization (MBO). Existing entropy-based MBO methods need complicated approximations to evaluate entropy or employ over-simplification that ignores trade-off among objectives. We propose a novel entropy-based MBO called Pareto-frontier entropy search (PFES), which is based on the information gain of Pareto-frontier. We show that our entropy evaluation can be reduced to a closed form whose computation is quite simple while capturing the trade-off relation in Pareto-frontier. We further propose an extension for the “decoupled” setting, in which each objective function can be observed separately, and show that the PFES-based approach derives a natural extension of the original acquisition function which can also be evaluated simply. Our numerical experiments show effectiveness of PFES through several benchmark datasets, and real-word datasets from materials science.},
	language = {en},
	urldate = {2023-06-20},
	booktitle = {Proceedings of the 37th {International} {Conference} on {Machine} {Learning}},
	publisher = {PMLR},
	author = {Suzuki, Shinya and Takeno, Shion and Tamura, Tomoyuki and Shitara, Kazuki and Karasuyama, Masayuki},
	month = nov,
	year = {2020},
	note = {ISSN: 2640-3498},
	pages = {9279--9288},
}

@article{deufel_pnav_2020,
	title = {{PNaV}: {A} tool for generating a high-dose-rate brachytherapy treatment plan by navigating the {Pareto} surface guided by the visualization of multidimensional trade-offs},
	volume = {19},
	issn = {15384721},
	shorttitle = {{PNaV}},
	url = {https://linkinghub.elsevier.com/retrieve/pii/S1538472120300325},
	doi = {10.1016/j.brachy.2020.02.013},
	abstract = {PURPOSE: A Pareto Navigation and Visualization (PNaV) tool is presented for interactively constructing a high-dose-rate (HDR) brachytherapy treatment plan by navigating and visualizing the multidimensional Pareto surface. PNaV aims to improve treatment planning time and quality and is generalizable to any number of doseevolume histogram (DVH) and convex dose metrics.
METHODS AND MATERIALS: Pareto surface visualization and navigation were demonstrated for prostate, breast, and cervix HDR brachytherapy sites. A library of treatment plans was created to span the Pareto surfaces over a 30\% range of doses in each of ﬁve DVH metrics. The PNaV method, which uses a nonnegative least-squares model to interpolate the library plans, was compared against pure optimization for 11,250 navigated plans using data envelopment analysis. The visualization of the metric trade-offs was accomplished using numerically estimated partial derivatives to plot the local curvature of the Pareto surface. PNaV enables the user to control both the magnitude and direction of the trade-off during navigation.
RESULTS: Proof of principle of PNaV was demonstrated using a graphical user interface with visualization tools to enabled rapid plan selection and a quantitative review of metric trade-offs. PNaV produced deliverable plans with DVH metrics within ! 0.4\%, 0.6\%, and 1.1\% (95\% conﬁdence interval) of the Pareto surface using plan libraries with nominal plan spacing of 10\%, 15\%, and 30\% in each metric dimension, respectively. The interpolation used for the navigation executed in 0.1 s. The fast interpolation allows for quick and efﬁcient exploration of trade-off options by the physician, after an initial preprocessing step to generate the library.
CONCLUSIONS: Generation, visualization, and navigation of the Pareto surface were validated for brachytherapy treatment planning. The PNaV method enables efﬁcient and informed decision-making for radiotherapy. Ó 2020 American Brachytherapy Society. Published by Elsevier Inc. All rights reserved.},
	language = {en},
	number = {4},
	urldate = {2022-07-12},
	journal = {Brachytherapy},
	author = {Deufel, Christopher L. and Epelman, Marina A. and Pasupathy, Kalyan S. and Sir, Mustafa Y. and Wu, Victor W. and Herman, Michael G.},
	month = jul,
	year = {2020},
	pages = {518--531},
}

@article{belanger_gpu-based_2019,
	title = {A {GPU}-based multi-criteria optimization algorithm for {HDR} brachytherapy},
	volume = {64},
	issn = {1361-6560},
	url = {https://iopscience.iop.org/article/10.1088/1361-6560/ab1817},
	doi = {10.1088/1361-6560/ab1817},
	abstract = {Currently in HDR brachytherapy planning, a manual fine-tuning of an objective function is necessary to obtain case-specific valid plans. This study intends to facilitate this process by proposing a patient-specific inverse planning algorithm for HDR prostate brachytherapy: GPU-based multicriteria optimization (gMCO). Two GPU-based optimization engines including simulated annealing (gSA) and a quasi-Newton optimizer (gL-BFGS) were implemented to compute multiple plans in parallel. After evaluating the equivalence and the computation performance of these two optimization engines, one preferred optimization engine was selected for the gMCO algorithm. Five hundred sixty-two previously treated prostate HDR cases were divided into validation set (100) and test set (462). In the validation set, the number of Pareto optimal plans to achieve the best plan quality was determined for the gMCO algorithm. In the test set, gMCO plans were compared with the physician-approved clinical plans. Our results indicated that the optimization process is equivalent between gL-BFGS and gSA, and that the computational performance of gL-BFGS is up to 67 times faster than gSA. Over 462 cases, the number of clinically valid plans was 428 (92.6\%) for clinical plans and 461 (99.8\%) for gMCO plans. The number of valid plans with target V100 coverage greater than 95\% was 288 (62.3\%) for clinical plans and 414 (89.6\%) for gMCO plans. The mean planning time was 9.4 s for the gMCO algorithm to generate 1000 Pareto optimal plans. In conclusion, gL-BFGS is able to compute thousands of SA equivalent treatment plans within a short time frame. Powered by gL-BFGS, an ultra-fast and robust multi-criteria optimization algorithm was implemented for HDR prostate brachytherapy. Plan pools with various trade-offs can be created with this algorithm. A large-scale comparison against physician approved clinical plans showed that treatment plan quality could be improved and planning time could be significantly reduced with the proposed gMCO algorithm.},
	language = {en},
	number = {10},
	urldate = {2022-07-12},
	journal = {Physics in Medicine \& Biology},
	author = {Bélanger, Cédric and Cui, Songye and Ma, Yunzhi and Després, Philippe and Adam M Cunha, J and Beaulieu, Luc},
	month = may,
	year = {2019},
	pages = {105005},
}
\setlength{\itemindent}{-\leftmargin}
\makeatletter\renewcommand{\@biblabel}[1]{}\makeatother

\section*{Checklist}

\begin{enumerate}

  \item For all models and algorithms presented, check if you include:
  \begin{enumerate}
    \item A clear description of the mathematical setting, assumptions, algorithm, and/or model. Yes. Comprehensive descriptions of the setting, assumptions, and algorithms may be found in Sections \ref{background}, \ref{dense-sampling}, and \ref{sec:step-2-method}. Additional details are presented in Appendix \ref{step1-additional-details} and \ref{appendix-step-2-details}.
    \item An analysis of the properties and complexity (time, space, sample size) of any algorithm. Yes. For Step 1, Section \ref{dense-sampling} and Appendices \ref{step1-additional-details} and \ref{sec:computational-feasible-details} provide extensive analyses of the properties and motivation. For Step 2, Section \ref{sec:step-2-method} and Appendix \ref{appendix-step-2-details} provide details regarding the properties and complexity of the algorithm.
    \item (Optional) Anonymized source code, with specification of all dependencies, including external libraries. Yes, provided as part of the supplemental material.
  \end{enumerate}

  \item For any theoretical claim, check if you include:
  \begin{enumerate}
    \item Statements of the full set of assumptions of all theoretical results. Yes. Appendix \ref{app:definitions_theorems} explicitly mentions the assumptions made for the theoretical results.
    \item Complete proofs of all theoretical results. Yes. All proofs are provided in Appendix \ref{app:definitions_theorems}.
    \item Clear explanations of any assumptions. Yes. Appendix \ref{app:definitions_theorems} provides clear explanations of any assumptions.     
  \end{enumerate}

  \item For all figures and tables that present empirical results, check if you include:
  \begin{enumerate}
    \item The code, data, and instructions needed to reproduce the main experimental results (either in the supplemental material or as a URL). Yes, provided as part of the supplemental material.
    \item All the training details (e.g., data splits, hyperparameters, how they were chosen). Yes. Extensive training details for all of the experiments are provided in Appendix \ref{additional-experimental-results}. Additional details regarding number of trials and performance evaluation are listed in Appendix \ref{app:experimental-setup-details}.
    \item A clear definition of the specific measure or statistics and error bars (e.g., with respect to the random seed after running experiments multiple times). Yes. Such information is presented in each of the figures throughout this manuscript. Additional definitions are provided in Appendix \ref{app:subsubsec:baseline_methodologies}.
    \item A description of the computing infrastructure used. (e.g., type of GPUs, internal cluster, or cloud provider). Yes, these are provided in Appendix \ref{app:compute-resources}.
  \end{enumerate}

  \item If you are using existing assets (e.g., code, data, models) or curating/releasing new assets, check if you include:
  \begin{enumerate}
    \item Citations of the creator If your work uses existing assets. Yes. The models used for the LLM personalization experiment are properly cited in Appendix \ref{appendix:llm-concise-info}. Usage of the benchmarks and datasets for the other experiments are all properly cited in Appendix \ref{additional-experimental-results}.
    \item The license information of the assets, if applicable. Not Applicable
    \item New assets either in the supplemental material or as a URL, if applicable. Not Applicable
    \item Information about consent from data providers/curators. Yes. For the brachytherapy treatment planning data, consent is explicitly described in Appendix \ref{appendix:brachytherapy-details}.
    \item Discussion of sensible content if applicable, e.g., personally identifiable information or offensive content. Not Applicable
  \end{enumerate}

  \item If you used crowdsourcing or conducted research with human subjects, check if you include:
  \begin{enumerate}
    \item The full text of instructions given to participants and screenshots. Not Applicable
    \item Descriptions of potential participant risks, with links to Institutional Review Board (IRB) approvals if applicable. Not Applicable
    \item The estimated hourly wage paid to participants and the total amount spent on participant compensation. Not Applicable
  \end{enumerate}

\end{enumerate}

\clearpage
\appendix
\thispagestyle{empty}

\onecolumn
\aistatstitle{Supplementary Material}

\section{MODELING MULTI-OBJECTIVE TRADEOFFS WITH MONOTONIC UTILITY FUNCTIONS}\label{app:additional-details}

We articulate the differences between MoSH and prior methods in Table \ref{tab:descriptive-differences-table} below.

\begin{table}[htbp]
    \centering
    \caption{Descriptive Table Of Differences Comparing MoSH To Existing Methods.}
    \label{tab:descriptive-differences-table}
    \begin{tabular}{l p{3.5cm} p{2.5cm} l}
        \toprule
        \textbf{Method} & \textbf{Prior Specification} & \textbf{Cognitive Load for DM} & \textbf{Theoretical Guarantees?} \\
        \midrule
        \textbf{MoSH (Ours)} & Explicit (via MFs) & Low (High utility within just 5 points) & Yes (Coverage \& Optimality) \\
        \addlinespace %
        Pairwise Feedback \\ \citep{qian_learning_2015} \\ \citep{astudillo_multi-attribute_2020} \\ \citep{ozaki_multi-objective_2023} & Indirect (Iterative) & High (Requires many queries) & No \\
        \addlinespace
        Hypervolume-Based \\ \citep{ponweiser_multiobjective_2008} & No & Very High (Returns large sets) & Yes \\
        \addlinespace
        Random Scalarizations \\ \citep{paria_flexible_2019} & Yes (Not intuitive for DM) & High (Large sets, unintuitive prior) & Yes \\
        \bottomrule
    \end{tabular}
\end{table}

\section{STEP 1: DENSE PARETO FRONTIER SAMPLING WITH BAYESIAN OPTIMIZATION ADDITIONAL DETAILS}\label{step1-additional-details}
Here we describe the acquisition function used in line 7 of step 1, Algorithm \ref{algorithm:mosh}. For our experiments, we use the Upper Confidence Bound (UCB) heuristic. We define acq($u, \lambda_t, x$) $= s_{\lambda_t} (u_{\varphi(x)})$ where $\varphi(x) = \mu_t(x) + \sqrt{\beta_t} \sigma_t({x})$ and $\beta_t = \sqrt{0.125 \times \log(2\times t + 1)}$. For $\beta_t$, we followed the optimal suggestion in \citet{paria_flexible_2019}. We emphasize that \textit{our algorithm is agnostic to the scalarization and acquisition functions}.

\section{STEP 2: PARETO FRONTIER SPARSIFICATION ADDITIONAL DETAILS}\label{appendix-step-2-details}
\begin{theorem}~\citep{nemhauser_analysis_1978} In the case of any normalized, nomotonic submodular function $F$, the set $A_G$ obtained by the greedy algorithm achieves at least a constant fraction $(1-\dfrac{1}{e})$ of the objective value obtained by the optimal solution, that is,

\begin{center}
    $F(A_G) \geq (1-\dfrac{1}{e}) \max_{|A| \leq k} F(A)$
\end{center}
\end{theorem}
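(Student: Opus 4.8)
The plan is to reduce the statement to the classical greedy analysis for monotone submodular maximization under a cardinality constraint, invoking only the definition of submodularity given above together with monotonicity and normalization. Fix an optimal set $A^\star$ with $|A^\star| \leq k$ and write $\mathrm{OPT} = F(A^\star)$. Let $A_0 = \emptyset \subseteq A_1 \subseteq \dots \subseteq A_k = A_G$ be the nested sequence of sets produced by the greedy rule, where $A_{i+1} = A_i \cup \{e_i\}$ and $e_i$ maximizes the marginal gain $F(A_i \cup \{e\}) - F(A_i)$ over the remaining ground set.

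The central step I would establish is the per-iteration progress inequality: at every step the greedy gain closes at least a $1/k$ fraction of the remaining gap to the optimum, i.e. $F(A_{i+1}) - F(A_i) \geq \frac{1}{k}\big(\mathrm{OPT} - F(A_i)\big)$. To prove this, I would first use monotonicity to bound $\mathrm{OPT} = F(A^\star) \leq F(A^\star \cup A_i)$, and then telescope $F(A^\star \cup A_i) - F(A_i)$ over the elements of $A^\star \setminus A_i$ added one at a time. Submodularity (the defining diminishing-returns inequality) lets me upper bound each telescoped increment by the gain of adding that single element to $A_i$ alone; since $|A^\star \setminus A_i| \leq k$ and the greedy choice maximizes the single-element marginal gain, the entire sum is at most $k\,[F(A_{i+1}) - F(A_i)]$. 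Rearranging yields the progress inequality.

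From there the argument is a routine recursion. Setting $\delta_i = \mathrm{OPT} - F(A_i)$, the progress inequality reads $\delta_{i+1} \leq (1 - 1/k)\,\delta_i$, so by induction $\delta_k \leq (1 - 1/k)^k \delta_0$. Normalization $F(\emptyset) = 0$ gives $\delta_0 = \mathrm{OPT}$, and the elementary estimate $(1 - 1/k)^k \leq e^{-1}$ then delivers $F(A_G) = \mathrm{OPT} - \delta_k \geq (1 - 1/e)\,\mathrm{OPT}$, which is exactly the claim.

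The main obstacle is the per-iteration inequality, where monotonicity and submodularity must be used in tandem: the delicate point is justifying that the collective marginal contribution of the as-yet-unchosen optimal elements is dominated by $k$ copies of the best available single-element gain. This is the only non-mechanical part of the argument; the normalization step and the geometric recursion that follow are straightforward bookkeeping. Since the $(1 - 1/e)$ constant is known to be tight for this class of problems, I would not attempt to sharpen it.
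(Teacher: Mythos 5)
Your proof is correct and is precisely the classical Nemhauser--Wolsey--Fisher argument: the per-iteration inequality $F(A_{i+1}) - F(A_i) \geq \tfrac{1}{k}\left(\mathrm{OPT} - F(A_i)\right)$, obtained from monotonicity plus telescoped submodularity over $A^{\star}\setminus A_i$ (with monotonicity also ensuring the greedy gain is nonnegative, so bounding $|A^{\star}\setminus A_i|$ by $k$ is licensed), followed by the geometric recursion $(1-1/k)^k \leq e^{-1}$. The paper itself states this theorem only by citation to \citet{nemhauser_analysis_1978} and gives no proof of its own, and your argument matches the standard proof in that cited source.
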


Algorithm \ref{algorithm:gpc} describes the greedy submodular partial cover (GPC) algorithm in detail, used as a subroutine in Algorithm \ref{algorithm:saturate}.

\begin{algorithm}[H]
    \caption{Greedy Submodular Partial Cover (GPC) Algorithm \citep{krause_robust_2008}}
    \label{algorithm:gpc}
    \begin{algorithmic}[1]
    \Procedure{GPC}{$\Bar{F}_q, q$}
        \State $C \gets \emptyset$
        
        \While{$\Bar{F}_q(C) < q$}
            \State \textbf{foreach} $c \in D \setminus C$ \textbf{do} $\delta_c = \Bar{F}_q(C \cup \{c\}) - \Bar{F}_q(C)$
            
            \State $C \gets C \cup \{\operatorname*{argmax}_{c} \delta_c\}$
        \EndWhile %
        
        \State \Return $C$
    \EndProcedure
    \end{algorithmic}
\end{algorithm}

\section{REQUIRED PROOFS AND DEFINITIONS}
\label{app:definitions_theorems}

\begin{definition}[Pareto dominant]
    A solution $x_1 \in X$ is Pareto dominated by another point $x_2 \in X$ if and only if $f_\ell(x_1) \leq f_\ell(x_2)$ $\forall \ell \in [L]$ and $\exists \ell \in [L]$ s.t. $f_\ell(x_1) < f_\ell(x_2)$~\citep{paria_flexible_2019}.
\end{definition}

\begin{definition}[Submodular]
    $F$ is submodular if and only if for all $A \subseteq B \subseteq V$ and $s \in V \text{\textbackslash} B$ it holds that $F(A \cup \{s\})-F(A) \geq F(B \cup \{s\}) - F(B)$
\end{definition}

\begin{theorem}
    Consider finite sets $\Omega$, $\Lambda$ and function $f: \Omega \times \Lambda \rightarrow \mathbb{R}$. Fix $\lambda \in \Lambda$. Then, given $C\subset\Omega$, the set function
    \begin{equation*}
        F(C) := \frac{\max_{c\in C} f(c, \lambda)}{\max_{c\in\Omega} f(c, \lambda)}
    \end{equation*}
    is submodular.
\end{theorem}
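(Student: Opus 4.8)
The plan is to strip away the denominator first and then reduce submodularity to a one-line monotonicity fact about the $\max$ operator. Since $\lambda$ is fixed and $\Omega$ is finite, the quantity $Z := \max_{c \in \Omega} f(c, \lambda)$ is a constant independent of $C$; assuming $Z > 0$ (which holds in our setting, where the \hsf utilities on the relevant feasible region are positive), dividing by $Z$ only rescales $F$ by a positive constant. Because scaling a submodular function by a positive constant preserves submodularity, it suffices to prove that the set function $g(C) := \max_{c \in C} f(c, \lambda)$ is submodular.

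To verify that $g$ is submodular I would invoke the marginal-gain characterization from the Definition above: fix $A \subseteq B \subseteq \Omega$ and $s \in \Omega \setminus B$, and write $a := g(A)$, $b := g(B)$, and $v := f(s, \lambda)$. Since $A \subseteq B$, every element of $A$ also lies in $B$, so $a \le b$. The key observation is that adjoining $s$ to any set $S$ simply takes the maximum against $v$, i.e. $g(S \cup \{s\}) = \max\{g(S), v\}$. Hence the two marginal gains can be written as hinge (positive-part) functions:
\begin{equation*}
    g(A \cup \{s\}) - g(A) = \max\{a, v\} - a = \max\{0,\, v - a\}, \qquad g(B \cup \{s\}) - g(B) = \max\{0,\, v - b\}.
\end{equation*}

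The submodular inequality $g(A \cup \{s\}) - g(A) \ge g(B \cup \{s\}) - g(B)$ then reduces to showing $\max\{0, v-a\} \ge \max\{0, v-b\}$. This follows immediately: $a \le b$ gives $v - a \ge v - b$, and the map $t \mapsto \max\{0, t\}$ is monotonically nondecreasing, so the inequality is preserved. Rescaling by $1/Z > 0$ recovers the claim for $F$. The monotonicity step is essentially trivial, so the only real care is needed at the boundary: the empty-set case requires a convention for $\max_{c \in \emptyset} f(c, \lambda)$ (taking it to be $0$ is consistent with the normalization $F_{\pmb{\lambda}}(\emptyset) = 0$ asserted in Lemma~\ref{utility-ratio-submodular-lemma}, provided the utilities are nonnegative on the feasible region), and one must confirm $Z > 0$ so that the division does not flip the direction of the inequality. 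These bookkeeping conventions, rather than the core argument, are the main obstacle.
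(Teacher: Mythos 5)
Your proof is correct, and it takes a genuinely different route from the paper's. The paper keeps the ratio intact and argues by an exhaustive case analysis on the location of the global maximizer $x^{*}=\argmax_{c\in\Omega}f(c,\lambda)$ --- whether it lies in $X$, in $Y\backslash X$, in $\Omega\backslash Y$ distinct from the added element, or equals the added element --- verifying the marginal-gain inequality separately in each case. You instead observe that the denominator $Z=\max_{c\in\Omega}f(c,\lambda)$ is a constant, reduce (for $Z>0$) to submodularity of the unnormalized $g(C)=\max_{c\in C}f(c,\lambda)$, and dispatch that in one line via the identity $g(S\cup\{s\})=\max\{g(S),v\}$, so both marginal gains are hinge functions $\max\{0,\,v-g(S)\}$, nonincreasing in $g(S)$. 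Your decomposition is shorter, avoids the case split, and --- the real dividend --- isolates exactly where positivity enters: $g$ is submodular for arbitrary real-valued $f$, and only the division by $Z$ requires $Z>0$. The paper uses this hypothesis silently: its case analysis repeatedly invokes monotonicity of $F$ (e.g.\ ``$X\subseteq X\cup\{x\}$, thus $F(X\cup\{x\})-F(X)\geq 0$'' and ``since $X\subseteq Y$, $F(X)\leq F(Y)$''), which fails when $Z<0$; indeed the theorem as literally stated, for arbitrary $f:\Omega\times\Lambda\to\mathbb{R}$, is false in that regime --- take $\Omega=\{c_1,c_2,c_3\}$ with $f$-values $-3,-2,-1$, so $Z=-1$, and $A=\{c_1\}$, $B=\{c_1,c_3\}$, $s=c_2$: the marginal gains of $F$ are $-1$ and $0$, violating submodularity. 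So the conventions you flag ($Z>0$, and $\max_{c\in\emptyset}f(c,\lambda):=0$ together with nonnegativity of the utilities, consistent with the normalization $F_{\pmb{\lambda}}(\emptyset)=0$ asserted in Lemma~\ref{utility-ratio-submodular-lemma}) are not mere bookkeeping but hypotheses the paper's own proof also needs and leaves implicit; in the intended application they hold once the \hsf utilities are normalized to be nonnegative on the feasible region.
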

\begin{proof}
    Let $X \subseteq Y \subseteq \Omega$ and $x\in\Omega\backslash Y$ (note: it clearly follows that $x\notin X$). Consider $x^{*} := \max_{c\in\Omega} f(c, \lambda)$. We obtain the following mutually exclusive and exhaustive cases, in which we use the fact that for any sets $A, B$ such that $A\subseteq B$, $\max_{a\in A} f(a) \leq \max_{b\in B} f(b)$ for any function $f$.
    \begin{itemize}
        \item $x^{*} \in X \subseteq Y$. By definition, $F(X\cup \{x\}) - F(X) = F(Y\cup\{x\}) - F(Y) = 0$.
        \item $x^{*} \in Y\backslash X$. By definition, $F(Y\cup\{x\}) - F(Y) = 0$. Clearly $X \subseteq X\cup\{x\}$, thus $F(X\cup\{x\}) - F(X) \geq 0 = F(Y\cup\{x\}) - F(Y)$.
        \item $x^{*} \in \Omega\backslash Y$ and $x^{*} \neq x$. 
        \begin{itemize}
            \item If $F(x, \lambda) \geq \max_{c\in Y} f(c, \lambda)$, then $F(X \cup \{x\}) = F(Y \cup \{x\}) = F(x, \lambda)$.
            \item If $F(x, \lambda) \leq \max_{c\in X} f(c, \lambda)$, $F(X \cup\{x\}) = F(X)$ and $F(Y\cup\{x\}) = F(Y)$.
            \item Finally, if $\max_{c\in X} f(c, \lambda) \leq F(x, \lambda) \leq \max_{c\in Y} f(c, \lambda)$, then $F(Y\cup\{x\}) = F(Y)$ and $F(X\cup\{x\}) \geq F(X)$.
        \end{itemize}
        Combining with the fact since $X\subseteq Y$, $F(X) \leq F(Y)$, in all of the above sub-cases, we indeed have $F(X\cup\{x\}) - F(X) \geq F(Y\cup\{x\}) - F(Y)$.
        \item $x^{*} = x$. As $F(X\cup\{x\}) = F(Y\cup\{x\})$, and $F(X) \leq F(Y)$, we automatically get $F(X\cup\{x\}) - F(X) \geq F(Y\cup\{x\}) - F(Y)$.
    \end{itemize}
    Therefore, we satisfy the definition of submodularity.
\end{proof}

\begin{definition}[Instantaneous \mf Regret]
    $r(x_t, \pmb{\lambda}_t) = 1 - \dfrac{s_{\pmb{\lambda}_t}(u_f(x_t))}{\max_{x \in X}s_{\pmb{\lambda}_t}(u_f(x_t))}$
\end{definition}

\begin{definition}[Cumulative \mf Regret]
    $R_C (T) = \sum_{t=1}^{T} r(x_t, \pmb{\lambda}_t)$
\end{definition}

\begin{definition}[Bayes \mf Regret and Utility Ratio]
    $R_B (T) = \mathbb{E}_{\lambda \sim p(\lambda)}[1 - \dfrac{\max_{x \in D_T} s_{\pmb{\lambda}}(u_f(x))}{\max_{x \in X}s_{\pmb{\lambda}}(u_f(x))}]$ where $D_T = \{x_t\}_{t=1}^T$. Additionally, $U_B (T) = \mathbb{E}_{\lambda \sim p(\lambda)}[\dfrac{\max_{x \in D^T} s_{\pmb{\lambda}}(u_f(x))}{\max_{x \in X}s_{\pmb{\lambda}}(u_f(x))}]$, where $U_B(T)$ is the Bayes \mf Utility Ratio after $T$ iterations.
\end{definition}

\begin{definition}[Expected Bayes \mf Regret and Expected Cumulative \mf Regret]
    Similar to \citep{paria_flexible_2019}, $\mathbb{E}R_B(T)$ is the expected Bayes \mf Regret, with the expectation being taken over $f$, noise $\epsilon$, and other sources of randomness. Likewise, $\mathbb{E}R_C(T)$ is the expected Cumulative \mf Regret, with the expectation being taken over $f$, noise $\epsilon$, and $\pmb{\lambda}_t$.
\end{definition}

\begin{definition}[Maximum Information Gain]
    We leverage this definition from \citep{paria_flexible_2019}. The maximum information gain after T observations measures the notion of information gained about random process $f$ after observing some set of points $A$, and is defined as:
    \begin{equation}
        \gamma_T = \max_{A \subset X: |A| = T} I(y_A; f)
    \end{equation}
\end{definition}

\begin{definition}[Lipschitz Condition]\label{app:def:lipschitz_input}
    We assume the following is $M_{\pmb{\lambda}}$-Lipschitz in the $\ell_1$-norm for all $\pmb{\lambda} \in \Lambda$, 
    \begin{equation}
        \Big\vert\dfrac{s_{\pmb{\lambda}}(y_1)}{\max_{y \in \text{Im}(u_f)} s_{\pmb{\lambda}}(y)} - \dfrac{s_{\pmb{\lambda}}(y_2)}{\max_{y \in \text{Im}(u_f)} s_{\pmb{\lambda}}(y)}\Big\vert \leq M_{\pmb{\lambda}} \|y_1 - y_2\|_1
    \end{equation}
    where $y \in \mathbb{R}^L$ corresponds to $u_f(x)$.
\end{definition}

\begin{definition}[Lipschitz Condition]\label{app:def:lipschitz_lambda}
    We assume the following is J-Lipschitz in $\pmb{\lambda}$ for all $y \in \mathbb{R}^L$.
    \begin{equation}
        \Big\vert\dfrac{s_{\pmb{\lambda}_1}(y)}{\max_{y' \in \text{Im}(u_f)} s_{\pmb{\lambda}_1}(y')} - \dfrac{s_{\pmb{\lambda}_2}(y)}{\max_{y' \in \text{Im}(u_f)} s_{\pmb{\lambda}_2}(y')} \Big\vert \leq J \|\pmb{\lambda}_1 - \pmb{\lambda}_2\|_1
    \end{equation}
\end{definition}

\begin{definition}[Regret Bounds]
    We follow similar notation as in \citep{paria_flexible_2019}. Assume that $\forall \ell \in [L], t \in [T], x \in \mathcal{X}$, each objective $f_\ell(x)$ follows a Gaussian distribution with marginal variances upper bounded by 1, and the observation noise $\epsilon_{t\ell} \sim \mathcal{N}(0, \sigma_\ell^2)$ is drawn independently of everything else. We assume upper bounds $M_\lambda \leq M$, $\sigma_\ell^2 \leq \sigma^2$, $\gamma_{T\ell} \leq \gamma_T$, where $\gamma_{T\ell}$ is the maximum information gain for the $\ell$th objective. We assume $\mathcal{X} \subseteq [0,1]^d$. Furthermore, let $x^*_t = \argmax_{x \in X}s_{\pmb{\lambda_t}}(u_f(x))$. We denote by $U_t(\pmb{\lambda}, x) = s_{\pmb{\lambda}} (u_{\varphi(x)})$ where $\varphi(x) = \mu_t(x) + \sqrt{\beta_t} \sigma_t({x})$. Finally, the history until T-1 is denoted as $\mathcal{H}_t$, i.e. $\{(x_t, y_t, \pmb{\lambda}_t)\}_{t=1}^{T-1}.$
\end{definition}

\begin{theorem}
    The expected cumulative \mf regret for dense sampling after T observations can be upper bounded for both Upper Confidence Bound and Thompson Sampling as,
    \begin{equation*}
        \mathbb{E} R_C(T) = O(M[\dfrac{L^2 T d \gamma_T ln T}{ln (1 + \sigma^{-2})}]^{1/2})
    \end{equation*}
\end{theorem}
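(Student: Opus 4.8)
The plan is to follow the GP-UCB / random-scalarization regret analysis of \citet{paria_flexible_2020}, adapting it to (i) the normalized (ratio) form of the SHF regret and (ii) the extra SHF transformation $u$ that sits between the objectives $f_\ell$ and the scalarization $s_{\pmb{\lambda}}$. Throughout I write $\tilde{g}_{\pmb{\lambda}}(y) := s_{\pmb{\lambda}}(y)/\max_{y' \in \mathrm{Im}(u_f)} s_{\pmb{\lambda}}(y')$ for the normalized scalarization and $u(\cdot)$ for the elementwise SHF map, so that the instantaneous regret is exactly the difference $r(x_t,\pmb{\lambda}_t) = \tilde{g}_{\pmb{\lambda}_t}(u(f(x_t^*))) - \tilde{g}_{\pmb{\lambda}_t}(u(f(x_t)))$, with $x_t^* = \argmax_{x\in X} s_{\pmb{\lambda}_t}(u_f(x))$. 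Note that the $\pmb{\lambda}$-Lipschitz condition of Definition~\ref{app:def:lipschitz_lambda} is not needed here, since $R_C(T)$ is summed over the actually-sampled $\pmb{\lambda}_t$ and requires no discretization of $\Lambda$.

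First I would establish the standard GP confidence bounds: with a schedule $\beta_t = \Theta(d\log t)$, conditioned on the history $\mathcal{H}_t$ each $f_\ell(x)$ lies in $[\mu_{t\ell}(x) - \sqrt{\beta_t}\sigma_{t\ell}(x),\ \mu_{t\ell}(x) + \sqrt{\beta_t}\sigma_{t\ell}(x)]$ (for UCB, with high probability after a covering argument over $\mathcal{X} \subseteq [0,1]^d$ — this is precisely where the factor $d$ enters). This gives $f_\ell(x) \le \varphi_{t\ell}(x)$ and $|\varphi_{t\ell}(x_t) - f_\ell(x_t)| \le 2\sqrt{\beta_t}\,\sigma_{t\ell}(x_t)$. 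Next I would derive the per-step bound: since $u$ and $s_{\pmb{\lambda}}$ are monotone increasing, $f_\ell(x_t^*) \le \varphi_{t\ell}(x_t^*)$ implies $\tilde{g}_{\pmb{\lambda}_t}(u(f(x_t^*))) \le \tilde{g}_{\pmb{\lambda}_t}(u(\varphi_t(x_t^*)))$, and since $x_t$ maximizes the acquisition $\tilde{g}_{\pmb{\lambda}_t}(u(\varphi_t(\cdot)))$, this is $\le \tilde{g}_{\pmb{\lambda}_t}(u(\varphi_t(x_t)))$. Hence $r(x_t,\pmb{\lambda}_t) \le \tilde{g}_{\pmb{\lambda}_t}(u(\varphi_t(x_t))) - \tilde{g}_{\pmb{\lambda}_t}(u(f(x_t)))$. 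Applying the Lipschitz condition of Definition~\ref{app:def:lipschitz_input} with $y_1 = u(\varphi_t(x_t))$, $y_2 = u(f(x_t))$, together with the Lipschitzness of the piecewise-linear map $u$ (slope at most $1/(\alpha_{\ell,S}-\alpha_{\ell,H})$ on the finite region above the hard bound), yields $r(x_t,\pmb{\lambda}_t) \le 2M\sqrt{\beta_t}\sum_\ell \sigma_{t\ell}(x_t)$ up to the $u$-constant.

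Finally I would sum over $t$, apply Cauchy--Schwarz, and invoke the standard information-gain inequality $\sum_t \sigma_{t\ell}^2(x_t) \le \tfrac{2}{\log(1+\sigma_\ell^{-2})}\gamma_{T\ell}$ to obtain $\sum_t \sigma_{t\ell}(x_t) \le \sqrt{2T\gamma_T/\log(1+\sigma^{-2})}$; summing over the $L$ objectives produces the $L^2$ inside the root, while $\beta_T = \Theta(d\log T)$ supplies the $d\log T$ factor, giving the claimed rate. For Thompson sampling I would replace the deterministic confidence step with the posterior-matching identity: conditioned on $\mathcal{H}_t$ and $\pmb{\lambda}_t$, the true maximizer $x_t^*$ and the sampled maximizer $x_t$ are identically distributed, so $\mathbb{E}[U_t(\pmb{\lambda}_t, x_t^*)] = \mathbb{E}[U_t(\pmb{\lambda}_t, x_t)]$, and the same decomposition goes through in expectation.

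The main obstacle I expect is the interface between the GP confidence intervals, which live in objective space on the $f_\ell$, and the regret bound, which is stated in utility space on $u_f$: one must argue that the SHF map $u$ is well-behaved enough — Lipschitz on the relevant range, with the $-\infty$ branch never active at the comparison points $x_t,x_t^*$ — so that objective-space uncertainty translates cleanly into utility-space regret, and that the resulting composite constant is bounded uniformly in $\pmb{\lambda}$ (folding into $M$ and the $O(\cdot)$). The remaining machinery, namely the covering argument producing the $d$ factor and the information-gain bound producing $\gamma_T$, is standard GP-UCB analysis carried over from \citet{paria_flexible_2020}.
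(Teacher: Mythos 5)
Your proposal is correct and follows essentially the same route as the paper's proof: both adapt the analysis of \citet{paria_flexible_2020} by decomposing the normalized SHF regret through the UCB/acquisition comparison (the paper's terms $B1$--$B3$ with a per-step discretization $[x_t^*]_t$, your covering argument being the same mechanism producing the factor $d$), transferring confidence widths through the $M_{\pmb{\lambda}}$-Lipschitz condition of Definition~\ref{app:def:lipschitz_input}, and closing with Cauchy--Schwarz plus the maximum-information-gain bound, with Thompson sampling handled by the same posterior-matching identity. The obstacle you flag --- Lipschitzness of the SHF map $u$ and inactivity of its $-\infty$ branch at the comparison points --- is exactly what the paper absorbs into its Lipschitz assumption over $\mathrm{Im}(u_f)$ (verified in the appendix lemmas for the augmented Chebyshev scalarization), so your argument matches the paper's in both structure and level of rigor.
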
    

\begin{proof}
    \begin{align*}
        \mathbb{E}R_C(T) = \mathbb{E}\left[\sum_{t=1}^T \left(1 - \frac{s_{\lambda_t}(u_f(x_t))}{\max_{x\in X} s_{\lambda_t}(u_f(x))}\right)\right] \\
        = \mathbb{E}\left[\sum_{t=1}^T \left(\dfrac{\max_{x\in X} s_{\lambda_t}(u_f(x))}{\max_{x\in X} s_{\lambda_t}(u_f(x))} - \frac{s_{\lambda_t}(u_f(x_t))}{\max_{x\in X} s_{\lambda_t}(u_f(x))}\right)\right]
    \end{align*}
    
    Using Lemma 5 from \citet{paria_flexible_2019}, we have the following decomposition for UCB:
    \begin{align*}
    \mathbb{E}R_C(T) &\leq \underbrace{\mathbb{E}\left[\sum_{t=1}^T \dfrac{U_t(\lambda_t,x_t)}{\max_{x\in X} s_{\lambda_t}(u_f(x))} - \dfrac{s_{\lambda_t}(u_f(x_t))}{\max_{x\in X} s_{\lambda_t}(u_f(x))}\right]}_{B1} + \\
    &\underbrace{\quad \mathbb{E}\left[\sum_{t=1}^T \dfrac{s_{\lambda_t}(u_f([x_t^*]_t))}{\max_{x\in X} s_{\lambda_t}(u_f(x))} - \dfrac{U_t(\lambda_t,[x_t^*]_t)}{\max_{x\in X} s_{\lambda_t}(u_f(x))}\right]}_{B2} + \\
    &\underbrace{\quad \mathbb{E}\left[\sum_{t=1}^T \dfrac{s_{\lambda_t}(u_f(x_t^*))}{\max_{x\in X} s_{\lambda_t}(u_f(x))} - \dfrac{s_{\lambda_t}(u_f([x_t^*]_t))}{\max_{x\in X} s_{\lambda_t}(u_f(x))}\right]}_{B3}
    \end{align*}

    By using Definition 12, we extend Lemma 3 from \citet{paria_flexible_2019} and obtain the following:

    \begin{lemma}
    \begin{align*}
        &\mathbb{E}\left[\sum_{t=1}^T \frac{U_t(\lambda_t,x_t)}{\max_{x\in X} s_{\lambda_t}(u_f(x))} - \frac{s_{\lambda_t}(u_f(x_t))}{\max_{x\in X} s_{\lambda_t}(u_f(x))}\right] \\
        &\quad \leq \mathbb{E}\left[\left(L \beta_T \sum_{t=1}^T M^2_{\pmb{\lambda}_t}\right)^{1/2} \left(\sum_{\ell=1}^{L} \frac{\gamma_{T\ell}}{\ln(1+\sigma_\ell^{-2})}\right)^{1/2}\right] \\
        &\quad \quad +\frac{\pi^2}{6}\frac{L\mathbb{E}[M_{\pmb{\lambda}}]}{|X|}
    \end{align*}
\end{lemma}

    By using Definition 12, we extend Lemma 2 from \citep{paria_flexible_2019} and obtain the following:

    \begin{lemma}
        $\mathbb{E}\left[\sum_{t=1}^T \dfrac{s_{\lambda_t}(u_f(x^*_t))}{\max_{x\in X} s_{\lambda_t}(u_f(x))} - \dfrac{U_t(\lambda_t,x^*_t)}{\max_{x\in X} s_{\lambda_t}(u_f(x))} - \right] \leq \dfrac{\pi^2}{6} \mathbb{E}[M_{\pmb{\lambda}}] L$
    \end{lemma}

    By using Definition 12, we extend Equation (17) from \citep{paria_flexible_2019} and obtain the following:

    \begin{lemma}
        $\mathbb{E}\left[|\dfrac{s_{\lambda}(u_f(x))}{\max_{x\in X} s_{\lambda_t}(u_f(x))} - \dfrac{s_{\lambda}(u_f([x]_t))}{\max_{x\in X} s_{\lambda_t}(u_f(x))}|\right] \leq L \mathbb{E} [M_{\pmb{\lambda}}] \dfrac{1}{t^2}$
    \end{lemma}
    
    Finally, we use Lemma 16 to bound B1, Lemma 17 to bound B2, Lemma 18 to bound B3, and obtain:
    \begin{equation}
    \mathbb{E}R_C(T) \leq C_1 L \mathbb{E}[M_{\pmb{\lambda}}] + C_2 \Bar{M}_{\pmb{\lambda}} \left(LT(d\ln T + d\ln d)\sum_{\ell=1}^L \dfrac{\gamma_{T\ell}}{\ln(1+\sigma_\ell^{-2})}\right)^{1/2}
    \end{equation}
    
    which converges to 0 as $T \to \infty$.
\end{proof}

\begin{theorem}
    The expected Bayes \mf regret can be upper bounded as: 
    \begin{equation*}
        \mathbb{E}R_B(T) \leq \frac{1}{T}\mathbb{E}R_C(T) + o(1)
    \end{equation*}
    As a result, showing that the \mf utility ratio converges to 1 as $T \to \infty$ during dense sampling.
\end{theorem}
\begin{proof}
    
    We assume that $\Lambda$ is a bounded subset of a normed linear space. Following the approach in \citet{paria_flexible_2019}, we begin by relating the sampling distribution to the empirical distribution. Let $\hat{p}$ denote the empirical distribution corresponding to the samples $\{\lambda_t\}_{t=1}^T$. Consider the Wasserstein (Earth Mover's) distance between the sampling distribution $p(\lambda)$ and $\hat{p}$:
    
    \begin{equation}
    W_1(p,\hat{p}) = \inf_q\{\mathbb{E}_q\|X - Y\|_1 : q(X)=p, q(Y)=\hat{p}\}
    \end{equation}
    
    where $q$ is a joint distribution on random variables $X,Y$ with marginals $p$ and $\hat{p}$ respectively.
    
    We can then use Definition 13 to bound the following in the \mf setting:
    \begin{align*}
    \frac{1}{T}\sum_{t=1}^T (1- \dfrac{s_{\lambda_t}(u_f(x_t))}{\max_{x \in X} s_{\lambda_t} (u_f(x))}) - \mathbb{E}\left[1- \dfrac{\max_{x \in D} s_{\lambda_t}(u_f(x_t))}{\max_{x \in X} s_{\lambda_t} (u_f(x))}\right] \\
    \geq \frac{1}{T}\sum_{t=1}^T (1- \dfrac{\max_{x \in D} s_{\lambda_t}(u_f(x))}{\max_{x \in X} s_{\lambda_t} (u_f(x))}) - \mathbb{E}\left[1- \dfrac{\max_{x \in D} s_{\lambda_t}(u_f(x))}{\max_{x \in X} s_{\lambda_t} (u_f(x))} \right] \\
    \geq \mathbb{E}_{q(Z,Y)}\left[(1- \dfrac{\max_{x \in D} s_{Y}(u_f(x))}{\max_{x \in X} s_{Y} (u_f(x))}) - (1- \dfrac{\max_{x \in D} s_{Z}(u_f(x))}{\max_{x \in X} s_{Z} (u_f(x))}) \right] \\
    \geq -\mathbb{E}_{q(Z,Y)}\{J\|Z - Y\|_1\}
    \end{align*}
    
    Taking the infimum with respect to $q$ and expectation with respect to the history $\mathcal{H}_t$:
    
    \begin{equation}
    \mathbb{E}\left[\frac{1}{T}\sum_{t=1}^T (1- \dfrac{s_{\lambda_t}(u_f(x_t))}{\max_{x \in X} s_{\lambda_t} (u_f(x))})\right] - \mathbb{E}\left[1- \dfrac{\max_{x \in D} s_{\lambda_t}(u_f(x_t))}{\max_{x \in X} s_{\lambda_t} (u_f(x))}\right] \geq -J\mathbb{E}W_1(p,\hat{p})
    \end{equation}
    
    Using the fact that $\mathbb{E}[\max_{x\in\mathcal{X}} s_\lambda(u_f(x))] = \mathbb{E}[\max_{x\in\mathcal{X}} s_{\lambda_t}(u_f(x))]$, we obtain:
    
    \begin{equation}
    \mathbb{E}R_B(T) \leq \frac{1}{T}\mathbb{E}R_C(T) + J\mathbb{E}W_1(p,\hat{p})
    \end{equation}
    
    By Theorem 15 and results from \citet{paria_flexible_2019}, the first term converges to zero at rate $O^*(T^{-1/2})^4$ and the second term converges to zero at rate $O^*(T^{-1/D})$ for $D \geq 2$ \citep{canas_learning_2012}, where $D$ is the dimension of $\Lambda$. As a result, $\mathbb{E}R_B(T) \to 0$ as $T \to \infty$. Since $R_B(T)$ is the inverse of $U_B(T)$, from Definition 9, $\mathbb{E}U_B(T) \to 1$ as $T \to \infty$.
\end{proof}

\begin{lemma}
    For a fixed $\pmb{\lambda}\in\Lambda$, the augmented Chebyshev scalarization function $s_{\pmb{\lambda}}(y) = -\max_{\ell\in[L]}\{\pmb{\lambda}_{\ell}\lvert y_{\ell} - z^{*}_{\ell}\rvert\} - \gamma\sum_{\ell=1}^L \lvert y_{\ell} - z^{*}_{\ell}\rvert$, as described in Section~\ref{app:subsubsec:baseline_methodologies}, satisfies the assumption in Definition~\ref{app:def:lipschitz_input}.
\end{lemma}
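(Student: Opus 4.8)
The plan is to reduce the claim to showing that the unnormalized scalarization $s_{\pmb{\lambda}}$ is itself Lipschitz in the $\ell_1$-norm. Since $\pmb{\lambda}$ is held fixed, the denominator $S^{*}_{\pmb{\lambda}} := \max_{y \in \text{Im}(u_f)} s_{\pmb{\lambda}}(y)$ is a constant, so the left-hand side of Definition~\ref{app:def:lipschitz_input} equals $\lvert s_{\pmb{\lambda}}(y_1) - s_{\pmb{\lambda}}(y_2)\rvert / \lvert S^{*}_{\pmb{\lambda}}\rvert$. Thus if $s_{\pmb{\lambda}}$ has Lipschitz constant $L_{\pmb{\lambda}}$, the normalized map has constant $M_{\pmb{\lambda}} = L_{\pmb{\lambda}} / \lvert S^{*}_{\pmb{\lambda}}\rvert$, provided $S^{*}_{\pmb{\lambda}} \neq 0$. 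The latter holds because $z^{*}$ is an (unattainable) ideal reference point, so $s_{\pmb{\lambda}}$ is strictly negative on $\text{Im}(u_f)$, keeping the normalizer bounded away from zero.

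First I would decompose $s_{\pmb{\lambda}}(y) = -g(y) - \gamma\, h(y)$, where $g(y) = \max_{\ell\in[L]} \lambda_{\ell}\lvert y_{\ell} - z^{*}_{\ell}\rvert$ is the Chebyshev term and $h(y) = \sum_{\ell=1}^{L}\lvert y_{\ell} - z^{*}_{\ell}\rvert$ is the augmentation term, and bound each separately. For the augmentation term, the reverse triangle inequality gives $\bigl\lvert\, \lvert y_{1,\ell} - z^{*}_{\ell}\rvert - \lvert y_{2,\ell} - z^{*}_{\ell}\rvert \,\bigr\rvert \leq \lvert y_{1,\ell} - y_{2,\ell}\rvert$ coordinatewise, and summing over $\ell$ yields $\lvert h(y_1) - h(y_2)\rvert \leq \|y_1 - y_2\|_1$, so $h$ is $1$-Lipschitz in the $\ell_1$-norm.

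Next I would handle the Chebyshev term. Each coordinate map $y \mapsto \lambda_{\ell}\lvert y_{\ell} - z^{*}_{\ell}\rvert$ is $\lambda_{\ell}$-Lipschitz in $\ell_1$ by the same reverse-triangle argument (it depends only on coordinate $\ell$), and the pointwise maximum of finitely many Lipschitz functions is Lipschitz with constant equal to the largest of the individual constants. Hence $g$ is $\bigl(\max_{\ell}\lambda_{\ell}\bigr)$-Lipschitz. Combining via the triangle inequality, $s_{\pmb{\lambda}}$ is Lipschitz with constant $L_{\pmb{\lambda}} = \max_{\ell}\lambda_{\ell} + \gamma$, and dividing by $\lvert S^{*}_{\pmb{\lambda}}\rvert$ exhibits the required $M_{\pmb{\lambda}} = (\max_{\ell}\lambda_{\ell} + \gamma)/\lvert S^{*}_{\pmb{\lambda}}\rvert$.

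The computation itself is elementary, so there is no deep obstacle; the two points that deserve explicit care are (i) justifying the max-of-Lipschitz-functions bound cleanly — which I would do by noting $g(y_1) - g(y_2) \leq \lambda_{\ell^{*}}\lvert y_{1,\ell^{*}} - z^{*}_{\ell^{*}}\rvert - \lambda_{\ell^{*}}\lvert y_{2,\ell^{*}} - z^{*}_{\ell^{*}}\rvert$ for the maximizing index $\ell^{*}$ of $y_1$, then symmetrizing — and (ii) confirming that $S^{*}_{\pmb{\lambda}}$ is bounded away from zero so that $M_{\pmb{\lambda}}$ is finite. Both are minor but should be stated to make the Lipschitz constant well-defined.
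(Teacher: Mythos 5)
Your proof is correct, and it reaches the result by a genuinely more elementary route than the paper. Both arguments share the same reduction: for fixed $\pmb{\lambda}$ the normalizer $\max_{y\in\text{Im}(u_f)} s_{\pmb{\lambda}}(y)$ is a constant, so it suffices to show the unnormalized $s_{\pmb{\lambda}}$ is Lipschitz in $y$. Where you diverge is in how that Lipschitzness is established: the paper computes the piecewise partial derivatives $\partial s_{\pmb{\lambda}}/\partial y_{\ell}$, bounds the gradient, and invokes the Mean Value Theorem, whereas you bound the two terms directly --- the augmentation term via the coordinatewise reverse triangle inequality (giving $\gamma$-Lipschitz in $\ell_1$) and the Chebyshev term via the standard max-of-Lipschitz-functions argument (giving $\max_{\ell}\lambda_{\ell}$-Lipschitz). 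Your route buys two things. First, it sidesteps a technical wrinkle in the paper's argument: $s_{\pmb{\lambda}}$ is not differentiable on the kink sets where the maximizing index switches or an absolute value vanishes, so the MVT step as written is informal and would need a piecewise or Rademacher-type patch; your inequalities hold everywhere with no smoothness needed. Second, you produce an explicit constant $M_{\pmb{\lambda}} = (\max_{\ell}\lambda_{\ell}+\gamma)/\lvert S^{*}_{\pmb{\lambda}}\rvert$ correctly matched to the $\ell_1$-norm in Definition~\ref{app:def:lipschitz_input}, where the paper leaves $C_{\pmb{\lambda}}$ implicit. You also explicitly flag that the normalizer must be nonzero, which the paper passes over silently; your justification is essentially right but stated slightly loosely --- strict negativity of $s_{\pmb{\lambda}}$ pointwise gives $\lvert S^{*}_{\pmb{\lambda}}\rvert > 0$ only if the supremum over $\text{Im}(u_f)$ is attained (or otherwise bounded away from $0$), which here follows from the boundedness of $\text{Im}(u_f)$ noted in the paper when $f(x)\geq\alpha_H$ together with $z^{*}$ being strictly utopian. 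What the paper's derivative-based route buys in exchange is stylistic uniformity: the same differentiation machinery is reused in the companion lemma for Definition~\ref{app:def:lipschitz_lambda}, where Danskin's theorem makes the gradient computation genuinely necessary.
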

\begin{proof}
    Let $\pmb{\lambda}\in\Lambda$. Recall $\Lambda := \Delta^L$, thus is bounded. Furthermore, $\text{Im}(u_f)$ is bounded. First, we demonstrate that $s_{\pmb{\lambda}}(y)$ is Lipschitz w.r.t. $y$. For $\ell\in[L]$,
    \begin{align*}
        \frac{\partial s_{\pmb{\lambda}}(y)}{\partial y_{\ell}} &= 
        \begin{cases}
            -\lambda_{\ell^{*}}, & y_{\ell^{*}} > z^{*}_{\ell^{*}}, \ell = \ell^{*} \\
            \lambda_{\ell^{*}}, & y_{\ell^{*}} < z^{*}_{\ell^{*}}, \ell = \ell^{*} \\
            0, & \ell \neq \ell^{*}
        \end{cases} \Bigg\vert_{\ell^{*}:=\argmax_{\ell}\{\pmb{\lambda}_{\ell}\lvert y_{\ell} - z^{*}_{\ell}\rvert\}} - \gamma
        \begin{cases}
            1 & y_{\ell} > z^{*}_{\ell} \\
            -1 & y_{\ell} < z^{*}_{\ell}
        \end{cases} \\
        \implies \Big\lVert \frac{\partial s_{\pmb{\lambda}}(y)}{\partial y} \Big\rVert &\leq \lambda_{\ell^{*}}\Bigg\vert_{\ell^{*}:=\argmax_{\ell}\{\pmb{\lambda}_{\ell}\lvert y_{\ell} - z^{*}_{\ell}\rvert\}} + L
    \end{align*}
    As the partial derivative is bounded w.r.t. $y$, by Mean Value Theorem (MVT) the scalarization function is Lipschitz w.r.t. $y$. Thus, there exists some constant $C_{\pmb{\lambda}}$, such that for all $y_1, y_2 \in \text{Im}(u_f)$,
    \begin{equation*}
        \vert s_{\pmb{\lambda}}(y_1) - s_{\pmb{\lambda}}(y_2)\vert \leq C_{\pmb{\lambda}} \Vert y_1 - y_2 \Vert
    \end{equation*}
    Note that for fixed $\pmb{\lambda}$, $\max_{y\in \text{Im}(u_f)} s_{\pmb{\lambda}}(y)$ is a constant. Using that fact, and the equation above, Definition~\ref{app:def:lipschitz_input} follows.
\end{proof}

\begin{lemma}
    The augmented Chebyshev scalarization function $s_{\pmb{\lambda}}(y) = -\max_{\ell\in[L]}\{\pmb{\lambda}_{\ell}\lvert y_{\ell} - z^{*}_{\ell}\rvert\} - \gamma\sum_{\ell=1}^L \lvert y_{\ell} - z^{*}_{\ell}\rvert$, as described in Section~\ref{app:subsubsec:baseline_methodologies}, satisfies the assumption in Definition~\ref{app:def:lipschitz_lambda} when $y\in S$ such that $S$ bounded and $S\subseteq \text{Im}(u_f)$.
\end{lemma}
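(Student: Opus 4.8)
The plan is to reuse the decomposition strategy from the preceding ($y$-Lipschitz) lemma, but to confront the essential new difficulty that the normalizing denominator $N(\pmb{\lambda}) := \max_{y'\in\text{Im}(u_f)} s_{\pmb{\lambda}}(y')$ now itself depends on $\pmb{\lambda}$, whereas in Definition~\ref{app:def:lipschitz_input} it was a constant. First I would observe that the augmented term $-\gamma\sum_{\ell=1}^L|y_\ell - z^*_\ell|$ carries no dependence on $\pmb{\lambda}$, so it cancels in any difference $s_{\pmb{\lambda}_1}(y) - s_{\pmb{\lambda}_2}(y)$; only the Chebyshev max term contributes. Writing $a_\ell := |y_\ell - z^*_\ell|\geq 0$ and applying the elementary Lipschitz property of the maximum, $|\max_\ell c_\ell - \max_\ell d_\ell| \leq \max_\ell|c_\ell - d_\ell|$, I obtain
\[
|s_{\pmb{\lambda}_1}(y) - s_{\pmb{\lambda}_2}(y)| \leq \max_\ell a_\ell\,|\lambda_{1,\ell}-\lambda_{2,\ell}| \leq B\,\|\pmb{\lambda}_1-\pmb{\lambda}_2\|_1,
\]
where $B := \sup_{y\in S,\,\ell}|y_\ell - z^*_\ell| < \infty$ by boundedness of $S$. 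Hence $s_{\pmb{\lambda}}(y)$ is $B$-Lipschitz in $\pmb{\lambda}$ uniformly over $y\in S$; because the same estimate holds uniformly over the bounded set $\text{Im}(u_f)$, the pointwise maximum $N(\pmb{\lambda})$ inherits this Lipschitz constant via $|N(\pmb{\lambda}_1)-N(\pmb{\lambda}_2)|\leq\max_{y'}|s_{\pmb{\lambda}_1}(y')-s_{\pmb{\lambda}_2}(y')|$.

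The crux is to control the quotient. I would use the identity
\[
\frac{s_{\pmb{\lambda}_1}(y)}{N(\pmb{\lambda}_1)} - \frac{s_{\pmb{\lambda}_2}(y)}{N(\pmb{\lambda}_2)} = \frac{s_{\pmb{\lambda}_1}(y) - s_{\pmb{\lambda}_2}(y)}{N(\pmb{\lambda}_1)} + s_{\pmb{\lambda}_2}(y)\,\frac{N(\pmb{\lambda}_2)-N(\pmb{\lambda}_1)}{N(\pmb{\lambda}_1)N(\pmb{\lambda}_2)},
\]
and bound each term using the two Lipschitz estimates just established together with (i) a uniform bound $|s_{\pmb{\lambda}}(y)|\leq B'$ for $y\in S$, and (ii) a uniform lower bound $|N(\pmb{\lambda})|\geq c > 0$. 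The lower bound is where the augmentation earns its keep: since $z^*$ strictly dominates $\text{Im}(u_f)$ and $\gamma > 0$, for every $y'$ we have $s_{\pmb{\lambda}}(y')\leq -\gamma\sum_\ell|y'_\ell - z^*_\ell| \leq -\gamma d_{\min}$, where $d_{\min} := \inf_{y'\in\text{Im}(u_f)}\sum_\ell|y'_\ell - z^*_\ell| > 0$, so $N(\pmb{\lambda})\leq -\gamma d_{\min} < 0$ for all $\pmb{\lambda}$. Combining the pieces, the quotient is bounded by $\big(B/c + B'B/c^2\big)\|\pmb{\lambda}_1 - \pmb{\lambda}_2\|_1$, which furnishes the claimed finite $J$-Lipschitz constant and verifies Definition~\ref{app:def:lipschitz_lambda}.

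I expect the main obstacle to be precisely the verification that the denominator $N(\pmb{\lambda})$ stays bounded away from zero uniformly in $\pmb{\lambda}$. This is the step that a plain (non-augmented) Chebyshev scalarization would not supply, and it relies on the augmentation parameter $\gamma > 0$ together with the hypothesis that the reference point $z^*$ lies strictly outside the bounded image of $u_f$, ensuring $d_{\min} > 0$; the boundedness of $\text{Im}(u_f)$ here follows from the saturation of the \hsf above $\alpha_\tau$ and the exclusion of values below $\alpha_H$. Once the numerator Lipschitz bound and the uniform denominator lower bound are both in hand, the remaining estimates are routine triangle-inequality manipulations, and since $\Lambda = \Delta^L$ is bounded the resulting $J$ is finite, completing the argument.
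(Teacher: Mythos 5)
Your proof is correct, and it takes a genuinely different route from the paper's. The paper's argument is derivative-based: it invokes Danskin's theorem to differentiate $N(\pmb{\lambda}) := \max_{y\in\mathrm{Im}(u_f)} s_{\pmb{\lambda}}(y)$ in $\pmb{\lambda}$ (which requires a \emph{unique} maximizer $y_o$ at each $\pmb{\lambda}_o$), applies the quotient rule to $s_{\pmb{\lambda}}(y')/N(\pmb{\lambda})$, bounds the two resulting terms C1 and C2, lower-bounds the denominator via the pigeonhole principle on the simplex ($\lambda_{\ell^{\dagger}} \geq 1/L$) combined with the assumption that some coordinate of $y$ attains the \hsf upper bound $M$, and finally concludes Lipschitzness by the mean value theorem. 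You instead work entirely with difference quotients: the $\pmb{\lambda}$-independent augmentation term cancels, the $1$-Lipschitzness of $\max$ gives $\lvert s_{\pmb{\lambda}_1}(y)-s_{\pmb{\lambda}_2}(y)\rvert \leq B\lVert\pmb{\lambda}_1-\pmb{\lambda}_2\rVert_1$ uniformly, $N(\pmb{\lambda})$ inherits the same constant, and the standard quotient decomposition reduces everything to a uniform lower bound on $\lvert N(\pmb{\lambda})\rvert$, which you extract from the augmentation: $N(\pmb{\lambda}) \leq -\gamma d_{\min} < 0$. Your route buys robustness and explicitness: it needs no convexity structure, no Danskin, and in particular no uniqueness of the maximizer --- a hypothesis the paper assumes without verification and which can genuinely fail for Chebyshev scalarizations, where ties in the max are common --- and it yields an explicit constant $J = B/c + B'B/c^2$. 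Your denominator bound is also arguably sounder than the paper's: since $s_{\pmb{\lambda}}$ is nonpositive whenever $z^*$ dominates the image, the paper's claim that $N(\pmb{\lambda}) \geq M/L > 0$ sits uneasily with the sign of $s_{\pmb{\lambda}}$, whereas your bound $\lvert N(\pmb{\lambda})\rvert \geq \gamma d_{\min}$ is exactly where the augmentation parameter $\gamma>0$ earns its keep, as you note. The price is the extra hypothesis $d_{\min}>0$, i.e.\ that $z^*$ lies strictly outside the closure of $\mathrm{Im}(u_f)$; you flag this honestly, it holds automatically for a utopian reference point, and it is no stronger than the positioning assumptions the paper itself imposes ($\exists\,\ell$ with $y_{\ell}=M$, and the soft-bound intersection lying within the Pareto frontier).
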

\begin{proof}
    We assume the following: $y \in S \backslash \{-\infty\}$ and $\exists \ell \in [L]$ s.t. $y_\ell = M$, where $M$ is the upper bound of the \mf. %
    First, we apply Danskin's Theorem \citep{danskin_theory_1966} to $\max_{y \in \text{Im}(u_f)}s_{\pmb{\lambda}}(y)$. If $s_{\pmb{\lambda}}(y)$ is convex in $\pmb{\lambda}$ for all $y \in \text{Im}(u_f)$, and at a given $\lambda_o \in \Lambda$, $\exists$ a unique maximizer $y_o \in \text{Im}(u_f)$ then $s_{\pmb{\lambda}}(y)$ is differentiable w.r.t. $\pmb{\lambda}$ at $\pmb{\lambda}_o$ with derivative $\dfrac{\partial s_{\pmb{\lambda}}(y_o)}{\partial \pmb{\lambda}} \Bigg\vert_{\pmb{\lambda}=\pmb{\lambda_o}}$.

    As a result, if $y_o$ is the unique maximizer at $\pmb{\lambda_o}$, then
    \begin{align*}
    \dfrac{\partial s_{\pmb{\lambda}}(y_o)}{\partial \pmb{\lambda}} \Bigg\vert_{\pmb{\lambda}=\pmb{\lambda_o}} = (\lvert y_{o,\ell^*} - z_{\ell^*}^* \rvert) \Bigg\vert_{\ell^*=\argmax_{\ell} [\lambda_\ell \lvert y'_\ell - z_\ell^* \rvert]}
    \end{align*}
    
    Then, we demonstrate that $s_{\pmb{\lambda}}(y')/\max_{y \in \text{Im}(u_f)}s_{\pmb{\lambda}}(y)$ is Lipschitz w.r.t. $\pmb{\lambda}$. To conserve space, we denote $\max_{y \in \text{Im}(u_f)}s_{\pmb{\lambda}}(y)$ with $A$.
    \begin{flalign*}
        \frac{\partial}{\partial \pmb{\lambda}} \left[ \dfrac{s_{\pmb{\lambda}}(y')}{A} \right] \Bigg\vert_{\pmb{\lambda}=\pmb{\lambda_o}} &= \frac{ \left[ (A)(\lvert y'_{\ell^*}-z^*_{\ell^*}\rvert) \Bigg\vert_{\ell^*=\argmax_{\ell} [\lambda_\ell \lvert y'_\ell - z_\ell^* \rvert]} \right] }{\left[ A \right]^2} - \\ \frac{\left[ (s_{\pmb{\lambda}}(y'))(\lvert y_{o,\ell^*} - z_{\ell^*}^* \rvert) \Bigg\vert_{\ell^*=\argmax_{\ell} [\lambda_\ell \lvert y'_\ell - z_\ell^* \rvert]} \right]}{{\left[ A \right]^2}}
        \\
        &= 
        \underbrace{\frac{ \left[ (\lvert y'_{\ell^*}-z^*_{\ell^*}\rvert) \Bigg\vert_{\ell^*=\argmax_{\ell} [\lambda_\ell \lvert y'_\ell - z_\ell^* \rvert]} \right] }{\left[ A \right]}}_{C1} - \\ \underbrace{\frac{\left[ (s_{\pmb{\lambda}}(y'))(\lvert y_{o,\ell^*} - z_{\ell^*}^* \rvert) \Bigg\vert_{\ell^*=\argmax_{\ell} [\lambda_\ell \lvert y'_\ell - z_\ell^* \rvert]} \right]}{{\left[ A \right]^2}}}_{C2}
    \end{flalign*}

    The numerator for term C2 is bounded since $s_{\pmb{\lambda}}$ is in a bounded space. The numerator for C1 is upper bounded since $y$, our \mf, is an upper-bounded function. 
    
    The denominator values for C1 and C2 are both lower-bounded: since $\pmb{\lambda}$ lies on the probability simplex, there must be a $\lambda_{\ell^{\dagger}} \geq 1/L$ for some $\ell^{\dagger} \in L$, by the pigeonhole principle. Since we are maximizing over $y \in \text{Im}(u_f)$, there must be a $y^{\dagger}$ at the upper bound of $\text{Im}(u_f)$, which we denote as $M$. As a result, $\max_{y \in \text{Im}(u_f)}s_{\pmb{\lambda}}(y) \geq M/L$. Since the denominators of both C1 and C2 are lower-bounded, the overall partial derivative is bounded w.r.t. $\pmb{\lambda}$.

    Since the overall partial derivative is bounded w.r.t. $\pmb{\lambda}$, by MVT $s_{\pmb{\lambda}}(y')/\max_{y \in \text{Im}(u_f)}s_{\pmb{\lambda}}(y)$ is Lipschitz w.r.t. $\pmb{\lambda}$. Thus, there exists some constant $J$ such that for all $\pmb{\lambda}_1, \pmb{\lambda}_2 \in \Lambda$
    \begin{align*}
        \Big\vert\dfrac{s_{\pmb{\lambda}_1}(y)}{\max_{y' \in \text{Im}(u_f)} s_{\pmb{\lambda}_1}(y')} - \dfrac{s_{\pmb{\lambda}_2}(y)}{\max_{y' \in \text{Im}(u_f)} s_{\pmb{\lambda}_2}(y')} \Big\vert \leq J \|\pmb{\lambda}_1 - \pmb{\lambda}_2\|_1
    \end{align*}
\end{proof}

\section{ADDITIONAL EXPERIMENTAL RESULTS}\label{additional-experimental-results}

\subsection{Concrete \mf: Soft-Hard Functions}\label{sec:appendix-shfs}

Effectively leveraging expert knowledge is a central challenge in multi-objective optimization, particularly when function evaluations are costly. This hurdle is especially acute in high-stakes domains like healthcare, where practitioners must balance treatment efficacy against harmful side-effects. Consider brachytherapy for cancer treatment~\citep{deufel_pnav_2020}. Clinicians must design a radiation plan that maximizes dose to the tumor while minimizing exposure to nearby healthy organs. Each potential plan represents a complex trade-off, and devising or evaluating plans is time-consuming, often occurring under clinical pressure \citep{belanger_gpu-based_2019, cui_multi-criteria_2018, van_der_meer_bi-objective_2020}. Crucially, clinicians and other domain experts approach such tasks with practical knowledge, often conceptualizing their objectives not just as values to be purely minimized or maximized, but in terms of both \textit{desired targets} (which we term "soft bounds" or aims) and \textit{strict, non-negotiable boundaries} (which we term "hard bounds" or limits). For instance, a clinician might \textit{aim} for >95\% tumor coverage (a soft bound, indicating a desirable region) but require \textit{at least} 90\% (a hard bound, defining a feasibility constraint), while \textit{aiming} for <513 centigrays (cGy) bladder dose (soft bound) with a strict \textit{limit} of 601 cGy (hard bound) \citep{viswanathan_american_2012}. This natural way of specifying preferences—defining both acceptable ranges and ideal targets for each objective—is prevalent yet not fully leveraged by existing MOO frameworks for specifying tradeoff regions \citep{paria_flexible_2019, abdolshah_multi-objective_2019, zuluaga_e-pal_2016, while_fast_2012, suzuki_multi-objective_2020}. Examples of real-world applications where such priors already exist include brachytherapy, diabetes, and vitals monitoring \citep{grosman_zone_2010, cairoli_model_2019, cui_multi-criteria_2018}. Our explicit usage of \mfs into the optimization process aims to fill this gap by operationalizing these prevalent soft and hard bounds using \textit{soft-hard functions} (SHFs).

Let $\varphi$ denote a generic objective function (e.g., some $f_\ell$) with its associated soft bound $\alpha_S$ and hard bound $\alpha_H$. We define its \hsf as follows:

\begin{equation}
    u_{\varphi}(x) = \begin{cases}
        1 + 2 \beta \times (\tilde{\alpha}_\tau - \tilde{\alpha}_S) & \varphi(x) \geq \alpha_{\tau} \\
        1 + 2 \beta \times (\tilde{\varphi}(x) - \tilde{\alpha}_S) & \alpha_{S} < \varphi(x) < \alpha_{\tau} \\
        \text{1} & \varphi(x) = \alpha_{S} \\
        \text{2 } \times \tilde{\varphi}(x) & \alpha_{H} < \varphi(x) < \alpha_{S} \\
        0 & \varphi(x) = \alpha_{H} \\
        -\infty & \varphi(x) < \alpha_{H} \\
    \end{cases}
\end{equation}

where $\tilde{\varphi}(x)$ and $\tilde{\alpha}$ are the soft-hard bound normalized values \footnote{Normalization, for value $z$, is performed according to the soft and hard bounds, $\alpha_{S}$ and $\alpha_{H}$, respectively, using: $\tilde{z} = ((z-\alpha_{H}) \division (\alpha_{S}-\alpha_{H})) * 0.5$}, $\alpha_{\tau}$, the saturation point, determines where the utility values begin to saturate (as previously described, to prevent exploding utility values) \footnote{In our experiments, we determine $\alpha_{\tau}$ to be $\alpha_{H}$+$\zeta(\alpha_{S}-\alpha_{H})$, for $\zeta > 1.0$. We set $\zeta=2.0$.}, and $\beta$ $\in [0, 1]$ determines the fraction of the original rate of utility, in $[\alpha_{H},\alpha_{S}]$, obtained within $[\alpha_{S}, \alpha_{\tau}]$. 

\subsection{Other Monotonic Utility Functions}\label{app:other-monotonic-functions}

We conducted several experiments to illustrate our method’s flexibility in adopting alternative monotonic functions. We provide brief motivation from other disciplines including additive manufacturing, energy systems and consumer behavior (business-related), and video streaming quality and evaluate on Branin-Currin.

\noindent \textbf{Application Domain(s): Additive Manufacturing (Engineering \& Science)}

\noindent \textbf{Motivation:}
\citet{haghanifar_discovering_2020} studied a numerical simulation to design an additive manufacturing strategy for minimizing the reflection of light at different angles. \citet{malkomes_beyond_2021} proposed to observe a diverse set of samples satisfying a set of hard boundary conditions.

\noindent \textbf{Monotonic Utility Function:}
Hard-bounds only function. In this setting, the soft bound is effectively vacuous and the utility function is upper bounded at a user-specified value since we require upper-bounded utility functions.

\vspace{1em}

\noindent \textbf{Application Domain(s): Energy Systems and Consumer Behavior (Business Studies)}

\noindent \textbf{Motivation:}
In the optimization of real-time pricing for energy systems, monotonic utility functions are often used to model consumer satisfaction, to minimize electricity costs, etc. \citep{zhang_refined_2022, samadi_optimal_2010}. \citet{zhang_refined_2022} and \cite{samadi_optimal_2010} proposed using sigmoid utility functions and functions which resemble soft-hard bounds (to denote preferred levels of utility).

\noindent \textbf{Monotonic Utility Function:}
Sigmoid utility functions and soft-bounds only functions may satisfy the assumptions we have in our paper. We implement the sigmoid utility function to be bounded between 0 and 1 while also allowing for the decision-maker to specify where the inflection/midpoint of the sigmoid function should be (intuitively representing the region of highest rate of increase in utility). The sigmoid function also smoothly models diminishing returns which is beneficial for the above domains.

\vspace{1em}

\noindent \textbf{Additional Potential Application Domains: Satisfaction Levels in Video Streaming Quality, Forest Management}

\noindent \textbf{Motivation:}
In the domain of video streaming services, \citet{fazio_advanced_2021} utilizes multi-step utility functions to model user satisfaction, where each discrete step corresponds to a specific video encoding layer and its associated utility. Similarly, in forest management, step utility functions are employed to balance the trade-off between maximizing economic profit from harvesting and minimizing ecological impact \citep{radulescu_multi-objective_2019}.

\noindent \textbf{Other Potential Monotonic Utility Functions:} Piecewise step functions, polynomial splines

We implemented the above utility function and observed the following end-to-end results (over 6 trials) for Branin-Currin in Figure \ref{fig:alternative_mfs_results}.

\begin{figure}[h]
\begin{center}
\includegraphics[width=0.7\textwidth]{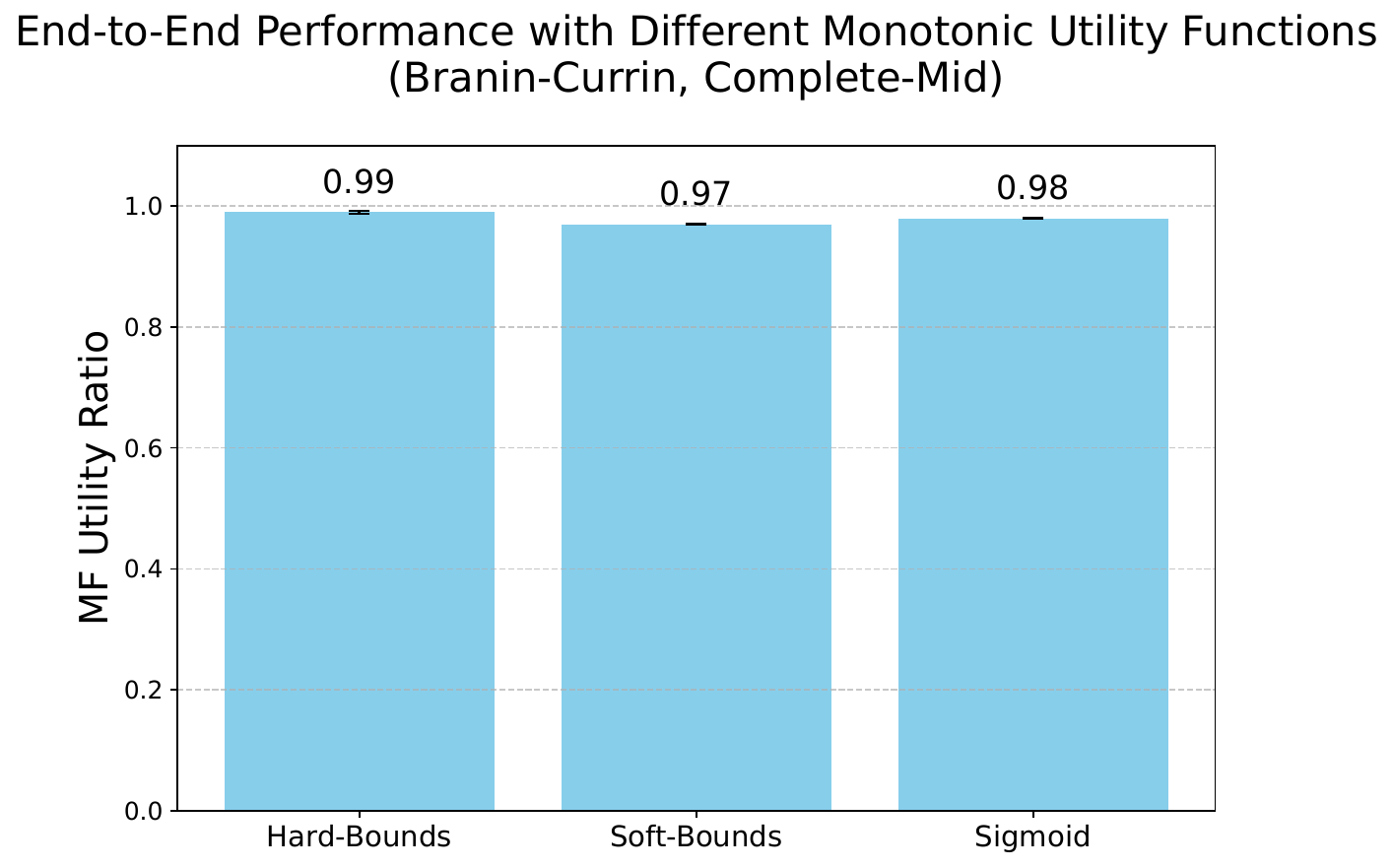}
\end{center}
\caption{End-to-End MF Utility Ratio at 5 Points for different forms of Monotonic Utility Functions.}
\label{fig:alternative_mfs_results}
\end{figure}

From a qualitative standpoint, the final subset from each of the above utility functions often exhibited higher density around the high-utility regions. The high MF utility ratio values quantitatively indicate that as well.

\subsection{Importance of Two-Step Approach}\label{app:two-step-benefits}

Our central challenge is solving the max-min objective (Equation \ref{submodular-formulation}) to find a small set of points robust to unknown user weights. We decompose this into two steps because current literature lacks a principled single-step approach for this specific goal.

By splitting the problem, we maintain strong theoretical guarantees for both phases:

Dense Sampling (Step 1): We prove convergence to the high-utility region (Appendix \ref{app:definitions_theorems}) using Bayesian Optimization, effectively discretizing the continuous search space.

Sparsification (Step 2): We leverage submodularity to provide optimality guarantees on the final, smaller set (Section \ref{sec:step-2-method}).

A monolithic approach would sacrifice these guarantees. Relying solely on Step 1 risks overwhelming the DM with redundant points, while Step 2 requires a high-quality discrete candidate set to function. Our decomposition ensures we achieve both dense coverage (Step 1) and optimal sparsity (Step 2) with formal backing for each stage.

To demonstrate this from an empirical perspective, we implemented and ran several one-step approaches: (1) a simulated annealing approach where we directly optimize for Equation \ref{submodular-formulation} of our paper, rather than split it into two separate steps, and (2) 
$\epsilon$-PAL \citep{zuluaga_e-pal_2016}, which is a one-step approach that has a component that acts as a form of regularization on the sparsity of the solutions. We display end-to-end results for Branin-Currin in Figure \ref{fig:one_step_results}.

The one-step simulated annealing approach typically takes many more function evaluations to converge compared to MoSH, especially since MoSH is designed for expensive function evaluations (a critical assumption in our setting, due to the time-critical nature of applications such as Brachytherapy cancer treatment). As a result, a budget of 50 function evaluations for simulated annealing is not nearly enough for it to obtain points well-distributed in the region defined by the soft-hard bounds. This is especially evident for the real-world Brachytherapy application, which is of much higher dimensionality and would require significantly more function evaluations to converge for simulated annealing.

$\epsilon$-PAL (one-step), while designed for expensive black-box functions, does not take into consideration any expert-defined priors (e.g. our soft-hard bounds). As a result, 
$\epsilon$-PAL spends a lot of time obtaining points along the entire Pareto frontier, as opposed to the region the decision-maker truly cares about. As a result, it is difficult for the decision-maker to make high-utility decisions in their preferred tradeoff regions.

\begin{figure}[h]
\begin{center}
\includegraphics[width=0.5\textwidth]{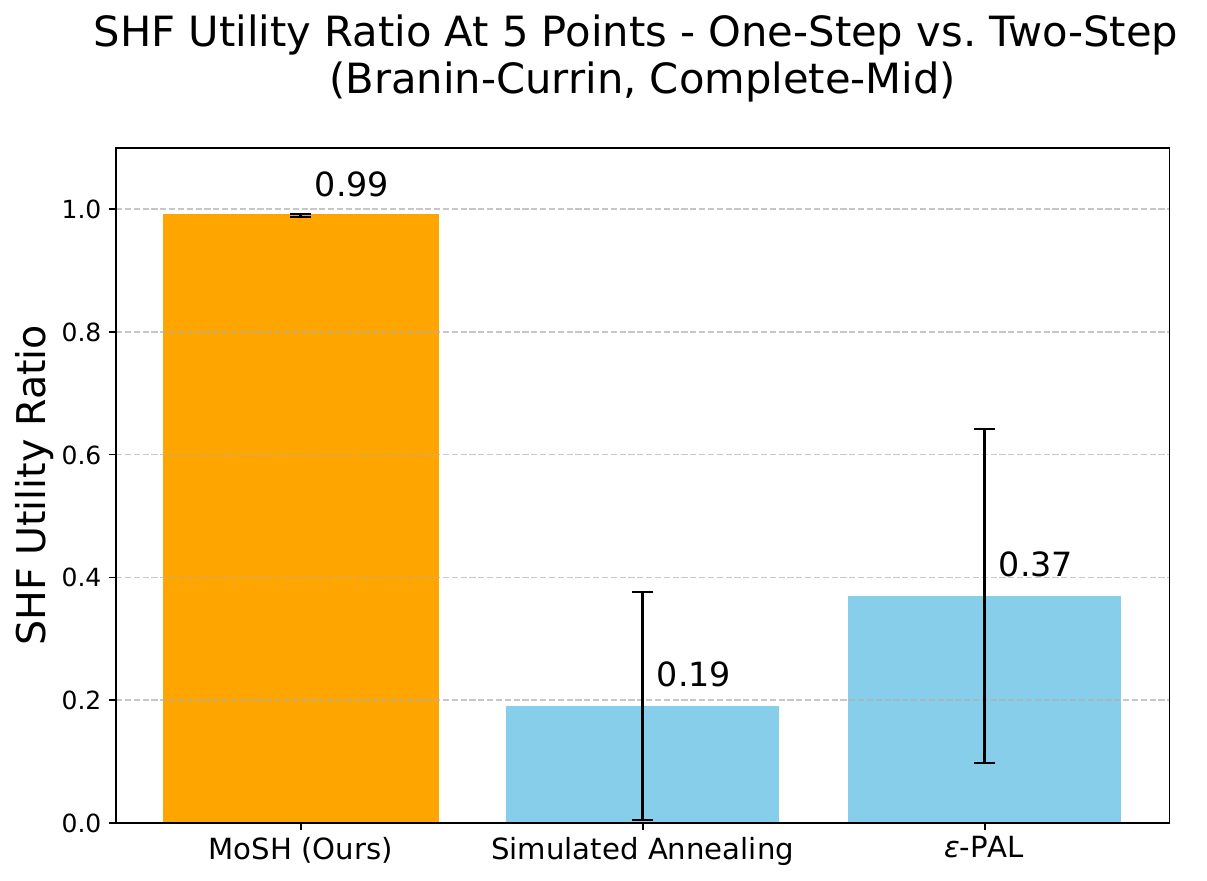}
\end{center}
\caption{End-to-End SHF Utility Ratio at 5 points results comparing our two-step approach (MoSH) against two other one-step approaches.}
\label{fig:one_step_results}
\end{figure}

\subsection{Alternative Acquisition Functions}\label{app:alternative-acq-functions}

To illustrate the performance of MoSH across acquisition functions, we conducted several ablations comparing UCB to Thompson Sampling on the Branin-Currin, Four Bar Truss, and Brachytherapy applications. We present the end-to-end results in Figure \ref{fig:acq_func_comparisons} (over 6 trials):

\begin{figure}[h]
\begin{center}
\includegraphics[width=0.5\textwidth]{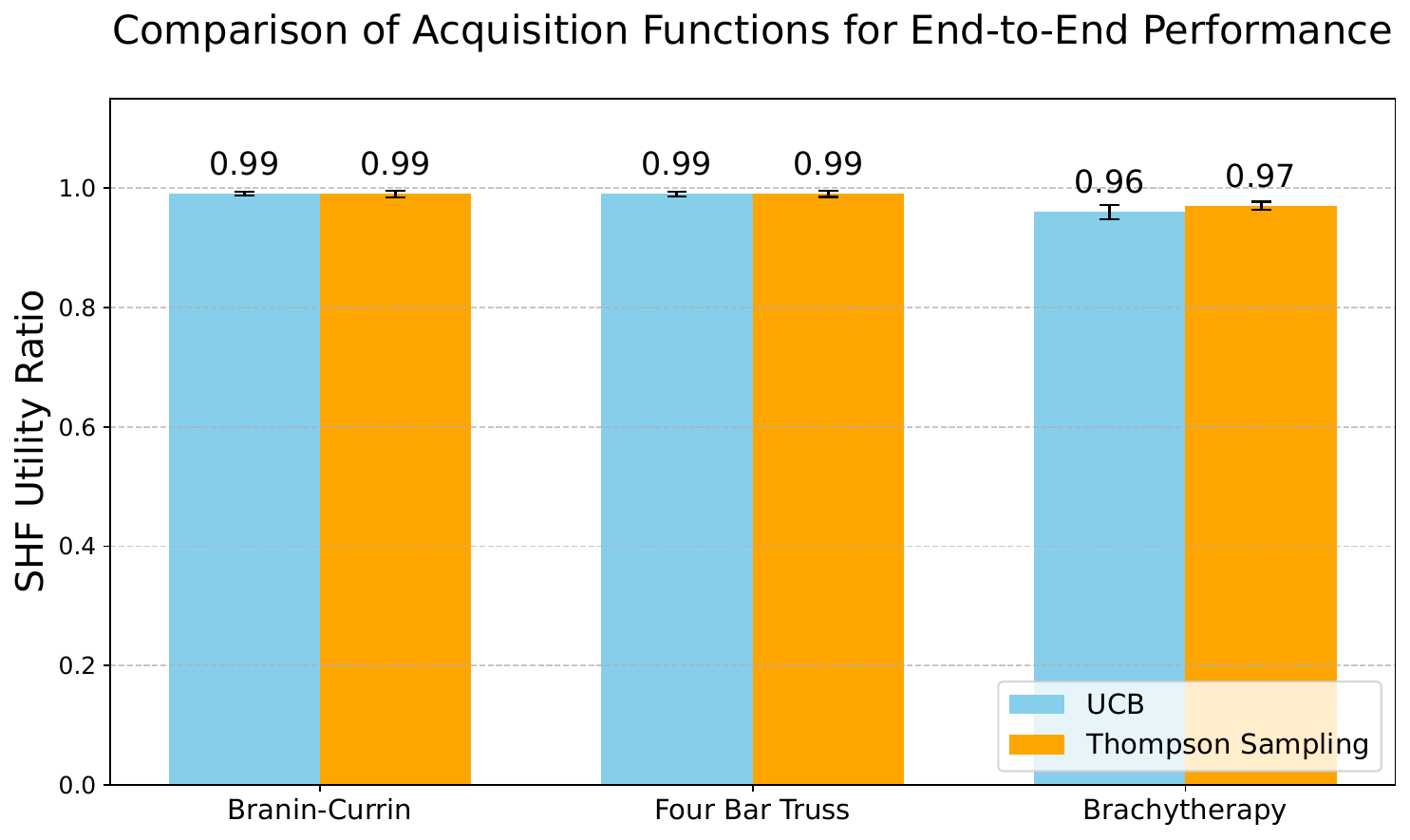}
\end{center}
\caption{End-to-End SHF Utility Ratio results at 5 points comparing the Upper Confidence Bound (UCB) and Thompson Sampling (TS) acquisition functions.}
\label{fig:acq_func_comparisons}
\end{figure}

We did not observe statistically significant differences in the end-to-end results between UCB and Thompson Sampling. Both methods converged similarly to the high-utility regions defined by the Soft-Hard bounds. Given the empirical similarity in performance, we believe it could be helpful to have heuristics based on secondary practical considerations rather than raw performance. Below are several high-level ones (among others):

UCB: When explicit control over the exploration-exploitation trade-off is desired or when deterministic reproducibility is desired \citep{russo_learning_2014}.

Thompson Sampling: To simplify the design and implementation process, especially when UCB algorithms are computationally challenging \citep{russo_learning_2014}.

\subsection{Computational Feasibility}\label{sec:computational-feasible-details}
Equation \ref{bayesian-formulation} of the dense sampling step uses a conversion from the worst-case to an average-case minimization problem. To observe the difficulty of the worst-case minimization problem and how the results may be affected, we performed several experiments which directly solve the following:

\begin{equation}\label{eq:comp-infeasible}
    \max_{D \subseteq X, |D| \leq k_D} \min_{\pmb{\lambda} \in \Lambda}\Bigg[\dfrac{\max_{x \in D} s_{\pmb{\lambda}}(u_f(x))}{\max_{x \in X} s_{\pmb{\lambda}}(u_f(x))}\Bigg]
\end{equation}

To conduct the experiments, we instantiate the \mfs with the soft-hard functions (\hsfs) from Section \ref{subsec:soft_hard_utility_functions} and Appendix \ref{sec:appendix-shfs}. When using \hsfs, we refer to the dense sampling step of our method as MoSH-Dense. 

We solve Equation \ref{eq:comp-infeasible} using a greedy algorithm over discretized input space $X$ and weight space $\Lambda$ and compare the results with both MoSH-Dense and its discretized variations.

\textbf{Discretization.}
To create the finite approximation of the continuous input space, we discretize the domain using a uniform grid. Given input space bounds $[\mathbf{l}, \mathbf{u}] \subset \mathbb{R}^d$ where $\mathbf{l} = [l_1, \ldots, l_d]$ and $\mathbf{u} = [u_1, \ldots, u_d]$ are the lower and upper bounds respectively, we construct a discretized grid as follows:

For each dimension $i \in [d]$, we create an evenly-spaced sequence:
\begin{equation}
    \mathcal{X}_i = \{l_i, l_i + \delta, l_i + 2\delta, \ldots, u_i\}
\end{equation}
where $\delta$ is the step size parameter controlling the granularity of the discretization.

The complete discretized input space $\mathcal{X}_\text{disc}$ is then constructed as the Cartesian product:
\begin{equation}
    \mathcal{X}_\text{disc} = \mathcal{X}_1 \times \mathcal{X}_2 \times \cdots \times \mathcal{X}_d
\end{equation}

This results in a finite grid of points where each point $\mathbf{x} \in \mathcal{X}_\text{disc}$ represents a candidate solution in the original continuous space. The total number of points in the discretized space is $\prod_{i=1}^d \left\lceil\frac{u_i - l_i}{\delta}\right\rceil$. The same discretization method is done for $\Lambda$, except, from this grid, we select only those points that lie on the probability simplex, i.e., whose components sum to 1. 

\textbf{Experiments and Discussions.}
We term the baseline algorithms as \textit{discrete-greedy-}$\delta$ and \textit{discrete-average-}$\delta$ and primarily experiment with the Branin-Currin objective function (described in Section \ref{sec:appendix-branin-currin}). We show numerical results in Figure \ref{discrete_greedy_step1_results}. In general, computationally, \textit{discrete-greedy-}$\delta$ and \textit{discrete-average-}$\delta$ take longer as $\delta \rightarrow 0$. Specifically, \textit{discrete-greedy-0.05} and \textit{discrete-greedy-0.10} take, on average, 20x and 3x longer (in seconds) than MoSH-Dense, respectively, to select the next sample $x_t$ at iteration $t$. This computational disparity is further exacerbated as the size of the input space dimensionality increases. However, as expected, as $\delta \rightarrow 0$, the metrics improve and move closer to those of MoSH-Dense. Finally, we note that, as described in Section \ref{sec:step-2-method}, the greedy algorithm may perform arbitrarily bad when solving Equation \ref{eq:comp-infeasible}. We leave for future work exploration of other algorithms which are designed for our setting described in Section \ref{dense-sampling}; notably, access to some noisy and expensive black-box function. As these results show, our formulation for MoSH-Dense leads to superior results while also being more computationally efficient and simple.

\begin{figure}[h]
\begin{center}
\includegraphics[width=0.7\textwidth]{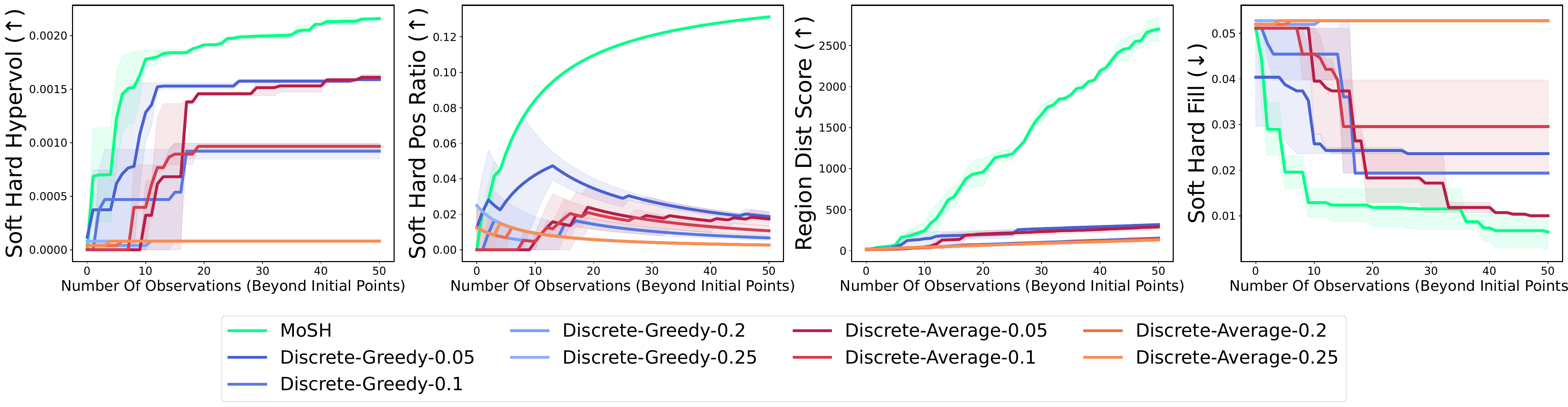}
\end{center}
\caption{Step (1) results with discrete greedy algorithms which explicitly solve for the max-min problem for MoSH-Dense (Equation \ref{eq:comp-infeasible}). The plots compare our proposed method, MoSH-Dense, against several variations of \textit{discrete-greedy-}$\delta$ and \textit{discrete-average-}$\delta$ using the metrics defined in Section \ref{performance-criteria}.}
\label{discrete_greedy_step1_results}
\end{figure}

\subsection{Synthetic Two-Objective Function: Branin-Currin}\label{sec:appendix-branin-currin}
We leverage the Branin-Currin synthetic two-objective optimization problem provided in the BoTorch framework \citep{balandat2020botorch}, which has a mapping of $[0,1]^2 \rightarrow \mathbb{R}^2$. To demonstrate our method's flexibility in accommodating various configurations, we sample from the following variations
: (1) \textit{complete-mid}, where the hard bounds cover the \textit{complete} Pareto frontier and the soft bounds are in the \textit{middle} of the hard region (2) \textit{complete-top}, (3) \textit{complete-bot}, (4) \textit{top-mid}, and (5) \textit{bot-mid}. 
Figures \ref{branin_currin_soft_metrics_complete_mid}, \ref{branin_currin_soft_metrics_complete_top}, \ref{branin_currin_soft_metrics_complete_bot}, \ref{branin_currin_soft_metrics_top_mid}, \ref{branin_currin_soft_metrics_bot_mid} illustrate the plots, for step 1, with the performance metrics defined in Section \ref{performance-criteria} (computed over 6 independent trials). Figure \ref{branin_currin_soft_metrics_pbemo_results} additionally compares against preference-based evolutionary multi-objective algorithm baselines (experimental details in Appendix \ref{app:experimental-setup-details}). We note that EHVI, although superior in some metrics, is extremely computationally demanding for higher dimensions. Overall, we observe that our algorithm generally matches or surpasses other baselines in all four metrics, with sampling a much higher density near the soft region. The figures for the other configurations display a similar pattern.

\textbf{Configurations (normalized to [0,1])} : $\{\alpha_{0,S}, \alpha_{0,H}, \alpha_{1,S}, \alpha_{1,H} \}$
\begin{enumerate}
    \item \textit{Complete-Mid}: \{0.988, 0.943, 0.856, 0.618\}
    \item \textit{Complete-Top}: \{0.969, 0.943, 0.935, 0.618\}
    \item \textit{Complete-Bot}: \{0.998, 0.943, 0.697, 0.618\}
    \item \textit{Top-Mid}: \{0.969, 0.940, 0.915, 0.856\}
    \item \textit{Bot-Mid}: \{0.996, 0.975, 0.737, 0.658\}
\end{enumerate}

\begin{figure}[h]
\begin{center}
\includegraphics[width=0.8\textwidth]{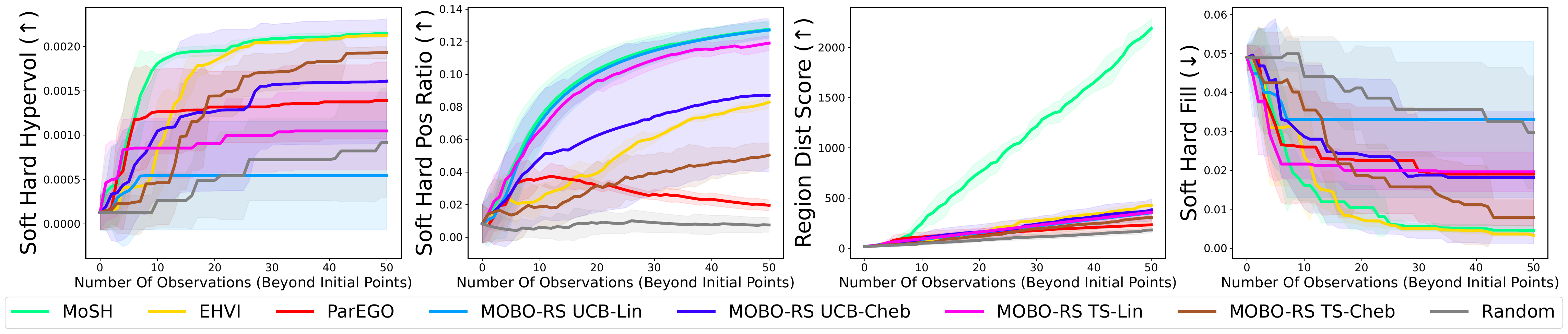}
\end{center}
\caption{\textit{Complete-Mid} configuration for the Branin-Currin synthetic two-objective function. Results are plotted using the metrics defined in Section \ref{performance-criteria}.}
\label{branin_currin_soft_metrics_complete_mid}
\end{figure}

\begin{figure}[h]
\begin{center}
\includegraphics[width=0.8\textwidth]{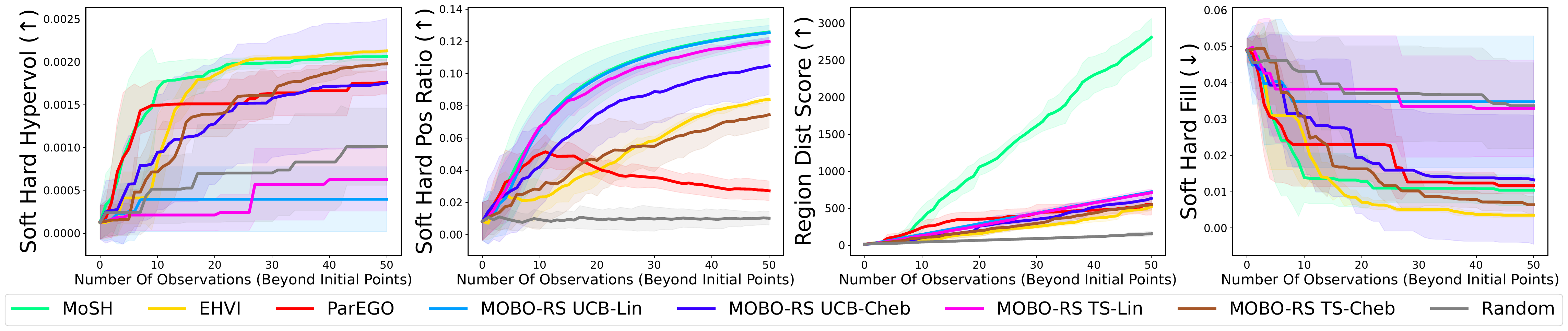}
\end{center}
\caption{\textit{Complete-Top} configuration for the Branin-Currin synthetic two-objective function. Results are plotted using the metrics defined in Section \ref{performance-criteria}.}
\label{branin_currin_soft_metrics_complete_top}
\end{figure}

\begin{figure}[h]
\begin{center}
\includegraphics[width=0.8\textwidth]{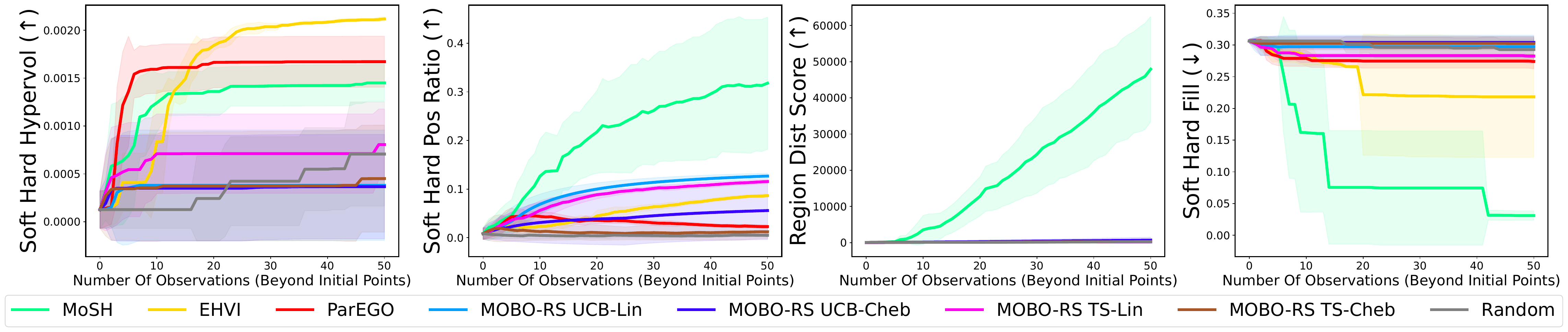}
\end{center}
\caption{\textit{Complete-Bot} configuration for the Branin-Currin synthetic two-objective function. Results are plotted using the metrics defined in Section \ref{performance-criteria}.}
\label{branin_currin_soft_metrics_complete_bot}
\end{figure}

\begin{figure}[h]
\begin{center}
\includegraphics[width=0.8\textwidth]{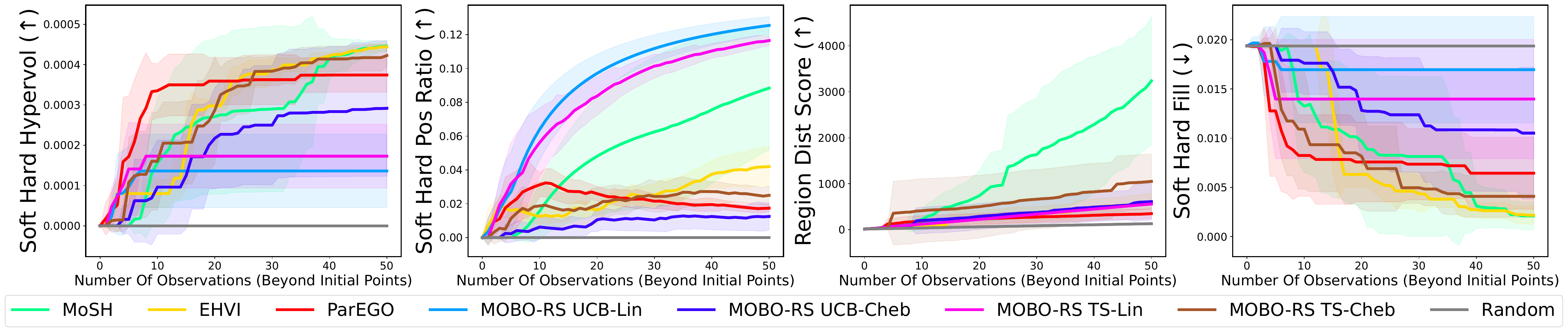}
\end{center}
\caption{\textit{Top-Mid} configuration for the Branin-Currin synthetic two-objective function. Results are plotted using the metrics defined in Section \ref{performance-criteria}.}
\label{branin_currin_soft_metrics_top_mid}
\end{figure}

\begin{figure}[h]
\begin{center}
\includegraphics[width=0.8\textwidth]{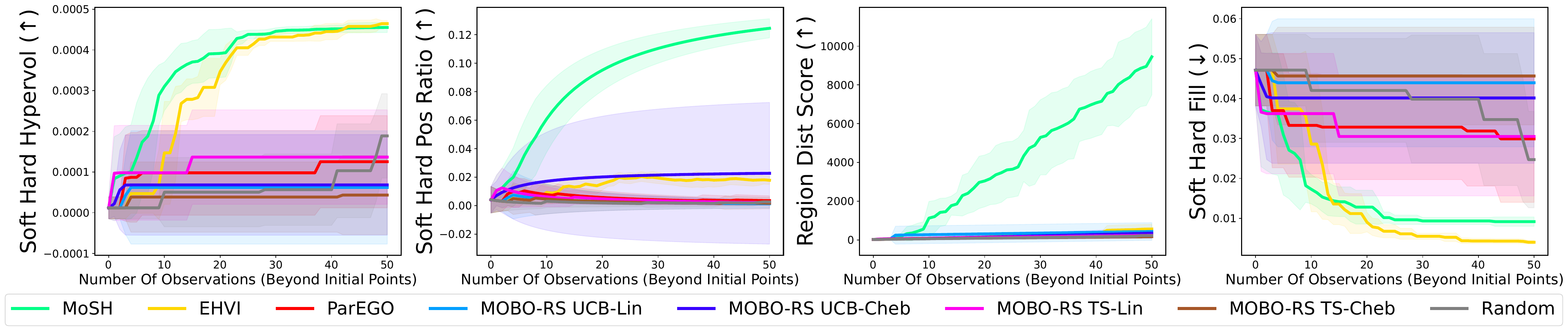}
\end{center}
\caption{\textit{Bot-Mid} configuration for the Branin-Currin synthetic two-objective function. Results are plotted using the metrics defined in Section \ref{performance-criteria}.}
\label{branin_currin_soft_metrics_bot_mid}
\end{figure}

\begin{figure}[h]
\begin{center}
\includegraphics[width=0.8\textwidth]{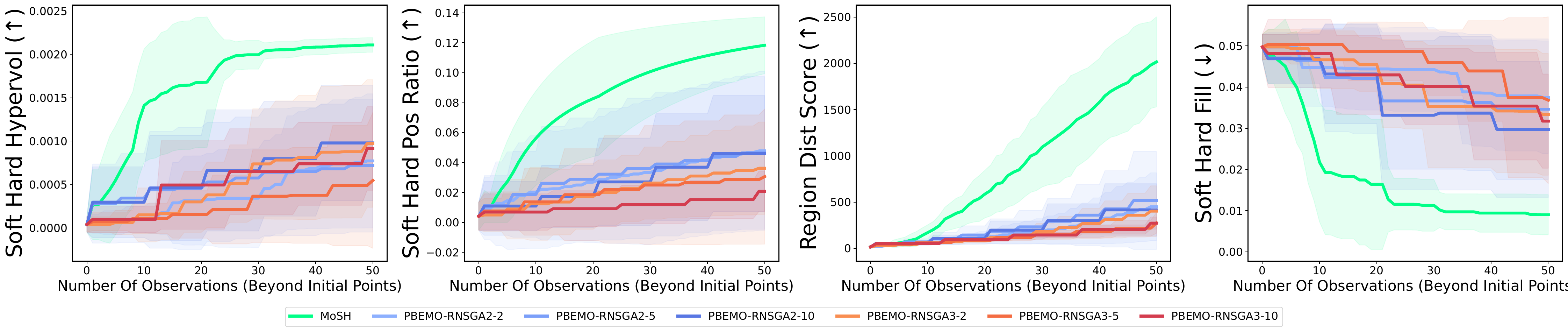}
\end{center}
\caption{\textit{Complete-Mid} configuration for the Branin-Currin synthetic two-objective function with preference-based evolutionary multi-objective (PBEMO) algorithm baselines. Results are plotted using the metrics defined in Section \ref{performance-criteria}.}
\label{branin_currin_soft_metrics_pbemo_results}
\end{figure}

\subsection{Engineering Design Problem: Four Bar Truss}\label{subsec:appendix-four-bar-truss}

We also evaluate on a MOO engineering design problem, Four Bar Truss, from REPROBLEM \citep{tanabe_easy--use_2020}, which consists of two objectives and four continuous decision variables \citep{cheng_generalized_1999}. The objectives of the problem are to minimize the structural volume and the joint displacement of the four bar truss. The four decision variables determine the length of the four bars, respectively. To demonstrate our method's flexibility, we sample from the following variations: (1) \textit{narrow-mid}, (2) \textit{narrow-bot}, (3) \textit{narrow-top}, (4) \textit{bot-mid}, and (5) \textit{top-mid} (over 6 independent trials). Step 1 results are shown in Figures \ref{four_bar_truss_soft_metrics_narrow_mid}, \ref{four_bar_truss_soft_metrics_narrow_bot}, \ref{four_bar_truss_soft_metrics_narrow_top}, \ref{four_bar_truss_soft_metrics_bot_mid}, and \ref{four_bar_truss_soft_metrics_top_mid}. Figure \ref{four_bar_truss_soft_metrics_pbemo_results} additionally compares against preference-based evolutionary multi-objective algorithm baselines (experimental details in Appendix \ref{app:experimental-setup-details}). Based on our experiments, our algorithm consistently matches or surpasses other baselines in all metrics, while sampling at a clearly higher density near the soft region. 

\textbf{Configurations (normalized to [0,1])} : $\{\alpha_{0,S}, \alpha_{0,H}, \alpha_{1,S}, \alpha_{1,H} \}$
\begin{enumerate}
    \item \textit{Narrow-Mid}: \{0.62, 0.45, 0.72, 0.55\}
    \item \textit{Narrow-Bot}: \{0.70, 0.45, 0.65, 0.55\}
    \item \textit{Narrow-Top}: \{0.55, 0.45, 0.78, 0.55\}
    \item \textit{Bot-Mid}: \{0.86, 0.70, 0.48, 0.25\}
    \item \textit{Top-Mid}: \{0.43, 0.20, 0.85, 0.70\}
\end{enumerate}

\begin{figure}[h]
\begin{center}
\includegraphics[width=0.8\textwidth]{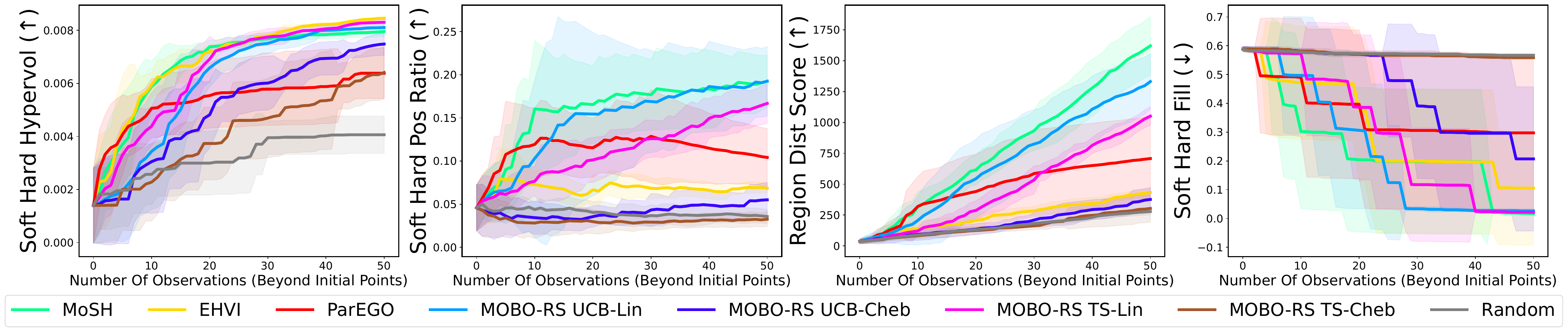}
\end{center}
\caption{\textit{Narrow-Mid} configuration for the Four Bar Truss engineering design two-objective function. Results are plotted using the metrics defined in Section \ref{performance-criteria}.}
\label{four_bar_truss_soft_metrics_narrow_mid}
\end{figure}

\begin{figure}[h]
\begin{center}
\includegraphics[width=0.8\textwidth]{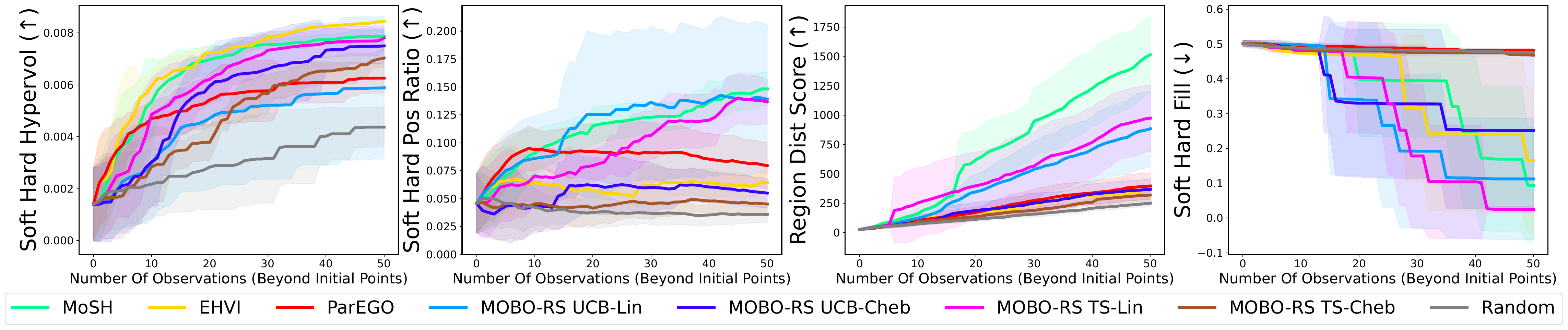}
\end{center}
\caption{\textit{Narrow-Bot} configuration for the Four Bar Truss engineering design two-objective function. Results are plotted using the metrics defined in Section \ref{performance-criteria}.}
\label{four_bar_truss_soft_metrics_narrow_bot}
\end{figure}

\begin{figure}[h]
\begin{center}
\includegraphics[width=0.8\textwidth]{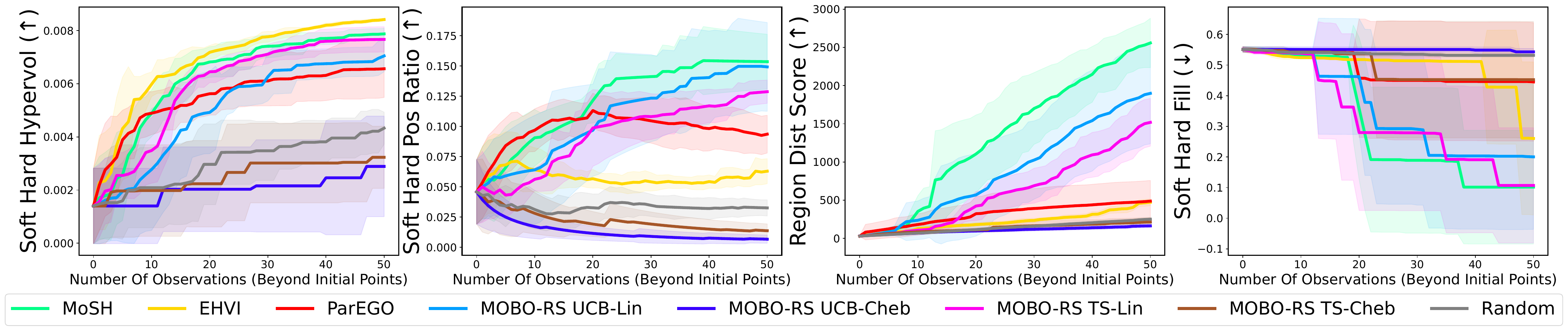}
\end{center}
\caption{\textit{Narrow-Top} configuration for the Four Bar Truss engineering design two-objective function. Results are plotted using the metrics defined in Section \ref{performance-criteria}.}
\label{four_bar_truss_soft_metrics_narrow_top}
\end{figure}

\begin{figure}[h]
\begin{center}
\includegraphics[width=0.8\textwidth]{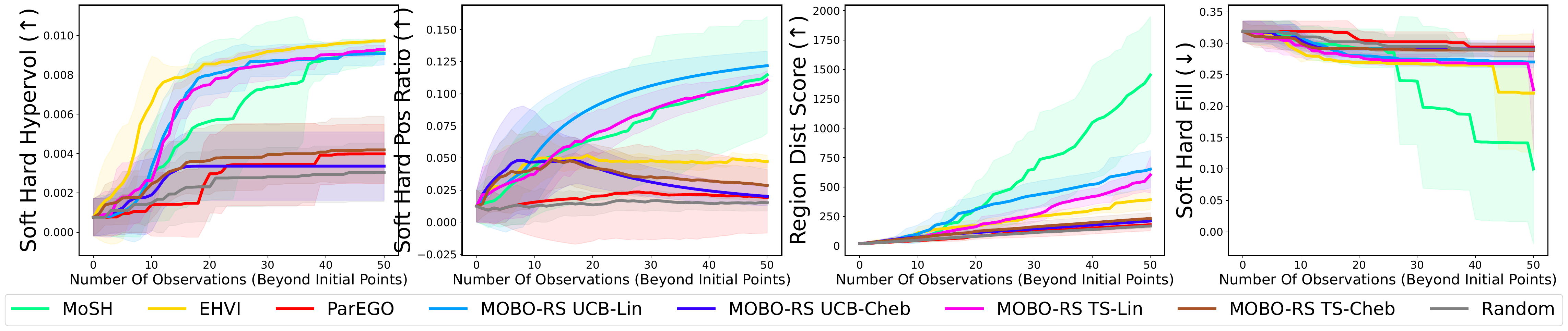}
\end{center}
\caption{\textit{Bot-Mid} configuration for the Four Bar Truss engineering design two-objective function. Results are plotted using the metrics defined in Section \ref{performance-criteria}.}
\label{four_bar_truss_soft_metrics_bot_mid}
\end{figure}

\begin{figure}[h]
\begin{center}
\includegraphics[width=0.8\textwidth]{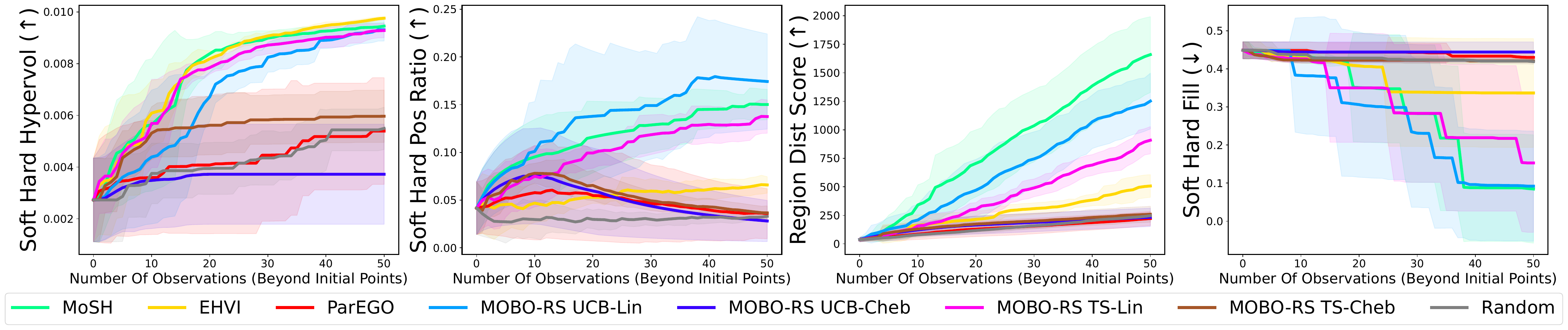}
\end{center}
\caption{\textit{Top-Mid} configuration for the Four Bar Truss engineering design two-objective function. Results are plotted using the metrics defined in Section \ref{performance-criteria}.}
\label{four_bar_truss_soft_metrics_top_mid}
\end{figure}

\begin{figure}[h]
\begin{center}
\includegraphics[width=0.8\textwidth]{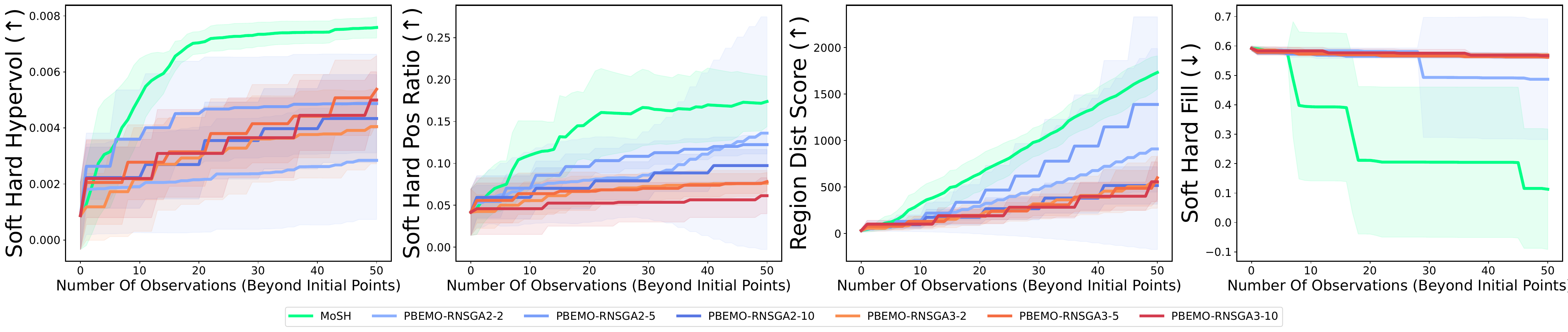}
\end{center}
\caption{\textit{Narrow-Mid} configuration for the Four Bar Truss engineering design two-objective function with preference-based evolutionary multi-objective (PBEMO) algorithm baselines. Results are plotted using the metrics defined in Section \ref{performance-criteria}.}
\label{four_bar_truss_soft_metrics_pbemo_results}
\end{figure}

\subsection{Real Clinical Case: Cervical Cancer Brachytherapy Treatment Planning}\label{appendix:brachytherapy-details}

We had consent for and leveraged a real clinical case which had previously been treated in the clinic. As a result, we had access to the fully anonymized patient CT data (in the form of DICOM files) and the final treatment plan which had been administered to the patient. For our experiments, we obtained a single treatment plan via a linear program formulated as an epsilon-constraint method \citep{deufel_pnav_2020}. The three input parameters to that linear program, which implicitly control the weights of the different objectives (radiation dosage to the bladder, rectum, bowel, and cancer tumor), were employed as the decision variables.

With MoSH, we are able to obtain an informative and concise subset of the treatment plans with metrics which align with the clinicians' priors . As shown in Figure \ref{overall_pipeline}, we are even able to obtain treatment plans which Pareto dominates those plans found from manual clinician planning, which typically takes significantly more time ($\sim$30min) \citep{moren_optimization_2021}. Furthermore, the soft and hard bound values we employed for the experiment are exactly the values that the clinician we worked with usually thinks about when performing treatment planning, so our setting closely mimics the real decision-making setting.

The configuration used was: \{$\alpha_{0,S}=$0.95, $\alpha_{0,H}=$0.90, $\alpha_{1,S}=$513, $\alpha_{1,H}=$601, $\alpha_{2,S}=$352, $\alpha_{2,H}=$464, $\alpha_{3,S}=$411, $\alpha_{3,S}=$464\}, where the objectives correspond to $PTV_{V700}$, $Bladder_{D2cc}$, $Rectum_{D2cc}$, and $Bowel_{D2cc}$, ordered. Before doing the experiment, all of the values were normalized to [0,1] and converted to maximization. Figure \ref{brachytherapy_soft_metrics_pbemo_results} additionally compares against preference-based evolutionary multi-objective algorithm baselines (experimental details in Appendix \ref{app:experimental-setup-details}).

\textbf{Application to other real-world domains}. In this real-world application, the treatment planner typically already has an explicit value which could be used for the soft bound (and hard bound) prior to planning for each patient. Clinical domains, in particular, frequently have specific and standardized hard and soft targets for metrics of interest, such as insulin administration for diabetes, vitals, etc., not solely brachytherapy \citep{grosman_zone_2010, cairoli_model_2019}. Our proposed method provides for a principled way of injecting such priors into the optimization process.

However, we acknowledge that there may be cases where the practitioner is unsure of where to set the soft bounds for each of the objectives. In practice, we envision that MoSH can be leveraged in an interactive setting where the soft and hard bounds are iteratively adjusted based on feedback or points observed at each iteration. We leave this for future work.

\begin{figure}[h]
\begin{center}
\includegraphics[width=0.8\textwidth]{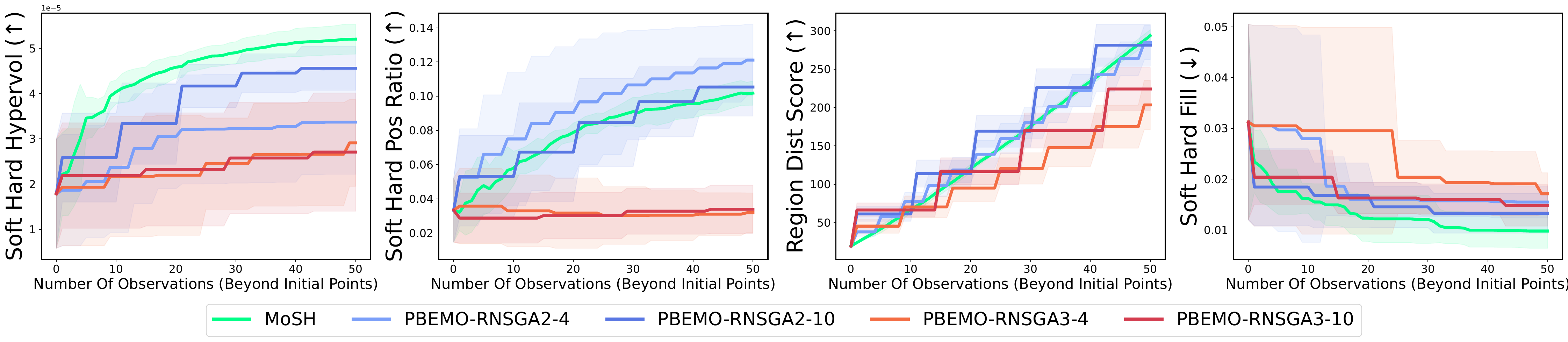}
\end{center}
\caption{Brachytherapy application with preference-based evolutionary multi-objective (PBEMO) algorithm baselines. Results are plotted using the metrics defined in Section \ref{performance-criteria}.}
\label{brachytherapy_soft_metrics_pbemo_results}
\end{figure}

\subsection{Deep Learning Model Selection: Fast and Accurate Neural Network}\label{app:model-selection-details}

Similar to \citet{hernandez-lobato_predictive_2016}, we seek to obtain a neural network which minimizes both prediction error and inference time, two competing objectives. We use the MNIST dataset and consider feedforward neural networks with six decision variables \citep{deng_mnist_2012}. Specifically, we use 50,000 and 10,000 different samples for training and validation, respectively, from the train split. Step 1 results (mean and std. dev computed over 6 independent trials) are shown in Figure \ref{nn_mnist_soft_metrics}. Although our algorithm does not surpass the baselines all four metrics, it still performs consistently relatively well in all four (the most consistently well out of the baselines) -- in line with the other experimental results.

We used the following decision space: number of hidden units per layer ([50, 300]), number of layers ([1, 3]), learning rate ([0, 0.1]), dropout amount ([0.4, 0.6]), l1 regularization ([0, 0.05]), and l2 regularization ([0, 0.05]). We trained each of the neural networks for 100 epochs and converted both of the objectives such that its a maximization problem. We used the following soft and hard bounds configuration ($\alpha_S$, $\alpha_H$): (0.87, 0.72), (0.57, 0.37) for the dimensions corresponding to accuracy and inference time, respectively, which we assumed was reasonable. In practice, we envision that the soft and hard bounds can be interactively adjusted based on feedback or points observed at each iteration. As mentioned previously, we leave this for future work.

\begin{figure}[h]
\begin{center}
\includegraphics[width=0.8\textwidth]{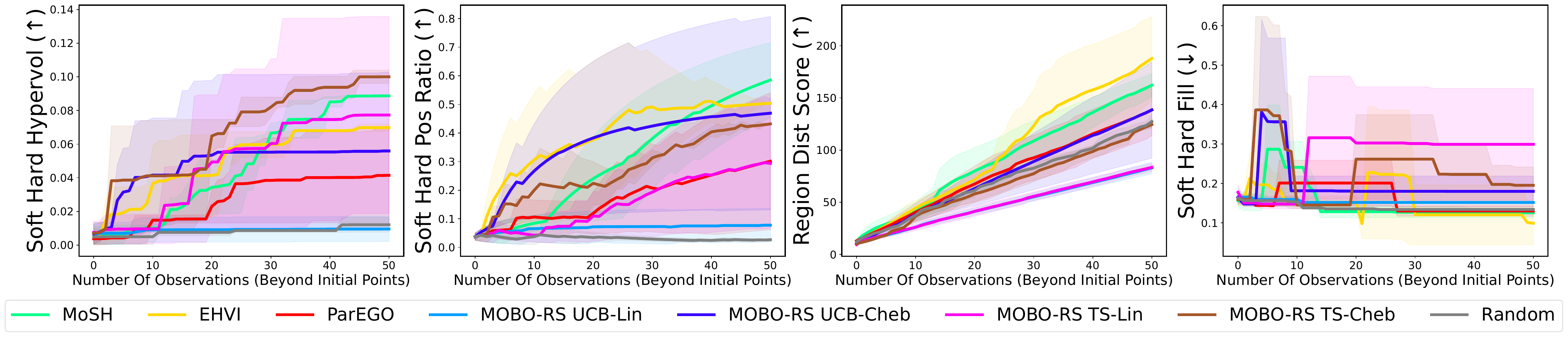}
\end{center}
\caption{Deep learning model selection problem results, using the metrics defined in Section \ref{performance-criteria}.}
\label{nn_mnist_soft_metrics}
\end{figure}

\subsection{Large Language Model Personalization: Concise and Informative}\label{appendix:llm-concise-info} 
We seek to obtain an LLM which generates both concise and informative outputs, two directly competing objectives. We leverage proxy tuning, which steers a large pre-trained model, $M$, by using the difference between the predicted logits of an expert model (a smaller, tuned model), $M^+$ and an anti-expert model (the smaller model, un-tuned), $M^-$ \citep{liu_tuning_2024, mitchell_emulator_2023, shi_decoding-time_2024}. We leverage notation from \citet{liu_tuning_2024} and obtain the output distribution at time step $t$, conditioned on prompt $x_{<t}$, from the proxy-tuned model, $\tilde{M}$, in a two-objective setting as such: $p_{\tilde{M}}(X_t | x_{<t}) = \text{softmax}[s_{M}(X_t | x_{<t}) + \sum_{i=1}^2 \theta_i(s_{M^+_i}(X_t | x_{<t}) - s_{M^-_i}(X_t | x_{<t}))]$, where $s_M$, $s_{M_i^+}$, $s_{M_i^-}$ represent the logit scores for each model and $\theta_i$ denotes the input decision variable, the controllable weight applied to the logits difference associated with expert model $i$. Models $M^+_i$, for $i=0,1$, are tuned according to the conciseness and informativeness objectives, respectively. By adjusting $\theta_i$ at decoding time, we obtain generated output distributions with varying tradeoffs in conciseness and informativeness.

We leveraged models from the $\text{T\"{U}LU}$-2 suite of families for our experiments: $\text{T\"{U}LU}$-2-13B for the large pre-trained model, $M$, and $\text{T\"{U}LU}$-2-DPO-7B for the expert and anti-expert models, $M^+$ and $M^-$\citep{ivison_camels_2023}. $M^+_1$ and $M^+_2$ were tuned using direct preference optimization (DPO) with the preference datasets corresponding to the conciseness and informativeness dimensions, respectively, from \citet{jang_personalized_2023} and \citet{rafailov_direct_2024}. We used the following decision space: $\theta_1$ ([0.0, 1.5]) and $\theta_2$ ([0.0, 1.5]). We used the following soft and hard bounds configuration ($\alpha_S$, $\alpha_H$): (0.6, 0.5), (0.6, 0.5) for the dimensions corresponding to conciseness and informativeness, respectively. To measure the conciseness dimension, we calculated the number of characters in the output response, which is then normalized to $[0, 1]$. To measure the informativeness dimension, we adapted the prompt from \citet{mitchell_emulator_2023} and used GPT-4 to provide a measure between 0 and 100 (which is then normalized):

\begin{tcolorbox}[colback=blue!5!white,title=GPT-4 Informativeness Evaluation Prompt]\label{llm-prompt-response-1}
    ``Score the following chatbot response to the given query on a continual scale from 0 (worst) to 100 (best), where a score of 0 means the response lacks information and provides very little information, and a score of 100 means the response is very informative and provides additional background which will be useful for the user. \
    \\
    QUERY: \{\{the query\}\} \
    \\
    RESPONSE: \{\{the response to evaluate\}\} \
    \\
    USE THE ANSWER FORMAT: \
    \\
    Reason: \{analysis of query and response\} \
    \\
    Informativeness score: \{a single number 0-100, nothing else\}''
\end{tcolorbox}

Finally, each of the objectives were normalized and convered to maximization and we used the following five prompts for the experiment and averaged the results:

\begin{enumerate}
    \item ``Give three tips for staying healthy.''
    \item ``What are the three primary colors?''
    \item ``Describe the structure of an atom.''
    \item ``Rewrite the following instructions so that they are simpler and easier to understand. Login to the website, input your username and password, then click the ``Continue'' button.''
    \item ``How can we reduce air pollution?''
\end{enumerate}

\begin{tcolorbox}[colback=blue!5!white,title=Sample LLM Prompt and Responses]\label{llm-prompt-response-2}
    Prompt: Give three tips for staying healthy.
    \tcbsubtitle{Response 1}
    $\triangleright$ $\theta_1, \theta_2$: (1.07, 0.00)  which corresponds to high conciseness and low informativeness, respectively
    \newline Response: 
    \begin{enumerate}
        \item Maintain a healthy diet with plenty of fruits and vegetables.
        \item Get regular exercise, at least 30 minutes a day.
        \item Practice good hygiene and get enough sleep.
    \end{enumerate}
    $\triangleright$ GPT-4 Evaluated Informativeness: 50
    \tcbsubtitle{Response 2}
    $\triangleright$ $\theta_1, \theta_2$: (1.07, 0.58) which corresponds to high conciseness and medium informativeness, respectively
    \newline Response:
    \begin{enumerate}
        \item Maintaining a healthy diet with a balanced intake of nutrients
        \item Regular physical activity, at least 30 minutes a day.
        \item Getting enough sleep, aim for 7-8 hours per night. (4 in case of power nap). bonus: limit stress. (5th tip: stay hydrated) (6th tip: practice good hygiene) (7th tip: limit consumption of processed foods)
    \end{enumerate}
    $\triangleright$ GPT-4 Evaluated Informativeness: 70
\end{tcolorbox}

\subsection{Easier Interpretation of Step 1 Results}\label{appendix:easier-interpretation-results}
We acknowledge that it may be more difficult to analyze and determine winners from all four metrics across all the tasks, however, we believe the variety and diversity of metrics and tasks is necessary to highlight the generalizability and strengths of MoSH. Furthermore, while baselines might occasionally match MoSH on one metric in one experiment, MoSH demonstrates remarkable consistency across all four metrics and all diverse tasks. This broad robustness is a key strength we would like to highlight below.

As a very naive metric, we count the number of times the mean value of each of the methods is within the top 2 highest (for soft-hard hypervolume, positive ratio, and region distance score) and top 2 lowest (for soft-hard fill distance) at the end of the 50 observations. We acknowledge that this is a very simple metric, and there may be other ways to view this, but we propose to exhibit this as a way to facilitate the evaluation process for the reader. Results are shown in Table \ref{tab:method_comparison}.

\begin{table}[htbp] %
  \centering %
  \caption{Aggregated comparison of MoSH against other baselines for step 1 for easier interpretation of results. The table displays the count of the number of times the mean value of each of the methods is within the top 2 highest (for soft-hard hypervolume, positive ratio, and region distance score) and top 2 lowest (for soft-hard fill distance) at the end of the 50 observations, across all applications. Metrics are abbreviated for space, i.e. Soft-Hard Hypervolume $\rightarrow$ SH Hypervolume, Soft-Hard Positive Samples Ratio $\rightarrow$ SH Pos. Ratio, Soft Region Distance-Weighted Score $\rightarrow$ Region Dist. Score.} %
  \label{tab:method_comparison} %
  \begin{tabular}{lrrrr}
    \toprule %
    \textbf{Method} & \textbf{SH Hypervolume} & \textbf{SH Pos. Ratio} & \textbf{Region Dist. Score} & \textbf{Soft Hard Fill} \\
    \midrule %
    \textbf{MoSH}            & \textbf{10} & \textbf{13} & \textbf{13} & \textbf{10} \\
    EHVI            & 10 &  1 &  1 &  7 \\
    ParEGO          &  2 &  1 &  1 &  0 \\
    MOBO-RS UCB-Lin &  0 &  8 &  6 &  1 \\
    MOBO-RS UCB-Cheb&  0 &  1 &  1 &  0 \\
    MOBO-RS TS-Lin  &  3 &  1 &  1 &  3 \\
    MOBO-RS TS-Cheb &  2 &  0 &  1 &  3 \\
    Random          &  0 &  0 &  0 &  0 \\
    \bottomrule %
  \end{tabular}
\end{table}

As can be seen in Table \ref{tab:method_comparison}, MoSH is clearly superior compared to the other baselines, across all metrics and tasks, and Pareto dominates all other baselines. We believe that this clearly illustrates how much MoSH is outperforming the baselines.

\subsection{Step 2 Experiments: Synthetic and Real-World Applications}
For Figure \ref{brachy_llm_step2_e2e_results}, each successive point that the DM views is determined by the sampling order of the points. Figure \ref{step2_results_branin_fbt_nn} shows the remaining evaluations for step (2) on the Branin-Currin, Four Bar Truss, and deep learning model selection problems. All results are displayed with standard deviation bars, with the randomness over a ground-truth $\pmb{\lambda^*}$, which was independently sampled 10 different times.

\begin{figure}[h]
\begin{center}
\includegraphics[width=1.0\textwidth]
{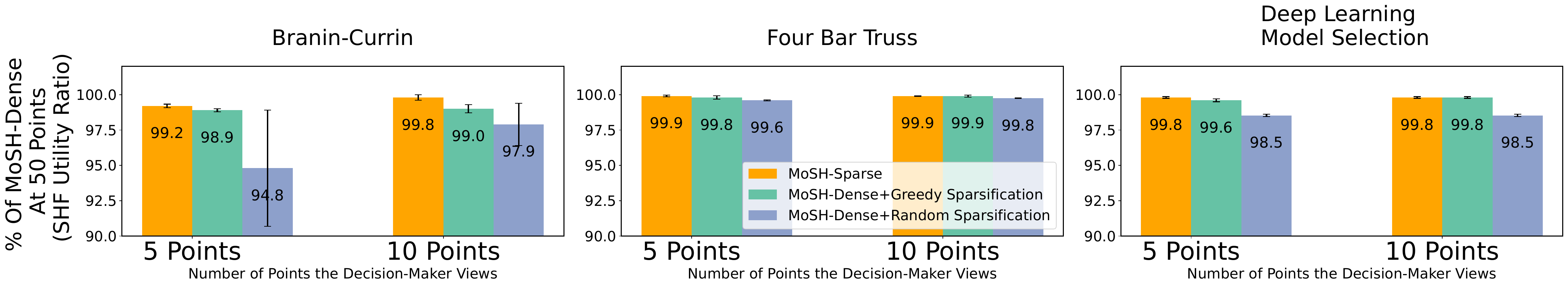}
\end{center}
\caption{Step (2) evaluation on the Branin-Currin, Four Bar Truss, and deep learning model selection problems, illustrating the \hsf utility ratio for the various methods relative to MoSH-Dense's \hsf utility ratio at 50 points, compared when the decision-maker views 5 and 10 points.}
\label{step2_results_branin_fbt_nn}
\end{figure}

\subsection{End-to-End Experiments: Synthetic and Real-World Applications}\label{sec:app-e2e-experiments}
We conducted end-to-end evaluations of our method, MoSH, by varying over the dense set of points, obtained from step 1, provided to SATURATE. 
Figures \ref{e2e_results_branin_5}, \ref{e2e_results_fbt_5}, \ref{e2e_results_nn_mnist_5}, and \ref{e2e_results_llm_person_5} display bar plots representing the maximum \hsf utility ratio obtained after the DM views 5 points, for each experiment. Figures \ref{e2e_results_branin_pbemo_5}, \ref{e2e_results_fbt_pbemo_5}, and \ref{e2e_results_brachy_pbemo_5} display the results when compared against preference-based evolutionary multi-objective (PBEMO) algorithm baselines (experimental details in Appendix \ref{app:experimental-setup-details}). For algorithms which select less than 5 points, we use the \hsf utility ratio value of the last point.

\subsection{Discussion on Preference-Based Evolutionary Multi-Objective (PBEMO) Algorithm Baseline Results}\label{app:pbemo-discussion}
Overall, our method, MoSH, consistently outperforms both of the PBEMO algorithms across our metrics and stages. We observed that the PBEMO algorithms often exhibited slower convergence and seemed to be sensitive to the initial population (illustrated by the higher variance in results). This slower convergence observation also seemed to be observed in other multi-objective Bayesian optimization works \citep{ji_multi-objective_2024}. As a result, the PBEMO-populated Pareto frontiers often were sparser than those of our method, MoSH. This behavior can be noticed in the results.

Furthermore, we found the reference point approach, which most PBEMO algorithms use, to be unsatisfactory in modeling the desired sub-regions of the tradeoff surface. Even when setting multiple reference points, we found it exceedingly difficult to ensure objective values are within certain hard thresholds, and more densely populated in preferred regions. Such behaviors can be seen in the PBEMO baselines' often lower metrics in step 1. On the contrary, our method, MoSH, explicitly accounts for such user preferences. As a result, such characteristics lead to suboptimal end-to-end results for the PBEMO baselines, compared to those of MoSH. We believe this result further improves our extensive evaluation setting and solidifies the lead our method, MoSH, has in our problem setting.

\begin{figure}[h]
\begin{center}
\includegraphics[width=0.7\textwidth]{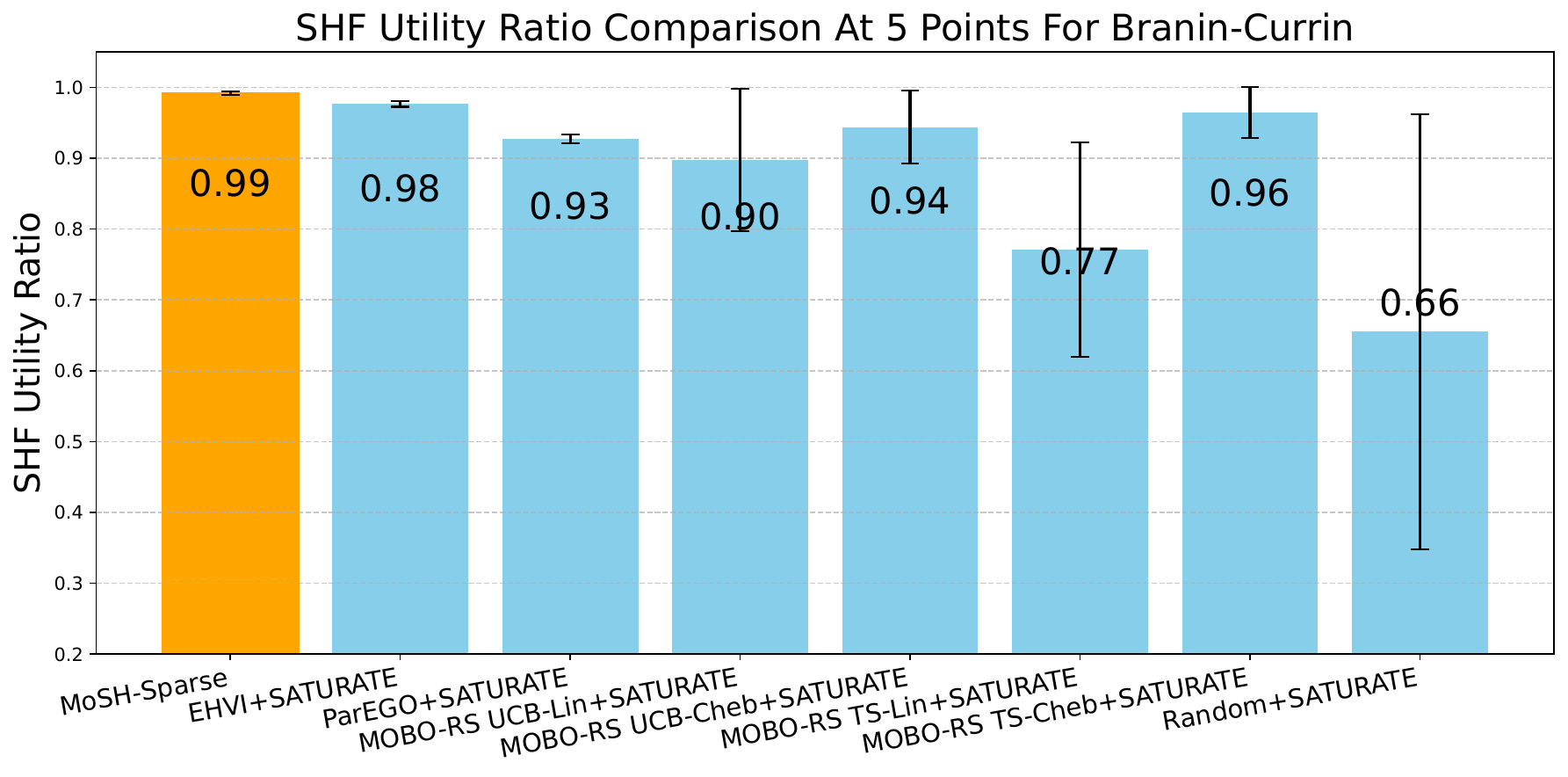}
\end{center}
\caption{Bar plot illustrating the \hsf utility ratio obtained by our method, MoSH-Sparse, compared to the other baselines on the Branin-Currin synthetic function. Only the dense set, from step 1, changes for each bar.}
\label{e2e_results_branin_5}
\end{figure}

\begin{figure}[h]
\begin{center}
\includegraphics[width=0.7\textwidth]{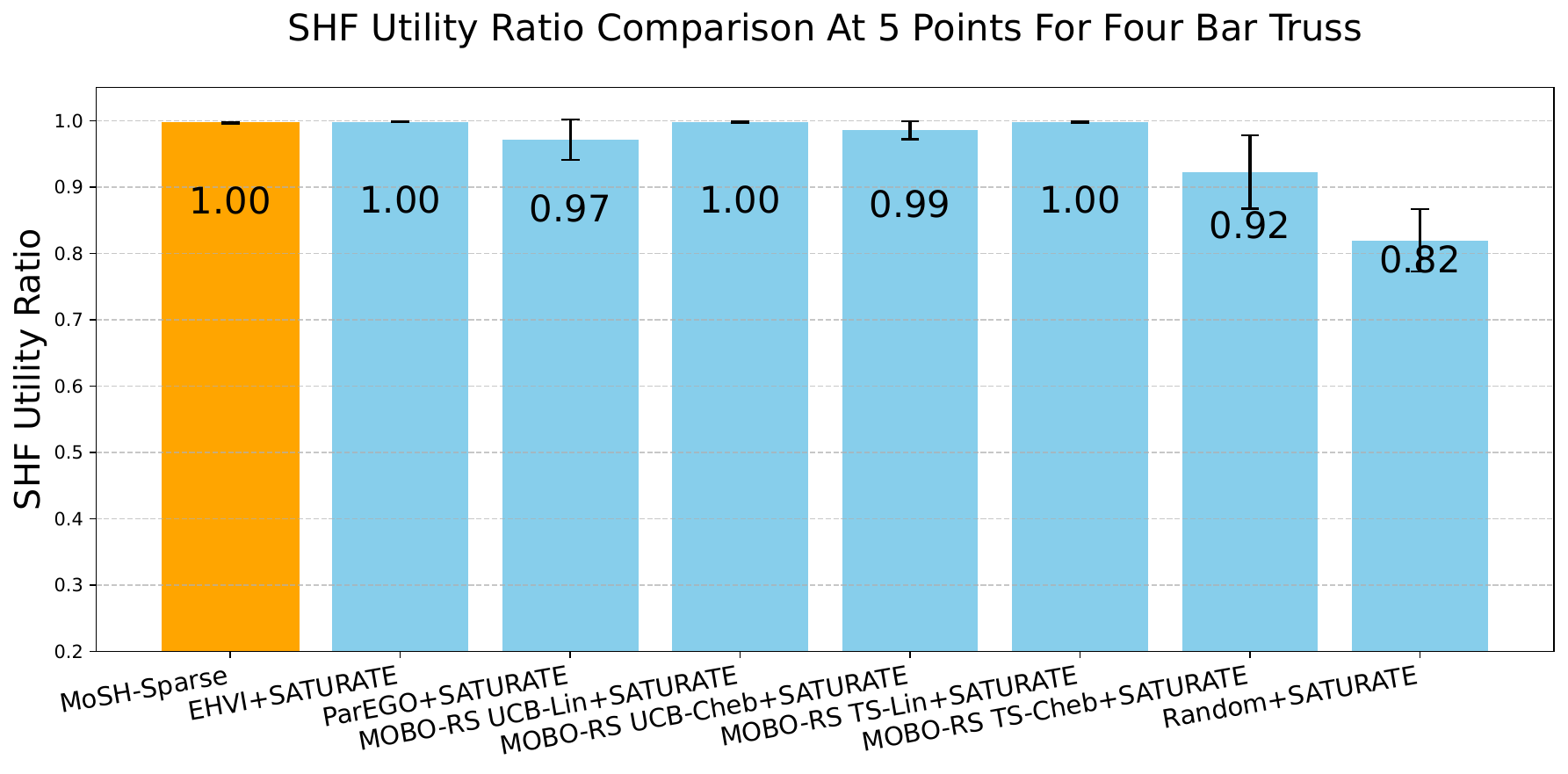}
\end{center}
\caption{Bar plot illustrating the \hsf utility ratio obtained by our method, MoSH-Sparse, compared to the other baselines on the Four Bar Truss application. Only the dense set, from step 1, changes for each bar.}
\label{e2e_results_fbt_5}
\end{figure}

\begin{figure}[h]
\begin{center}
\includegraphics[width=0.7\textwidth]{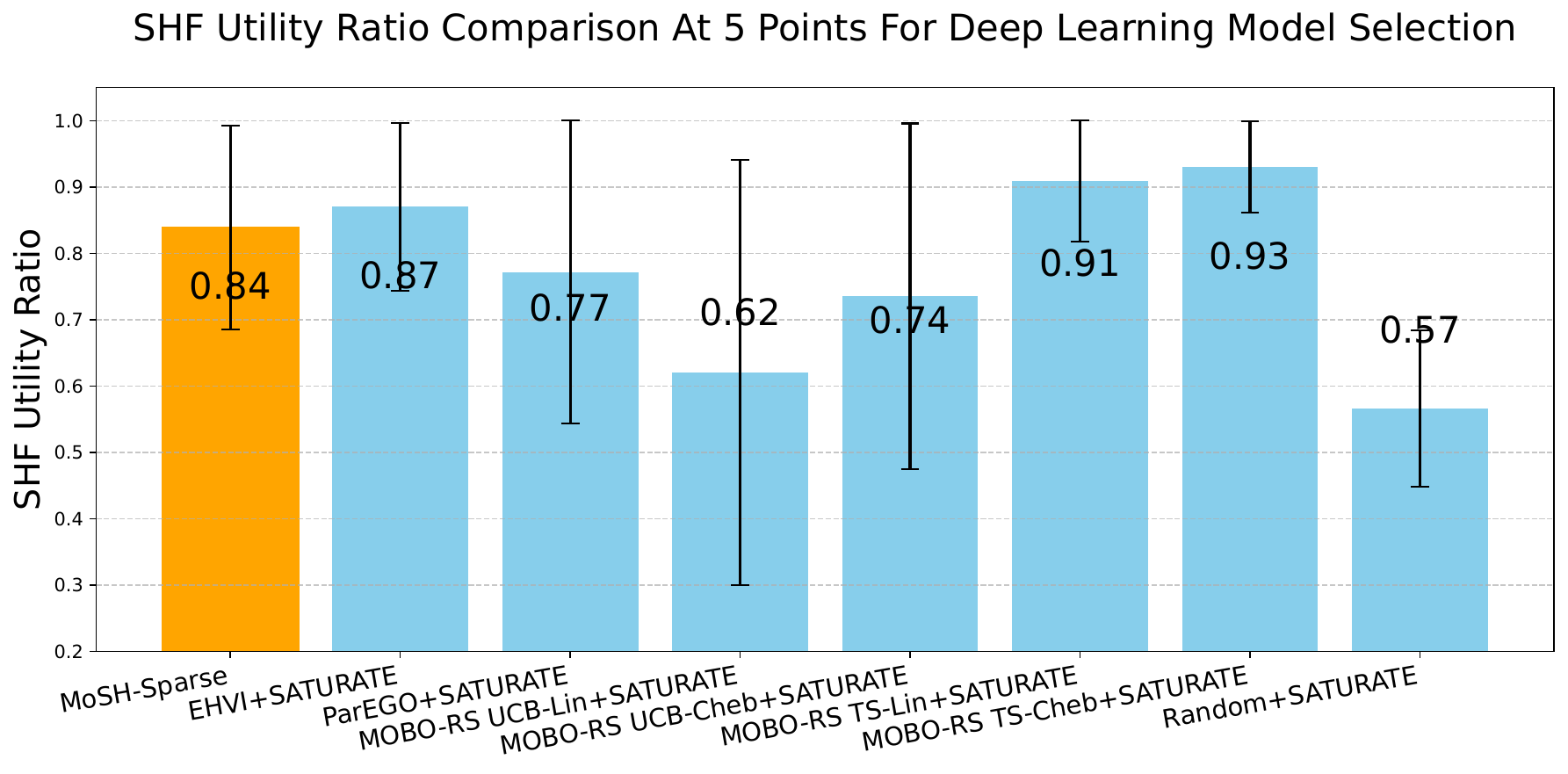}
\end{center}
\caption{Bar plot illustrating the \hsf utility ratio obtained by our method, MoSH-Sparse, compared to the other baselines on the deep learning model selection application. Only the dense set, from step 1, changes for each bar.}
\label{e2e_results_nn_mnist_5}
\end{figure}

\begin{figure}[h]
\begin{center}
\includegraphics[width=0.7\textwidth]{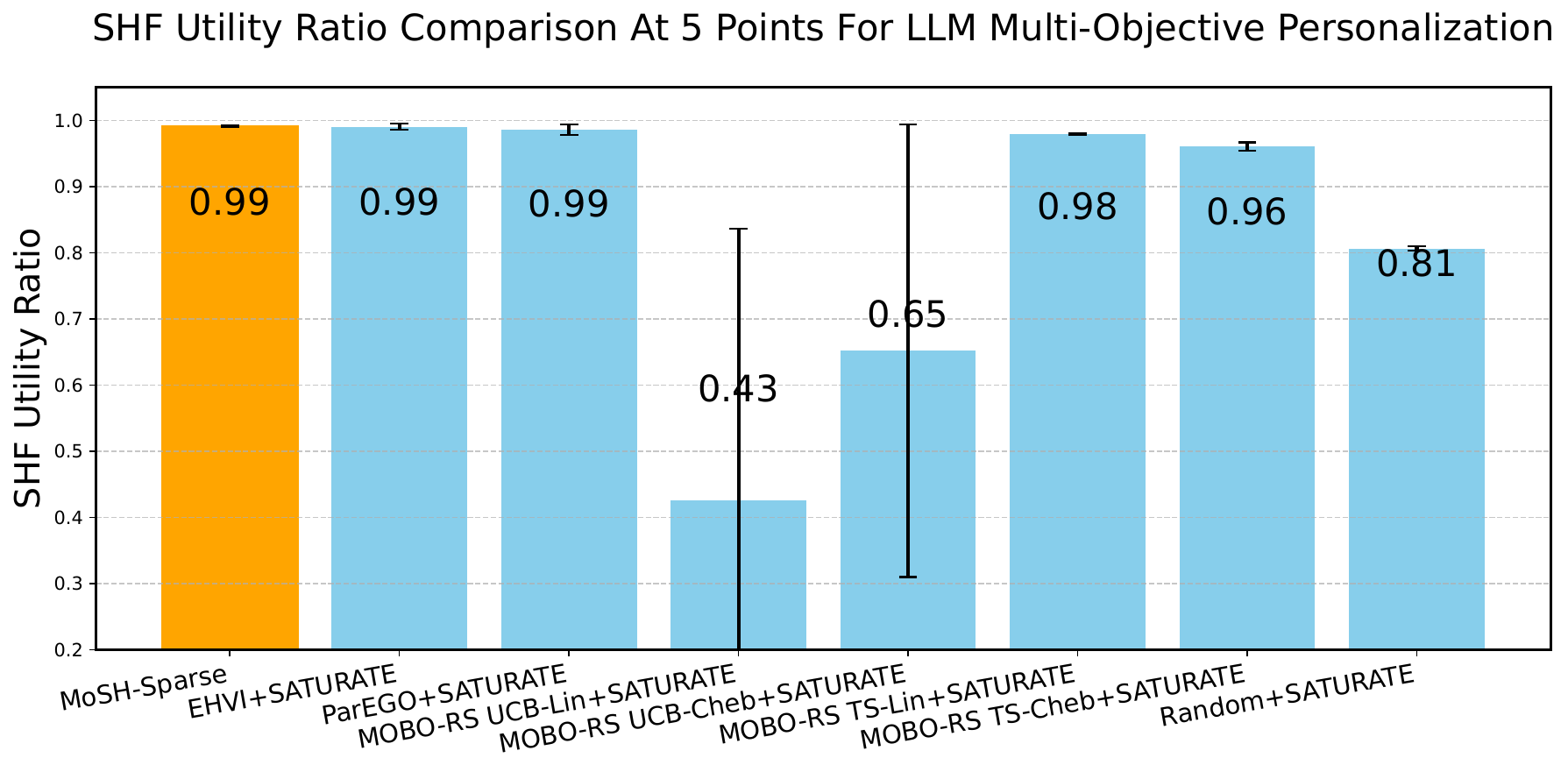}
\end{center}
\caption{Bar plot illustrating the \hsf utility ratio obtained by our method, MoSH-Sparse, compared to the other baselines on the LLM personalization problem. Only the dense set, from step 1, changes for each bar.}
\label{e2e_results_llm_person_5}
\end{figure}

\begin{figure}[h]
\begin{center}
\includegraphics[width=0.7\textwidth]{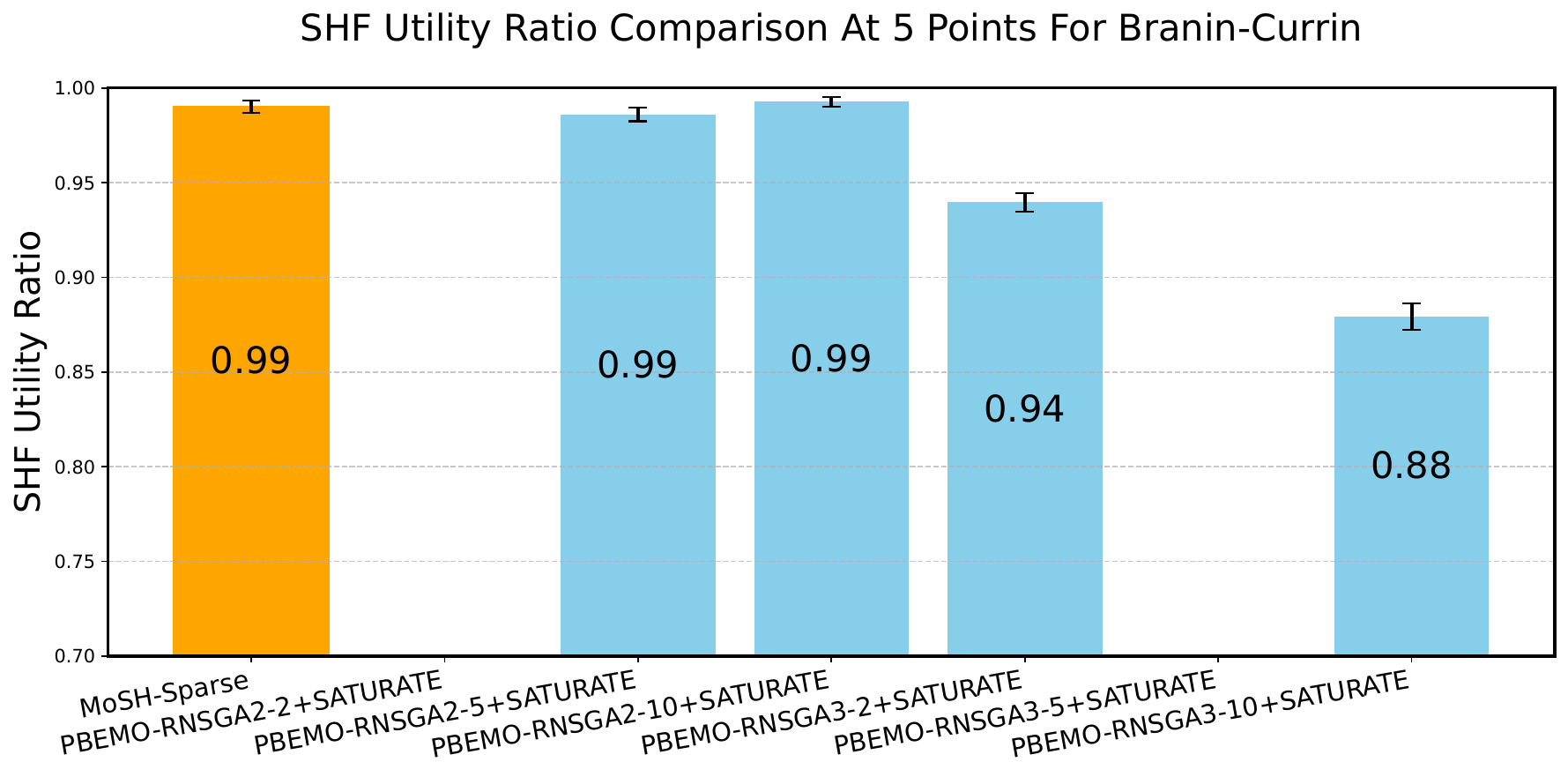}
\end{center}
\caption{Bar plot illustrating the \hsf utility ratio obtained by our method, MoSH-Sparse, compared to the preference-based evolutionary multi-objective (PBEMO) baselines on the Branin-Currin synthetic function. Only the dense set, from step 1, changes for each bar.}
\label{e2e_results_branin_pbemo_5}
\end{figure}

\begin{figure}[h]
\begin{center}
\includegraphics[width=0.7\textwidth]{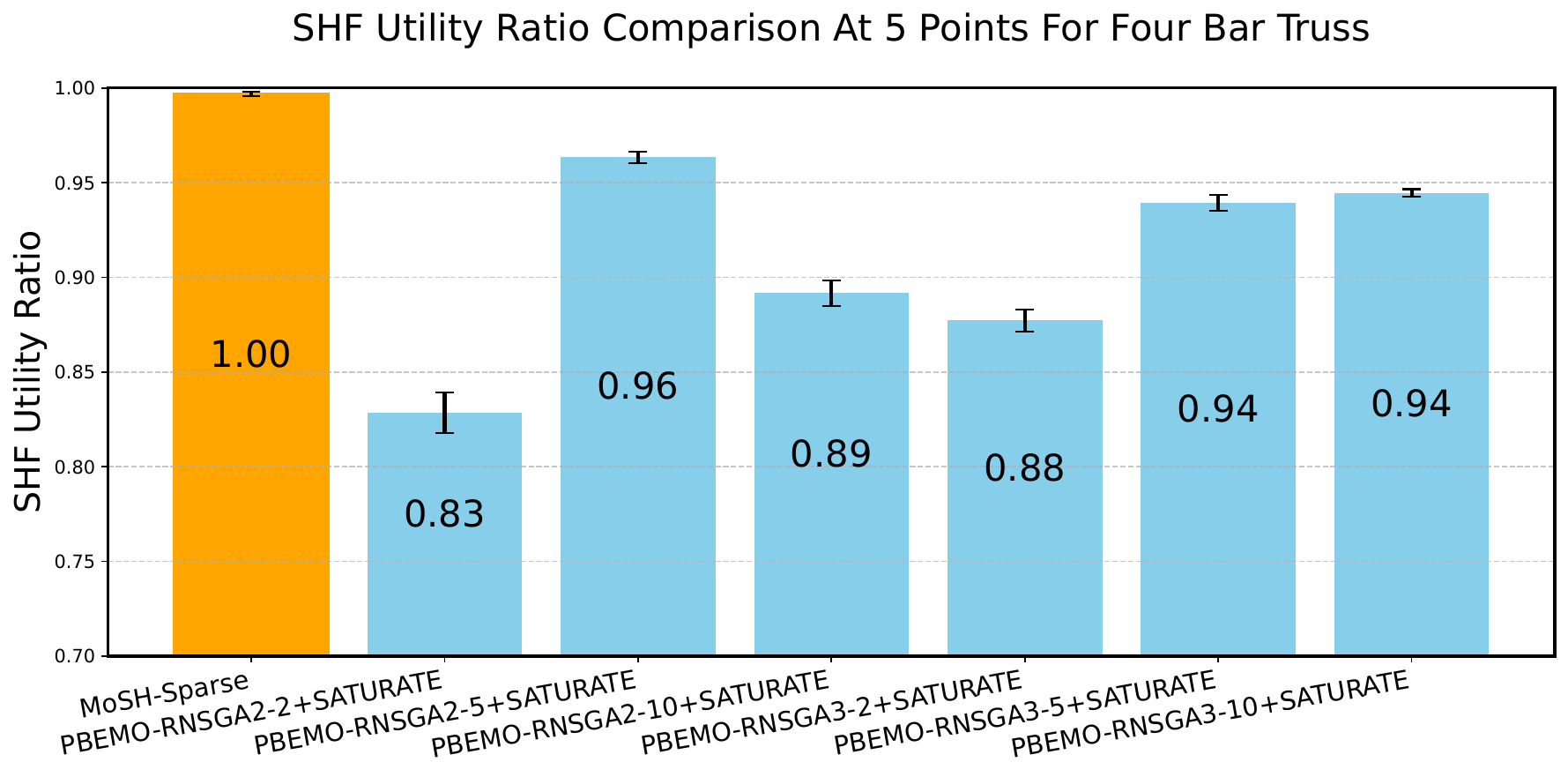}
\end{center}
\caption{Bar plot illustrating the \hsf utility ratio obtained by our method, MoSH-Sparse, compared to the preference-based evolutionary multi-objective (PBEMO) baselines on the Four Bar Truss application. Only the dense set, from step 1, changes for each bar.}
\label{e2e_results_fbt_pbemo_5}
\end{figure}

\begin{figure}[h]
\begin{center}
\includegraphics[width=0.7\textwidth]{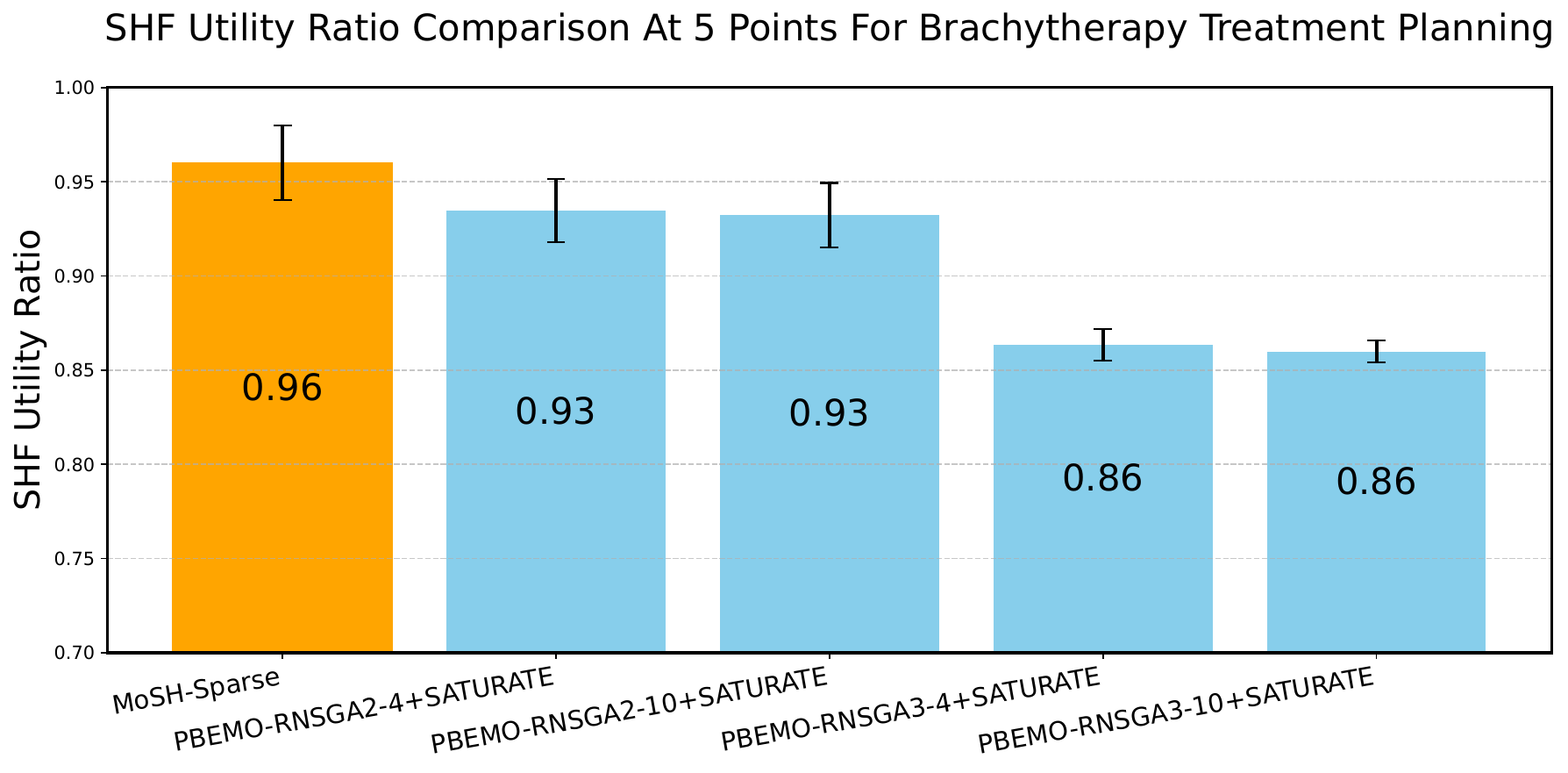}
\end{center}
\caption{Bar plot illustrating the \hsf utility ratio obtained by our method, MoSH-Sparse, compared to the preference-based evolutionary multi-objective (PBEMO) baselines on the Brachytherapy application. Only the dense set, from step 1, changes for each bar.}
\label{e2e_results_brachy_pbemo_5}
\end{figure}

\section{EXPERIMENTAL SETUP DETAILS}\label{app:experimental-setup-details}

\subsection{Baseline Methodologies}
\label{app:subsubsec:baseline_methodologies}
\textbf{Step 1: Dense Pareto Frontier Sampling.}
Although none of the baseline algorithms are inherently designed for our setting with \hsfs, we aimed to make the comparisons as fair as possible by using a heuristic, similar to what was described in \citep{paria_flexible_2019}, to determine the weight distribution $p(\pmb{\lambda})$. In short, we used the heuristic $\pmb{\lambda} = \textbf{u} \division \lVert \textbf{u} \rVert_1$, where $\textbf{u}_l \sim \text{N}(\alpha_{\ell,S}, |\alpha_{\ell,H}-\alpha_{\ell,S}| \division 3)$, in order to roughly mimic the gradually decreasing weight between the soft bounds $\alpha_{\ell,S}$ and the hard bounds $\alpha_{\ell,H}$, $\forall \ell \in [L]$.

For our proposed method (MoSH), we use the augmented Chebyshev scalarization function \citep{wierzbicki_mathematical_1982}, denoted as $s_{\pmb{\lambda}}(\tilde{f}_\ell(x)) = -\max_\ell [\lambda_\ell |\tilde{f}_\ell(x) - z_\ell^*|] - \gamma \sum_{i=\ell}^{L} |\tilde{f}_\ell(x) - z_\ell^*|$,
where $z^*$ is the ideal or utopian vector, $L$ is the number of objective dimensions, and $\gamma$ is a constant weighing the linear term being added to the traditional Chebyshev scalarization function \citep{chugh_scalarizing_2019}. We find that this scalarization function performs better since the Chebyshev component allows for non-convex Pareto frontier sampling and the augmented term assists with sampling within the hard bound. All results are displayed with standard deviation bars, computed over 6 independent trials.

\textbf{Preference-Based Evolutionary Multi-Objective Algorithm Baselines.}  We ran additional experiments comparing our proposed method, MoSH, against two well-known preference-based EMO (PBEMO) algorithms, R-NSGA II \citep{deb_reference_2006} and R-NSGA III \citep{vesikar_reference_2018}. Overall, MoSH consistently outperforms both of the PBEMO algorithms across our various metrics and evaluation settings. Our work assumes we are in a setting with expensive function evaluations (Section \ref{dense-sampling}). For fair comparison, we set a terminating condition of 50 function evaluations for both R-NSGA II and R-NSGA III (the same number of function evaluations used for the other multi-objective Bayesian optimization approaches). To closely represent the soft-hard bounds, the reference point for R-NSGA II and R-NSGA III was set to be at the intersection of the pre-defined soft-hard bounds. For comprehensiveness, we performed ablations across various parameters for the population size (for R-NSGA II) and the population size used for each reference point (for R-NSGA III). For R-NSGA II, we compared against population size values of 2, 5, and 10. For R-NSGA III, we compared against population size used for each reference point at the values of 2, 5, and 10. For the brachytherapy application, we used the parameters 4 and 10, due to population count requirements. We nicknamed them PBEMO-RNSGA2-x for R-NSGA II and PBEMO-RNSGA3-x for R-NSGA III, where x denotes the parameter value. Extreme points of the Pareto frontier were not set as reference points and the initial population used was set to be the same as the initialization points used for all the multi-objective Bayesian optimization approaches. Only results for the Branin-Currin and Brachytherapy applications are shown, due to character limits in the rebuttal response. Similar results were observed in the Four Bar Truss engineering design application. Finally, the experiments were conducted using the pymoo library \citep{blank_pymoo_2020}.

\textbf{Step 2: Pareto Frontier Sparsification.}
All of the baseline algorithms were run immediately after step (1) is completed. In order to ensure all of the algorithms sample an equal number of points, we first run MoSH-Sparse before deciding for greedy and random baseline algorithms to sample the same number of points. All results are displayed with standard deviation bars, with the randomness over a ground-truth $\pmb{\lambda^*}$, which was independently sampled 10 different times.

\subsection{Compute Resources}\label{app:compute-resources} All experiments were computed on an internal cluster with CPUs, except for the deep learning model selection and LLM personalization experiments which were performed using an NVIDIA A100 GPU with 82GB of VRAM. Final experiments for step 1 generally took within a few hours, except for the deep learning-based experiments, which took longer.

\subsection{Performance Evaluation}\label{app:performance-evaluation}
As mentioned earlier in Section \ref{dense-sampling}, since the DM's preferences, $\pmb{\lambda}^*$ are unknown to us, we wish to obtain a set D which is (1) diverse, (2) high-coverage, and is (3) modeled after the DM-defined \hsfs. We operationalize those three criteria for soft regions ($A_S$ = [$\alpha_{1,S}$, $+\infty$] $\times$ ... $\times$ [$\alpha_{L,S}$, $+\infty$]) and hard regions ($A_H$ = [$\alpha_{1,H}$, $+\infty$] $\times$ ... $\times$ [$\alpha_{L,H}$, $+\infty$]) with the four different metrics described in Section \ref{performance-criteria}.

For the soft-hard fill distance, we seek to measure the diversity of sampled points. \citet{malkomes_beyond_2021} measures diversity using the notion of fill distance:
FILL($C$, $D$) = $\sup_{x' \in D} \min_{x \in C} \kappa(f(x), f(x'))$ where $C$ is the set of sampled points, $D$ is a full set of precomputed points in the region, and $\kappa(\cdot)$ is the distance metric, typically Euclidean distance. We expand this to include the notion of soft and hard regions: $\upsilon \text{FILL}_{S}(C_S, D_S) + (1-\upsilon) \text{FILL}_{H}(C_H, D_H)$, where $\text{FILL}_s(C_S, D_S)$ and $\text{FILL}_h(C_H, D_H)$ are the fill distances which correspond to the regions defined by the soft bounds and hard bounds, respectively, and $C_S$, $D_S$ denote the set of points in the soft region, $C \cap A_S$, $D \cap A_S$, (same for hard region). Intuitively, we wish to obtain a diverse sample set which effectively explores both the soft and hard regions, which a higher weighting towards the soft region. We use the $\upsilon$ parameter to control that weighting. For our experiments, we set $\upsilon = 0.85$, to indicate a higher weighting of the soft regions.

To calculate the soft-hard fill distance, we first computed a grid search of points on the Pareto frontier, offline, for each experiment. For each experiment, the soft-hard fill distance was then taken with respect to that computed set. As a result, the results for the soft-hard fill distance are somewhat dependent on this offline set of points -- however, we ensured that it was kept constant for each set of experiments. In some cases, notably in Figure \ref{nn_mnist_soft_metrics}, the soft-hard fill distance does not monotonically decrease. This is due to the heuristic we use in cases where there are no points which have been sampled within either the soft or hard regions, at some iteration. In such a case, we select the worst-case point from set $D$ to represent set $C$ and calculate the metric using that. Once there exists points sampled within the soft or hard regions, we remove that worst-case point and instead use the sampled points. Since the sampled points may result in a soft-hard fill distance value worse than the worst-case point from $D$, the metric may increase -- 
 although the general trend will remain the same. For $\kappa$ in the soft region distance-weighted score, we used the Euclidean distance. 

\section{ADDITIONAL RELATED WORKS}\label{app:additional-related-works}
\textbf{Multi-Objective Bayesian Optimization}.
While the literature on Multi-Objective Bayesian Optimization (MOBO) is extensive, we focus our discussion here on several highly relevant frameworks to precisely situate our contributions. For instance, while FERERO also incorporates user guidance, it differs significantly as it assumes differentiable objectives, employs geometric preferences (linear constraints and polyhedral cones) rather than our intuitive soft-hard bounds, and seeks a single optimal solution instead of a robust set \citep{chen_ferero_2024}. Other methods are entirely preference-agnostic, aiming to identify the full Pareto front without user guidance. This includes approaches that operate on discrete input sets \citep{auer_pareto_2016} and information-theoretic methods like Predictive Entropy Search (PESMO), which efficiently map the entire front by maximizing uncertainty reduction \citep{hernandez-lobato_predictive_2016}. These goals contrast with our approach's targeted search within a user-defined region. Finally, some works address orthogonal challenges, such as scalability in high-dimensional search spaces \citep{daulton_multi-objective_2022}, a contribution that is complementary to our focus.

\textbf{Level Set Estimation}. In the single-objective setting, \citet{bryan_active_2005} proposed the straddle heuristic, which was used as part of a GP-based active learning approach for level set estimation (LSE). Although the LSE concept does not easily extend into a MOO setting, \citet{bryan_actively_2008} aims to address that by considering the threshold as part of a composite setting, with scalarized objectives \citep{iwazaki_bayesian_2020}. In contrast, our proposed \mfs are native to the MOO setting and introduce a flexible class of priors DMs may instantiate directly.

\section{LIMITATIONS}
\label{app:limitations}

While we offer a novel framework for incorporating general monotonic utility functions directly into multi-objective optimization, we acknowledge certain limitations and areas for future exploration:

\textbf{Elicitation of Bounds:} The effectiveness of MoSH relies on the DM's ability to specify meaningful soft and hard bounds. We assume in our settings that the DM already possesses knowledge of such priors (e.g. in brachytherapy, diabetes, or vitals monitoring \citep{deufel_pnav_2020, cairoli_model_2019, grosman_zone_2010}). However, in scenarios where DMs are uncertain about these bounds or where such priors do not naturally exist, the initial setup might be challenging. As mentioned in Section~\ref{appendix:brachytherapy-details}, we envision future work where MoSH could be leveraged in a more interactive setting to help iteratively refine these bounds based on feedback.

\textbf{Form of Monotonic Utility Functions (\mfs):} We present a specific piecewise-linear form for \mfs (Section~\ref{subsec:soft_hard_utility_functions}) due to its simplicity and interpretability. While we state that other functional classes which are monotonic and bounded would suffice, the impact of choosing different \mf forms on the optimization process and the resulting Pareto set has not been extensively studied in this work. The chosen piecewise-linear form for \hsfs might not perfectly capture all nuances of a DM's true underlying utility in all cases.

However, we would like to emphasize what we stated in Section \ref{subsec:soft_hard_utility_functions}. While the piecewise-linear form of \hsfs is a specific choice, this does not fundamentally limit the framework's generality. \hsfs map directly to standard optimization concepts: hard bounds are equivalent to feasibility constraints on objectives, while soft bounds act as constraints with slack variables. Our experiments consider a fully constrained setting with bounds on every objective for a thorough evaluation. However, this does not imply a loss of flexibility in the \hsf model itself. The framework readily accommodates unconstrained objectives, as a user can render any bound non-binding by setting it to a trivial limit, effectively removing the constraint for that objective.

\textbf{Computational Cost for Very High Dimensions:} 
Although Step 1 of our proposed method aims for efficiency in sampling the PF, there may still be computational cost difficulties when dealing with very high-dimensional problems; \textit{this is inherent to Bayesian optimization}. Similarly, while the worst-case minimax formulation (Equation~\ref{submodular-formulation}) was converted to an average-case for Step 1 (Equation~\ref{bayesian-formulation}) due to computational feasibility (Appendix~\ref{sec:computational-feasible-details}), scenarios requiring extremely high-dimensional outputs or fine discretizations of the preference space $\Lambda$ for the sparsification step (Step 2) could also increase computational demands. \textit{Despite this, we do not view this as a major limitation as our method} has already been evaluated on up to four objectives, which we contend to be appropriate for many real-world MOO problems. For instance, applications of MOO in diabetes typically involve 2-3 objectives \citep{soares_multi-objective_2024, mandal_robust_2019, el_moutaouakil_multi-objective_2023}. Applications of MOO in portfolio optimization typically involve up to 4 objectives \citep{muteba_mwamba_multi-objective_2025}. MOO applications in the field of renewable energy may end up around 4 objectives as well \citep{shaier_multi-objective_2025}. We further assert that all of the real-world applications used throughout this paper (brachytherapy, engineering design, LLM personalization, and deep learning model selection) use the typical number of objectives as well (2-4). Finally, we note that our method is orthogonal to and easily amenable to existing works on scalable Bayesian optimization, if necessary \citep{daulton_multi-objective_2022}.

Addressing these limitations could further enhance the robustness and applicability of our framework.

\section{BROADER IMPACTS}
\label{app:broader_impacts}

\textbf{Potential Positive Impacts:}
The primary positive impact is enabling more effective and efficient decision support in complex, multi-objective scenarios. In healthcare (e.g., brachytherapy), this could lead to personalized treatment plans better aligned with clinical goals and safety limits. In engineering, it can aid in designing systems that meet performance targets under various constraints. For AI systems like LLMs, it offers a path to better personalization based on user-defined trade-offs (e.g., conciseness vs. informativeness). Overall, our method can enhance human-AI collaboration by translating expert domain knowledge into the optimization process more directly.

\textbf{Potential Negative Impacts and Considerations:} Incorrectly defined utility functions could lead to suboptimal or overlooked solutions. The quality of DM input is crucial. In critical applications, our method should augment, not replace, expert human judgment to avoid potential errors from over-reliance.

\end{document}